\newcommand{\transp}[1]{#1^\top}
\newcommand{\R}{\mathbb{R}}
\newcommand{\N}{\mathbb{N}}
\newcommand{\bigo}{\mathcal{O}}
\DeclareMathOperator*{\argmin}{argmin}
\DeclareMathOperator{\prox}{prox}
\newcommand{\Pro}{\mathbb{P}}
\newcommand{\E}{\mathbb{E}}
\newcommand{\eps}{\varepsilon}
\newcommand{\Obs}[1]{\mathcal{O}_{#1}}
\newcommand{\gau}[1]{\mathcal{N}(0, #1)}
\newcommand{\proxi}[2]{\operatorname{prox}_{#1}{\left(#2\right)} }
\newcommand{\divalpha}[2]{D_\alpha\big(#1\,||\, #2\big)}
\newcommand{\operator}{R}
\newcommand{\bx}{u}
\newcommand{\expect}{\mathbb{E}}
\newcommand{\step}{k}
\newcommand{\blocks}{B}
\newcommand{\block}{b}
\newcommand{\cU}{\mathcal{U}}
\newcommand{\cD}{\mathcal{D}}
\DeclarePairedDelimiter\norm{\lVert}{\rVert}
\newtheorem{thm}{Theorem}
\newtheorem{defn}{Definition}
\newtheorem{prop}{Proposition}
\newtheorem{rmk}{Remark}
\newtheorem{assum}{Assumption}
\newtheorem*{rep@theorem}{\rep@title}
\newcommand{\newreptheorem}[2]{%
	\newenvironment{rep#1}[1]{%
		\def\rep@title{\textbf{#2} \ref{##1}}%
		\begin{rep@theorem}}%
		{\end{rep@theorem}}}
\theoremstyle{plain}
\newtheorem{theorem}{Theorem}[section]
\newtheorem{lemma}[theorem]{Lemma}
\newtheorem{corollary}[theorem]{Corollary}
\theoremstyle{definition}
\theoremstyle{remark}
\icmltitlerunning{From Noisy Fixed-Point Iterations to Private ADMM for
Centralized and Federated Learning}
\begin{document}

\twocolumn[
\icmltitle{From Noisy Fixed-Point Iterations to Private ADMM\\ for Centralized
and Federated Learning}



\icmlsetsymbol{equal}{*}

\begin{icmlauthorlist}
\icmlauthor{Edwige Cyffers}{inria}
\icmlauthor{Aurelien Bellet}{inria}
\icmlauthor{Debabrota Basu}{inria}
\end{icmlauthorlist}

\icmlaffiliation{inria}{Univ. Lille, Inria, CNRS, Centrale Lille, UMR 9189 - CRIStAL, F-59000 Lille}

\icmlcorrespondingauthor{Edwige Cyffers}{edwige.cyffers@inria.fr}

\icmlkeywords{differential privacy, ADMM, federated learning, decentralization}

\vskip 0.3in
]



\printAffiliationsAndNotice{}  

\begin{abstract}
    We study differentially private (DP) machine learning
    algorithms as instances of noisy fixed-point iterations, in order to
    derive privacy and utility results from this well-studied framework. We
    show that this new perspective recovers popular private gradient-based
    methods like DP-SGD and provides a principled way to design and
    analyze new
    private optimization algorithms in a flexible manner. Focusing on the
    widely-used Alternating Directions Method of Multipliers (ADMM) method, we
    use our general framework to derive novel private ADMM algorithms for
    centralized, federated and fully decentralized learning. For these three algorithms, we establish
    strong privacy guarantees leveraging privacy
    amplification by iteration and by subsampling.
    Finally, we provide utility guarantees using a unified analysis that exploits a recent
    linear convergence result for noisy fixed-point iterations.
\end{abstract}


\section{Introduction}


Controlling the risk of privacy leakage in machine learning training and outputs has become of paramount importance in
applications involving personal or confidential data.
This has drawn significant attention to the design of Empirical Risk
Minimization (ERM) algorithms that satisfy Differential Privacy (DP) 
\cite{chaudhuri2011Differentially}. DP is the standard for
measuring the privacy leakage of data-dependent
computations. The most popular approaches to private ERM are Differentially Private Stochastic Gradient Descent (DP-SGD) 
\citep{bassily2014Private,abadi2016Deep} and its variants 
\citep{talwar2015Nearly,wang2017Differentially,zhou2021Bypassing,2021arXiv211011688M,kairouz2021Nearly,pmlr-v139-kairouz21b}. DP-SGD is a first-order optimization algorithm, where the gradients of empirical risks are perturbed with Gaussian noise. Algorithms like DP-SGD can be naturally extended from the classic centralized setting, where a single trusted curator holds the raw data, to federated and decentralized scenarios that involve multiple agents who do not want to share their local data~\citep{dp_fed_sgd_user_level,dp_fedavg_user_level,Noble2022a,Cyffers2020PrivacyAB}.

In this work, we revisit private ERM from the perspective of 
\emph{fixed-point iterations} \cite{10.5555/2028633}, which compute fixed
points of a function by iteratively applying a non-expansive operator $T$.
Fixed point iterations are well-studied and widely applied in mathematical
optimization, automatic control, and signal processing.
They provide a unifying framework that encompasses many optimization
algorithms, from (proximal) gradient descent algorithms to the Alternating
Direction Method of Multipliers (ADMM), and come with a rich theory 
\cite{fixedpointDS}.
Specifically, we study a general \emph{noisy} fixed-point
iteration, where Gaussian noise is added to the operator $T$ at each step. We
also consider a (possibly randomized) block-coordinate version, where the
operator is applied only to a subset of coordinates. As particular cases of
our framework, we show that we can recover DP-SGD and a recent
coordinate-wise variant \cite{2021arXiv211011688M}. We then prove a utility
bound for the iterates of our general framework by exploiting recent linear
convergence results from the fixed-point literature 
\cite{combettes2019stochastic}.

\looseness=-1 With this general framework and results in place, we show that
we can design
and analyze new private algorithms for ERM in a principled manner. We focus on
ADMM-type algorithms, which are known for their effectiveness in centralized
and decentralized machine learning 
\citep{admmbook,Wei2012a,Wei2013a,iutzeler2013,Shi2014c,Vanhaesebrouck2017a,tavara2022federated,fl_admm}.
Based on a reformulation of
ERM as a consensus problem and the characterization of the ADMM iteration as a
Lions-Mercier operator on post-infimal composition \citep{Giselsson2016LineSF}, we derive
private ADMM algorithms for centralized, federated, and fully decentralized
learning. In contrast to previously proposed private ADMM algorithms that
build upon a duality interpretation and
require \emph{ad-hoc}
algorithmic modifications and customized theoretical analysis 
\citep{Huang2020DPADMMAD,7563366,Zhang2018ImprovingTP,Ding2020TowardsPD,Laplacian}, our
algorithms and utility guarantees follow directly from our analysis of our
general noisy fixed-point iteration.
In particular, we are the first to our knowledge to derive a general
convergence rate analysis of private ADMM that can be used for the
centralized, federated, and fully decentralized settings.
We prove strong DP guarantees for our private ADMM
algorithms by properly binding appropriate privacy amplification schemes
compatible with the three settings, such as privacy amplification by iteration
\cite{ampbyiteration} and by subsampling
\citep{amp_sub_mironov}, with a general sensitivity analysis of our
fixed-point formulation.
We believe our findings will serve as a generic and
interpretable recipe to analyze future private
optimization algorithms.



\section{Related Work}



The ERM framework is widely used to efficiently train machine learning models
with DP. Here, we briefly review the approaches based on privacy-preserving
optimization, which are closest to our work and popular in practice due to
their wide applicability.\footnote{Other techniques such as output and
objective perturbation have also been considered, see
e.g.~\citep{chaudhuri2011Differentially}.}

\paragraph{Private gradient-based methods.} Differentially Private Stochastic
Gradient Descent (DP-SGD) \citep{bassily2014Private,abadi2016Deep} and
its numerous variants 
\citep{talwar2015Nearly,wang2017Differentially,zhou2021Bypassing,2021arXiv211011688M,kairouz2021Nearly,pmlr-v139-kairouz21b} are
extensively studied and deployed for preserving DP while training ML models. Since these methods interact with data through the computation of gradients, DP is ensured by adding calibrated Gaussian
noise to the gradients.
These approaches naturally extend to federated learning 
\citep{kairouz_advances_2019},
where several users (clients) aim to collaboratively train a model without
revealing their local dataset (user-level DP). In particular, DP-FedSGD 
\citep{dp_fed_sgd_user_level}, DP-FedAvg \citep{dp_fedavg_user_level} and
DP-Scaffold \citep{Noble2022a} are federated extensions of DP-SGD that rely on
an (untrusted) server to aggregate the gradients or model updates from (a
subsample of) the users.
These algorithms provide a \emph{local} DP guarantee with respect to the
server (who observes individual user contributions), and a stronger 
\emph{central} DP guarantee with respect to a third party observing only the
final model.\footnote{The stronger central DP guarantees holds also w.r.t. the
server if secure aggregation is used \citep{Bonawitz2017a}.} In the
fully decentralized setting, the server is replaced by direct
user-to-user communications along the edges of a communication graph.
\citet{Cyffers2020PrivacyAB} and \citet{Cyffers2022b} recently showed that
fully decentralized variants of DP-SGD provide stronger privacy guarantees
than suggested by a local DP analysis. Their
results are based on the notion of network DP, a relaxation of local
DP capturing the fact that users only observe the information they
receive from their neighbors in the communication graph.


As we will show in Section~\ref{sec:fix}, our general noisy fixed-point
iteration
framework allows recovering these private gradient-based algorithms as a special
case, but also to derive novel private ADMM algorithms (in the centralized,
federated and fully decentralized settings) with privacy and utility
guarantees similar to their gradient-based counterparts.




\paragraph{Private ADMM.} \looseness=-1  Due to the flexibility and
effectiveness of ADMM for
centralized and decentralized machine learning
\citep{admmbook,Wei2012a,Shi2014c,Vanhaesebrouck2017a},
differentially private versions of ADMM have been studied for the centralized 
\citep{Xu2021DifferentiallyPA,Laplacian}, federated
\citep{Huang2020DPADMMAD,9188006,2022arXiv220209409R, hu2019learning}, and fully decentralized 
\citep{7563366,Zhang2018ImprovingTP,Ding2020TowardsPD} settings.
These existing private ADMM
algorithms are specifically crafted for one of the three settings, based on
ad-hoc algorithmic modifications and customized analysis that are
not extendable to the other settings.
For example, the previous fully decentralized private ADMM algorithms use at
least 3-4
privacy parameters and add noise from two different distributions 
\cite{7563366,Ding2020TowardsPD}, while the centralized private ADMM of
\citet{Xu2021DifferentiallyPA} uses one noise generating distribution.
Thus, it is very hard to find an
overarching generic structure in the previous literature. Our work simplifies
and unifies the design of private ADMM algorithms by developing a generic
framework: we provide a unified utility analysis, and the same
baseline privacy analysis based on sensitivity with a clear
parametrization by a single parameter, for the three settings (centralized,
federated and fully decentralized).
We achieve this thanks to our characterization of private ADMM algorithms as
noisy fixed-point iterations, and more specifically as noisy Lions-Mercier
operators on post-infimal composition \citep{Giselsson2016LineSF} rather than
on the dual functions.
In contrast, previous work on private ADMM mostly used a dual function
viewpoint, leading
to complex convergence analysis (sometimes with restrictive
assumptions) and privacy
guarantees that are difficult to interpret (and often limited to LDP).
We also note that, except for \citet{9188006}
who considered only the trusted server setting, we are the
first to achieve user-level DP for federated and fully decentralized ADMM.
As discussed below, we are also the first to show that ADMM can
benefit from privacy amplification to obtain better privacy-utility
trade-offs.

\paragraph{Privacy amplification by iteration.} The seminal work of 
\citet{ampbyiteration}, later extended by \citet{Altschuler2022}, showed that
iteratively
applying non-expansive updates can amplify privacy guarantees for data points
used in early stages. 
Although privacy amplification by
iteration is quite
general, to the best of our knowledge, it was successfully applied only to
DP-SGD. In this work, we show how to leverage it for the ADMM algorithms applied to the consensus-based problems in the fully decentralized setting.

More generally, \textit{our work stands out as we are not aware of any prior
work that considers the general perspective of noisy-fixed point iterations to
design and analyze differentially private optimization algorithms.}


\section{Background}
\label{sec:back}
In this section, we introduce the necessary background that will constitute
the basis of our contributions. We start by providing basic intuitions and
results about the fixed-point iterations framework. Then, we show how
ADMM fits into this framework. Finally, we introduce Differential
Privacy (DP) and the technical tools used in our privacy
analysis.

\subsection{Fixed-Point Iterations}

\looseness=-1 Let us consider the problem of finding a minimizer (or
generally, a stationary point) of a function $f:\cU\rightarrow
\mathbb{R}$, where $\cU\subseteq \R^p$. This problem reduces to
finding a point $u^*\in\cU$ such that $0\in\partial f(u^*)$, or
$\nabla f(u^*)=0$, when $f$
is differentiable. A generic approach to compute $u^*$ is to iteratively apply
an operator $T:\cU\rightarrow\cU$ such that the fixed points
of $T$, i.e., the points $u^*$ satisfying $T(u^*)=u^*$, coincide with the
stationary points of $f$. The iterative application of $T$ starting from an initial point $u_0\in\cU$ constitutes the \emph{fixed-point iteration} framework \cite{10.5555/2028633}:
\begin{equation}
\label{eq:fixed_point_basic}
u_{k+1} \triangleq T(u_k).
\end{equation}
We denote by $I$ the identity operator, i.e. $I(u) \triangleq u$. To analyze the
convergence of the sequence of iterates to a
fixed point of $T$, various assumptions on $T$ are considered.

\begin{defn}[Non-expansive, contractive, and $\lambda$-averaged operators]
\label{def:op}
Let $T:
\cU\rightarrow\cU$ and $\lambda \in (0,1)$. We say that:
\begin{itemize}[leftmargin=20pt,topsep=0pt,itemsep=0ex]
\item $T$ is \emph{non-expansive} if it is 1-Lipschitz, i.e., $\|T(u)-T
(u')\|\leq \|u - u'\|$ for all $u,u'\in\cU$.
\item $T$ is \emph{$\tau$-contractive} if it $\tau$-Lipschitz with $\tau<1$.
\item $T$ is \emph{$\lambda$-averaged} if there
    exists a non-expansive operator $R$ such that $T = \lambda R + (1 - \lambda) I$.
\end{itemize}
\end{defn}
Hereafter, we will focus on $\lambda$-averaged operators that correspond
to a barycenter between the identity mapping and a non-expansive operator.
This family encompasses many popular optimization algorithms. For instance,
when $f$ is convex and $\beta$-smooth, the operator $T=I -\gamma\nabla f$,
which corresponds to gradient descent, is $\gamma \beta/2$-averaged for
$\gamma\in(0,2/\beta)$. The proximal point, proximal gradient and ADMM
algorithms also belong to this family \cite{10.5555/2028633}.
By the Krasnosel'skii Mann theorem \cite{Byrne_2003}, the iterates of a
$\lambda$-averaged operator converge. Hence, \textit{formulating an optimization algorithm as the application of a $\lambda$-averaged operator allows us to reuse generic convergence results.}

\looseness=-1 The rich convergence theory of fixed point iterations goes well
beyond the
simple iteration \eqref{eq:fixed_point_basic}, see \cite{fixedpointDS} for
a recent overview. In this work, we leverage several extensions of this theory. First, we consider \emph{inexact updates}, where each application of $T$ is perturbed
by additive noise of bounded magnitude. Such noise can arise because the
operator is computed only approximately (for higher efficiency) or
due to the stochasticity in data-dependent computations. Another extension considers $T$
operating on a decomposable space $\cU =
\cU_1 \times \dots \times \cU_B$ with $B$ blocks, i.e.,
\begin{equation*}
   T(u) \triangleq (T_1(u), \dots, T_B(u)),\quad \text{where }
   T_b:\cU\rightarrow\cU_b, \forall b.
\end{equation*}
Here, it is possible to \emph{update each block
separately} in order to reduce per-iteration
computational costs and memory requirements, or to facilitate
decentralization \cite{walkmanMao2020}. This corresponds to replacing the
update in
\eqref{eq:fixed_point_basic} by:
\begin{equation}
    \forall b:~ u_{k+1,b} = u_{k,b} + \rho_{k,b} (T_b
    (u_k) - u_{k,b}),
\end{equation}
where $\rho_{k,b}$ is a Boolean (random) variable that encodes if block
$b$ is updated at iteration $k$.\footnote{Note that these block updates can be seen
as projections of the global update and thus are also non-expansive.}
Block-wise fixed-point iterations have first been introduced in
\cite{iutzeler2013,bianchi2016}. Various
strategies for selecting blocks are possible, such as cyclic updates or
random sampling schemes.
A generic convergence analysis of fixed-point iterations under both inexact
and block updates has been proposed by \citet{combettes2019stochastic}, which
we leverage in our analysis.

\subsection{ADMM as a Fixed-Point Iteration}
\label{sec:admm_fixed}

\looseness=-1 We now present how ADMM can be defined as a fixed-point
iteration.
ADMM minimizes the sum of two (possibly non-smooth) convex functions with
linear constraints between the
variables of these functions, which can be formulated as:
\begin{mini}
    {x, z}{\label{eq:genericADMM} f(x)+g(z)}{}{}
    \addConstraint{{Ax+Bz}}{=c}{}
\end{mini}

ADMM is often presented as an approximate version of the augmented Lagrangian
method, where the minimization of the sum in the primal is approximated by the
alternating minimizations on $x$ and $z$. However, this analogy is not
fruitful for theoretical analysis, as no proof of convergence only relies on
bounding this approximation error to analyze ADMM 
\cite{Eckstein2015UnderstandingTC}. A more useful characterization of ADMM is
to see it as a splitting algorithm \cite{Eckstein2015UnderstandingTC}, i.e.,
an approach to find a fixed point of the composition of two (proximal) operators by
performing operations that involve each operator separately. 

Specifically, ADMM can be defined through the
Lions-Mercier operator
\cite{lions}. 
Given two proximable functions $p_1$ and $p_2$ and parameter $\gamma>0$, the
Lions-Mercier operator is:
\begin{equation}
\label{eq:lions}
T_{\gamma p_1, \gamma p_2} = \lambda R_{\gamma p_{1}}R_{\gamma p_{2}} + (1-\lambda) I,
\end{equation}
where $R_{\gamma p_{1}}=2\prox_{\gamma p_1} - I$ and $R_{\gamma p_{2}}=2\prox_
{\gamma p_2} - I$. This operator is $\lambda$-averaged, and it can be shown
that if the set of the zeros of $\partial (f + g)$ is not empty, then the
fixed points of $T_{\gamma p_1, \gamma p_2}$ are exactly these
zeros \cite{admmbook}.

The fixed-point iteration \eqref{eq:fixed_point_basic}
with $T_{\gamma p_1, \gamma p_2}$ is known as the Douglas-Rachford algorithm,
and ADMM is equivalent to this algorithm applied to a reformulation of 
\eqref{eq:genericADMM} as $\min_u p_1(u) + p_2(u)$ with $p_1(u)=(-A
\triangleright f)(-u-c)$ and $p_2(u)=(-B \triangleright g)(u)$, where we
denote by $(M
\triangleright f)(y)=\inf \{f(x) \mid M x=y\}$ the infimal
postcomposition \cite{Giselsson2016LineSF}. For completeness, we show in
Appendix~\ref{app:non-private-admm} how to recover the
standard ADMM updates from this formulation.



\subsection{Differential Privacy}
\label{sec:dp}

In this work, we study fixed-point iterations with Differential Privacy 
(DP), which is the de-facto standard to quantify the
privacy leakage of algorithms \cite{dwork2013Algorithmic}.
DP relies on a notion of neighboring datasets.
We denote a private dataset of size $n$ by $\cD\triangleq(d_1,\dots,d_n)$. Two datasets $\cD, \cD'$ are neighboring if they
differ in at most one element $d_i\neq d_i'$, and we note this relation $\cD\sim \cD'$. We refer to
each $d_i$ as a \emph{data item}. Depending on the context
(centralized
versus federated), $d_i$ corresponds to a data point
(\emph{record-level} DP), or to the whole local dataset of a user
(\emph{user-level} DP).

Formally, we use Rényi Differential Privacy (RDP) 
\cite{DBLP:journals/corr/Mironov17} for its theoretical convenience and better
composition properties. We recall that any $(\alpha, \eps)$-RDP algorithm is
also $
(\eps+\ln
(1/\delta)/(\alpha-1),\delta)$-DP for any $0<\delta<1$ in the classic $
(\eps, \delta)$-DP definition. 

\begin{defn}[Rényi Differential Privacy (RDP)~\cite{renyifounda}]
    \label{def:RDP}
    Given $\alpha>1$ and $\eps>0$, an algorithm $A$ satisfies $(\alpha,
    \eps)$-Rényi Differential Privacy
    if for all pairs of neighboring datasets $\cD \sim \cD'$:
    \begin{equation}
    \label{eq:rdp}
    D_{\alpha} \left(A(\cD) || A(\cD') \right) \leq \eps\,,
    \end{equation}
    where for two random variables $X$ and $Y$, and $\divalpha{X}{Y}$ is the 
    \emph{Rényi divergence} between $X$ and $Y$, i.e.
    \begin{equation*}
        \divalpha{X}{Y}\triangleq \frac{1}{\alpha-1}\ln \int \left(\frac{\mu_{X}(z)}{\mu_Y(z)}  \right)^{\alpha} \mu_Y (z) dz \,,
    \end{equation*}
    with $\mu_X$ and $\mu_Y$ the respective densities of $X$ and $Y$.
  \end{defn}

A standard method to turn a data-dependent computation $h(\cD)\in\R^p$ into an
RDP algorithm is the Gaussian mechanism 
\cite{dwork2013Algorithmic,DBLP:journals/corr/Mironov17}. Gaussian mechanism is defined as
$A(\cD) \triangleq h
(\cD) + \eta$, where $\eta$ is a sample from $\gau{ \sigma^2\mathbb{I}_p}$. This mechanism satisfies $(\alpha, \alpha \Delta^2/2 \sigma^2)$-RDP for
any $\alpha >1$, where $\Delta\triangleq\sup_{\cD\sim \cD'}\norm{h(\cD)-h(\cD')}$ is the
sensitivity of $h$.

\looseness=-1 Our analysis builds upon privacy amplification results (we summarize
the
ones we use in \cref{app:amp}). This includes \emph{amplification by
subsampling} \citep{amp_sub_mironov}: if the above algorithm $A$ is executed on a random
fraction $q$ of $\cD$, then it satisfies
$(\alpha,\mathcal{O}(\alpha \Delta^2q^2/2 \sigma^2))$-RDP.
We also use \emph{privacy amplification by iteration}
\cite{ampbyiteration,Altschuler2022}. This technique
captures the fact that sequentially applying a non-expansive operator
improves privacy guarantees for the initial point as the number
of subsequent updates increase.
 \citet{ampbyiteration} and \citet{Altschuler2022} applied this result to
 ensure
differential privacy for SGD-type algorithms.
\textit{In this work, we use this result in tandem with the generic fixed-point iteration approach to develop and analyze the privacy of ADMM algorithms.}


\section{A General Noisy Fixed-Point Iteration for Privacy Preserving Machine Learning}
\label{sec:fix}

In this section, we formulate privacy preserving machine learning algorithms
as instances of a general noisy fixed-point iteration. We show that we can recover popular private gradient descent methods (such as DP-SGD) from this formulation, and we provide a generic utility analysis.

\subsection{Noisy Fixed-Point Iteration}

\begin{algorithm}[t]
    \caption{Private fixed point iteration}\label{algo:pfix}
    \DontPrintSemicolon
    \SetKwInput{KwInput}{Input}
    \KwInput{Non-expansive operator $R=(R_1,\dots,R_B)$
    over $1\leq B\leq p$
    blocks, initial point
    $u_0\in \cU$, step sizes $(\lambda_k)_{k\in \N}\in(0,1]$, active blocks $
    (\rho_k)_{k\in\N}\in\{0,1\}^B$,
    errors $(e_k)_{k\in \N}$, privacy noise variance
    $\sigma^2\geq 0$}
    \For{$k=0, 1, \dots$}{
    \For{$b=1, \dots, B$}{
        $u_{k+1,b} = u_{k,b} + \rho_{k,b} \lambda_k(R_b(u_k) + e_{k,b} + \eta_
        {k+1,b} - u_{k,b}) \text{ with } \eta_{k+1,b}\sim\gau{ \sigma^2
        \mathbb{I}_p}$\;
    }}
\end{algorithm}

Given a dataset $\cD= (d_1,\dots,d_n)$,
we aim to design differentially private algorithms to approximately solve
the ERM problems of the form:
\begin{mini}
    {u\in\cU\subseteq\mathbb{R}^p}{\label{eq:simple_erm}\frac{1}{n}\sum_
    {i=1}^n f(u;d_i) +
    r(u),}{}
    {}
\end{mini}
where $f(\cdot;d_i)$ is a (typically smooth) loss function computed on data
item $d_i$ and $r$ is a (typically non-smooth) regularizer. We denote $f
(u;\cD)\triangleq \frac{1}{n}\sum_{i=1}^n f(u;d_i)$.

\looseness=-1 To solve this problem, we propose to consider the
general noisy fixed-point iteration described in Algorithm~\ref{algo:pfix}. The core of each update applies a $\lambda_k$-averaged operator constructed from a non-expansive operator $R$,
and a Gaussian noise term added to ensure
differential privacy via the Gaussian mechanism (Section~\ref{sec:dp}).
Algorithm~\ref{algo:pfix} can use (possibly randomized) block-wise
updates ($B> 1$) and accommodate additional errors in operator evaluation (in
terms of $e_k$). 

Despite the generality of this scheme, we show in
Section~\ref{sec:generic_conv} that we can provide a unified utility
analysis under the only assumption that the operator $R$ is contractive.

\subsection{Recovering Private Gradient-based Methods from the Noisy
Fixed-Point Iteration}

Differentially Private Stochastic Gradient Descent (DP-SGD) 
\cite{bassily2014Private,abadi2016Deep} is the most widely used private
optimization algorithm. In Proposition~\ref{prop:dpsgd}, we show that
we recover DP-SGD from our general noisy fixed-point iteration
(Algorithm~\ref{algo:pfix}).

\begin{prop}[DP-SGD as a noisy fixed-point iteration]
\label{prop:dpsgd}
Assume that $f(\cdot;d)$ is $\beta$-smooth for any $d$, and let $r(u)=0$.
Consider the non-expansive operator
$R(u)\triangleq u - \frac{2}{\beta}\nabla f(u; \cD)$. Set $B=1$, $\lambda_k =
\lambda =\frac{\gamma\beta}{2}$ with $\gamma\in(0,\frac{2}{\beta})$, and $e_k
= \frac{2}{\beta}(\nabla f(u_k) - \nabla f(u_k;d_{i_k}))$ with $i_k\in
\{1,\dots,n\}$.\footnote{One can draw $i_k$ uniformly at random, or choose
it so as to do deterministic passes over $\cD$.}
Then, Algorithm~\ref{algo:pfix} recovers DP-SGD 
\cite{bassily2014Private,abadi2016Deep}, i.e., the update at step $k+1$ is $u_{k+1}=u_k -
\gamma (\nabla f(u_k;d_{i_k}) + \eta'_{k+1})$ with $\eta'_{k+1}\sim\gau{
\frac{\beta^2}{4}\sigma^2I}$.
The term $e_k$ corresponds to the
error due to evaluating the gradient on $d_{i_k}$ only, and
satisfies $\E[\|e_k\|]\leq 4L/\beta$ when $f(\cdot;d)$ is $L$-Lipschitz
for any $d$.\end{prop}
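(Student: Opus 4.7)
The proposition makes two claims: that Algorithm~\ref{algo:pfix} with the specified inputs produces the DP-SGD update, and that $\mathbb{E}[\|e_k\|]\le 4L/\beta$. Both are direct algebraic verifications, so I do not anticipate any substantial obstacle beyond careful bookkeeping of the factors of $\beta$ and $\gamma$; the only mildly subtle point is tracking how the variance of the injected Gaussian transforms under the $\lambda_k$-averaging, which is precisely what produces the apparent scaling $\beta^2\sigma^2/4$ in the DP-SGD form.

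First, I would substitute the chosen parameters into the update of Algorithm~\ref{algo:pfix}. With $B=1$, $\rho_k=1$ and $\lambda_k=\gamma\beta/2$, and writing $R(u_k)-u_k=-\tfrac{2}{\beta}\nabla f(u_k;\cD)$, the update becomes
\[
u_{k+1}=u_k+\tfrac{\gamma\beta}{2}\Big(-\tfrac{2}{\beta}\nabla f(u_k;\cD)+e_k+\eta_{k+1}\Big).
\]
Plugging in $e_k=\tfrac{2}{\beta}\bigl(\nabla f(u_k;\cD)-\nabla f(u_k;d_{i_k})\bigr)$ cancels the full-batch gradient, leaving $u_{k+1}=u_k-\gamma\nabla f(u_k;d_{i_k})+\tfrac{\gamma\beta}{2}\eta_{k+1}$. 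Since $\eta_{k+1}\sim\gau{\sigma^2 I}$, defining $\eta'_{k+1}\triangleq\tfrac{\beta}{2}\eta_{k+1}\sim\gau{\tfrac{\beta^2}{4}\sigma^2 I}$ yields exactly the DP-SGD form $u_{k+1}=u_k-\gamma\bigl(\nabla f(u_k;d_{i_k})+\eta'_{k+1}\bigr)$.

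For completeness I would also check that $R$ is admissible as an input of Algorithm~\ref{algo:pfix}, i.e.\ non-expansive: since $f(\cdot;\cD)$ is a convex combination of $\beta$-smooth (convex) losses and hence itself $\beta$-smooth, the Baillon--Haddad theorem ensures $(1/\beta)$-cocoercivity of $\nabla f(\cdot;\cD)$ and therefore non-expansiveness of $I-\tfrac{2}{\beta}\nabla f(\cdot;\cD)$ (the boundary case of the $\gamma\beta/2$-averaged family mentioned in Section~\ref{sec:back}).

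Finally, the bound on $e_k$ reduces to a triangle-inequality argument. The $L$-Lipschitzness of $f(\cdot;d)$ gives $\|\nabla f(u;d)\|\le L$ for every $u$ and $d$, and hence $\|\nabla f(u;\cD)\|\le L$ by applying Jensen/triangle to the empirical average. Combining these,
\[
\|e_k\|\le\tfrac{2}{\beta}\bigl(\|\nabla f(u_k;\cD)\|+\|\nabla f(u_k;d_{i_k})\|\bigr)\le \tfrac{4L}{\beta},
\]
which holds pointwise in the sample path and therefore in expectation, regardless of whether $i_k$ is drawn at random or picked deterministically.
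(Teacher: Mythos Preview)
Your proposal is correct and follows the direct algebraic verification that the paper leaves implicit (no explicit proof is given for this proposition): substituting the parameters into Algorithm~\ref{algo:pfix} and cancelling the full-batch gradient against $e_k$ yields the DP-SGD update, and the $4L/\beta$ bound on $e_k$ is exactly the triangle-inequality argument you give. The only cosmetic slip is a sign when identifying $\eta'_{k+1}$ (your substitution actually gives $u_{k+1}=u_k-\gamma\nabla f(u_k;d_{i_k})+\gamma\eta'_{k+1}$), but since the Gaussian is symmetric this does not affect the conclusion.
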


The privacy guarantees of DP-SGD can be derived: first, by observing
that $R(u_k) + e_k = u_k - \nabla f_{i_k}(u_k;d_{i_k})$ is itself
non-expansive, and
then applying privacy amplification by iteration, as done in 
\cite{ampbyiteration}. Alternatively, composition and privacy amplification by
subsampling can be used \cite{amp_sub_mironov}.

Similarly, we also recover Differentially Private Coordinate Descent (DP-CD) 
\citep{2021arXiv211011688M}.

\begin{prop}[DP-CD as a noisy fixed-point iteration]
Consider the same setting as in Proposition~\ref{prop:dpsgd},
but with $B > 1$ blocks (coordinates), and $R_b(u)\triangleq u_b -
\frac{2}{\beta}\nabla_b f
(u; \cD)$, where
$\nabla_b f$ is the $b$-th block of $\nabla f$, and $e_k=0$. Then
Algorithm~\ref{algo:pfix} reduces to the Differentially Private Coordinate Descent (DP-CD) algorithm \cite{2021arXiv211011688M}.
\end{prop}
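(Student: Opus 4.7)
The plan is to directly unfold the update rule of Algorithm~\ref{algo:pfix} under the specified choices and verify that the resulting iteration is identical to DP-CD. Since $R_b(u_k) - u_{k,b} = -\frac{2}{\beta}\nabla_b f(u_k;\cD)$ and $e_k = 0$, substituting $\lambda_k = \gamma\beta/2$ into the per-block update of Algorithm~\ref{algo:pfix} gives
\begin{equation*}
u_{k+1,b} = u_{k,b} - \rho_{k,b}\gamma\nabla_b f(u_k;\cD) + \rho_{k,b}\tfrac{\gamma\beta}{2}\eta_{k+1,b}.
\end{equation*}
Setting $\eta'_{k+1,b} \triangleq \tfrac{\beta}{2}\eta_{k+1,b} \sim \gau{\tfrac{\beta^2}{4}\sigma^2 I}$ rewrites this as $u_{k+1,b} = u_{k,b} - \rho_{k,b}\gamma\bigl(\nabla_b f(u_k;\cD) + \eta'_{k+1,b}\bigr)$, which is precisely the per-coordinate noisy gradient step defining DP-CD \citep{2021arXiv211011688M}. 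The indicator $\rho_{k,b}$ encodes which coordinates are active at step $k$, and the two natural instantiations (sampling a single block uniformly at random, or doing deterministic cyclic sweeps) recover the two variants of DP-CD analyzed in that work.

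Next I would verify that the operator $R=(R_1,\dots,R_B)$ meets the non-expansiveness hypothesis required by Algorithm~\ref{algo:pfix}. Since $f(\cdot;\cD)$ is convex and $\beta$-smooth (as an average of $\beta$-smooth convex losses), the Baillon-Haddad theorem yields that $I - \tfrac{2}{\beta}\nabla f(\cdot;\cD)$ is non-expansive; this is exactly $R$. Its block restrictions $R_b$ are coordinate projections of the global update, hence also non-expansive by the footnote in Section~\ref{sec:fix}, so the fixed-point formulation is legitimate and the identification above is well-posed.

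The main obstacle is purely notational: one has to track how the averaging parameter $\lambda_k = \gamma\beta/2$ combines with the operator rescaling $2/\beta$ inside $R$ to yield the net learning rate $\gamma$ and the per-coordinate noise scale $\gamma\beta\sigma/2$ that appear in DP-CD. Once this matching is carried out, no inequality, concentration estimate, or convergence argument is needed, because the claim is a direct reparametrization of one algorithmic description as another, with the block selection schedule carried by $(\rho_k)_{k\in\N}$ and the Gaussian noise by $(\eta_{k+1,b})$.
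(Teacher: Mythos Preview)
Your proposal is correct and is exactly the intended argument: the paper does not supply a separate proof for this proposition, treating it as an immediate consequence of unfolding Algorithm~\ref{algo:pfix} with the specified $R_b$, $\lambda_k=\gamma\beta/2$, and $e_k=0$, just as you do. Your additional check of non-expansiveness via Baillon--Haddad is a useful sanity step but not strictly required, since the paper already established it for the full-gradient operator in Proposition~\ref{prop:dpsgd} and the block case follows by the projection remark you cite.
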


Utility guarantees for DP-SGD and DP-CD can be obtained as instantiations of
the general convergence analysis of Algorithm~\ref{algo:pfix}, presented in Section~\ref{sec:generic_conv}.

\subsection{Utility Analysis}
\label{sec:generic_conv}

In this section, we derive a utility result for our general noisy fixed-point
iteration when the operator $R$ is \emph{contractive} (see
Definition~\ref{def:op}).
For gradient-based methods, this holds notably when $g$ is smooth and strongly
convex. This is also the case for ADMM \citep[see][and references therein for
contraction constants under various sufficient conditions]
{Giselsson2014,ryu_contraction}.
Our result, stated below, leverages a recent convergence result for inexact
and block-wise fixed-point iterations \citep{combettes2019stochastic}.
Obtaining explicit guarantees for the noisy setting requires a careful
analysis with appropriate upper and lower bounds on the feasible learning
rate, control of the impact of noise, and finally the characterization of the
contraction factor in the convergence rate.
The proof can be found in Appendix~\ref{app:convergence}.




\begin{thm}[Utility guarantees for noisy fixed-point
iterations]
\label{thm:utility_general}
Assume that $R$ is $\tau$-contractive with fixed point $u^*$. Let $P[\rho_
{k,b}=1]=q$ for some
$q\in(0,1]$. Then there exists a learning rate $\lambda_k=\lambda\in(0,1]$
such that the iterates of Algorithm~\ref{algo:pfix} satisfy:
    \begin{equation*}
    \begin{aligned}
        \expect\left(\left\|u_{\step+1}-u^*\right\|^{2} \mid \mathcal{F}_{0}\right) &\leqslant \left(1-\frac{q^2(1-\tau)}{8}\right)^{\step}D\\
        &+8\left(\frac{\sqrt{p}\sigma+ \zeta}{\sqrt{q}\left(1-\tau\right)}+\frac{p\sigma^2+ \zeta^2}{{q}^3(1-\tau)^3}\right)
    \end{aligned}
    \end{equation*}
        where $D\triangleq \|u_0 - u^*\|^2$, $p$ is the dimension of $u$, $\sigma^2>1-\tau$ is the variance of the
        added Gaussian noise, and $\E[\|e_k\|^2] \leq \zeta^2$ for some $\zeta
        \geq 0$.
\end{thm}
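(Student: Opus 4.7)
The plan is to cast Algorithm~\ref{algo:pfix} as an instance of the stochastic inexact block-coordinate Krasnosel'skii--Mann iteration studied by \citet{combettes2019stochastic}, with the Gaussian noise $\eta_{k+1}$ and the deterministic error $e_k$ merged into a single perturbation. Conditional on the past, $\eta_{k+1}$ is zero-mean, independent of everything else, and satisfies $\mathbb{E}[\|\eta_{k+1}\|^2]=p\sigma^2$; combined with the assumption $\mathbb{E}[\|e_k\|^2]\leq \zeta^2$ this gives $\mathbb{E}[\|e_k+\eta_{k+1}\|^2\mid \mathcal{F}_k]\leq 2(\zeta^2+p\sigma^2)$. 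The block activations with $\Pr[\rho_{k,b}=1]=q$ reduce the effective averaging parameter to $\lambda q$ per iteration in expectation, and the non-expansiveness of blockwise projections noted in Section~\ref{sec:back} ensures the block-sampled update remains within the non-expansive framework.

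Next I would derive the one-step recursion. Writing
\begin{equation*}
u_{k+1} - u^* = (u_k - u^*) + \lambda\,\rho_k \odot \bigl(R(u_k) - u_k + e_k + \eta_{k+1}\bigr),
\end{equation*}
squaring, and conditioning on $\mathcal{F}_k$, the inner product $\langle u_k-u^*, R(u_k)-u_k\rangle$ is controlled via $\tau$-contractivity (using $R(u^*)=u^*$) to yield a $-(1-\tau)\|u_k-u^*\|^2$ contribution. The zero-mean Gaussian noise contributes only its variance, while the bias of $e_k$ produces a cross term $\langle u_k-u^*, e_k\rangle$. After a Young's inequality split this gives a scalar recursion of the form
\begin{equation*}
\mathbb{E}\bigl[\|u_{k+1}-u^*\|^2\mid \mathcal{F}_k\bigr]\leq \bigl(1 - c\,\lambda q(1-\tau)\bigr)\|u_k-u^*\|^2 + C\lambda^2 q(p\sigma^2+\zeta^2) + C'\lambda q(\sqrt{p}\sigma + \zeta)\|u_k-u^*\|
\end{equation*}
for absolute constants $c,C,C'$. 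Calibrating $\lambda = \Theta\bigl(q(1-\tau)\bigr)$ turns the contraction coefficient into $1 - q^2(1-\tau)/8$, and unrolling the recurrence $a_{k+1}\leq (1-r)a_k + s$ into $a_k\leq (1-r)^k a_0 + s/r$ produces the geometric term $(1-q^2(1-\tau)/8)^{\step} D$ together with the two floor terms in the statement: the quadratic-in-noise floor $(p\sigma^2+\zeta^2)/(q^3(1-\tau)^3)$ coming from the variance, and the linear-in-noise floor $(\sqrt{p}\sigma+\zeta)/(\sqrt{q}(1-\tau))$ coming from the cross term after a Jensen step on $\mathbb{E}\|\eta_{k+1}\|\leq \sqrt{p}\sigma$.

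The main obstacle is handling the mixed cross term $\lambda q\langle u_k - u^*, e_k+\eta_{k+1}\rangle$: a direct Cauchy--Schwarz would lose a factor of $(1-\tau)$ and destroy linear convergence, so the Young's inequality weight has to be tuned so that the quadratic part is absorbed strictly inside the contraction coefficient while the remainder is pushed onto the lower-order floor. A secondary, more technical difficulty is to verify that the admissible range of $\lambda$ is nonempty in the block-sampled, noisy regime; this is precisely where the assumption $\sigma^2 > 1-\tau$ enters, as it certifies that the noise-dominated choice of $\lambda$ simultaneously controls the bias, the variance, and the contraction gap. Once both points are settled, the result follows by plugging the merged perturbation bound into the abstract convergence theorem of \citet{combettes2019stochastic} and tracking constants.
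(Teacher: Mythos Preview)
Your opening and closing sentences are exactly right: the paper does reduce Algorithm~\ref{algo:pfix} to the abstract result of \citet{combettes2019stochastic}, merging $e_k$ and $\eta_{k+1}$ into a single error with conditional second moment $\xi=(p\sigma^2+\zeta^2)/q$. But the middle of your sketch---a hand-rolled one-step recursion, a Young split of the cross term, and the calibration $\lambda=\Theta(q(1-\tau))$---is not what the paper does, and that calibration is actually inconsistent with the rate you claim. From your own recursion the contraction coefficient is $1-c\lambda q(1-\tau)$; plugging $\lambda=\Theta(q(1-\tau))$ gives $1-\Theta(q^2(1-\tau)^2)$, which is one power of $(1-\tau)$ worse than the theorem's $1-q^2(1-\tau)/8$. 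So that step would fail as written.

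The paper instead works entirely inside the Combettes bound, which already packages the contraction factor as
\[
\chi=1-\lambda(1-\mu)+\sqrt{\xi}\,\lambda(1-\lambda+\lambda\sqrt{\mu}),\qquad \mu=1-q(1-\tau),
\]
and the additive term as a geometric sum $\bar{\eta}$ in $\chi$. The whole proof is then an algebraic exercise: (i) reparametrize the noise as $\sigma_1=\sqrt{\xi}=(1+c)(1-\tau)$ with $c>0$ (this is exactly where $\sigma^2>1-\tau$ enters, to ensure $c>0$); (ii) determine the admissible range of $\lambda$ from $0<\chi<1$ and pick the explicit noise-dependent value $\lambda=\frac{1}{1-b}\bigl(1-\frac{q}{2(1+c)}\bigr)$ with $b=\sqrt{1-q(1-\tau)}$; (iii) show $\chi\le 1-q^2(1-\tau)/8$ and $\chi\ge q(1-\tau)/4$, and evaluate $\bar{\eta}$ in closed form. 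Note that this $\lambda$ is not small in $q$---for small $q$ it scales like $1/(q(1-\tau))$, the opposite of your $\Theta(q(1-\tau))$.

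Finally, the linear-in-noise floor $(\sqrt{p}\sigma+\zeta)/(\sqrt{q}(1-\tau))$ does not arise from a Jensen step on $\mathbb{E}\|\eta_{k+1}\|$ in a direct expansion (the Gaussian cross term vanishes in expectation). It is an artefact of the $\sqrt{\xi}$ factors baked into the Combettes $\chi$ and $\bar{\eta}$, which bound the error via Cauchy--Schwarz without exploiting zero mean. Your instinct to go through Combettes is therefore the right one; the gap is that you should apply their bound verbatim and do the algebra on $\chi,\bar{\eta}$, rather than re-derive a recursion and tune $\lambda$ heuristically.
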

\begin{rmk}
The assumption $\sigma^2>1-\tau$ is used for simplicity of presentation. More
generally, the
result holds true for $\sigma\sqrt{p}+\zeta > \sqrt{q}(1-\tau)$. In practice,
$\tau$ is always fairly close to $1$, hence this condition is not restrictive.
\end{rmk}

\begin{rmk}
    \cref{thm:utility_general} applies to DP-SGD on $\mu$-strongly
    convex and $\beta$-smooth objectives. Indeed, similar to \Cref
    {prop:dpsgd}, we can set $R(u) = u - \frac{2}
    {\beta + \mu}\nabla f(u; \cD)$ which is known to be $\frac{\beta-\mu}
    {\beta+\mu}$-contractive \cite{Ryu2015APO}. The first (non-stochastic)
    term recovers the classical $\bigo\big((\frac{\beta-\mu}
    {\beta+\mu})^k\big)$
    linear convergence rate of gradient descent. The
    second term, which captures the error due to stochasticity, is in
    $\bigo\big((p\sigma^2+ \zeta^2)/(1-\tau)^3\big)$. The $1/(1-\tau)^3$
    factor, which is not tight, is due to the particular choice of
    $\lambda$ we make in our analysis to get a closed-form rate in
    the general case.
\end{rmk}

Theorem~\ref{thm:utility_general} shows that our noisy fixed-point iteration
enjoys a \emph{linear convergence rate} up to an additive error term. The
linear convergence rate depends on the contraction
factor $\tau$ and the block activation probability $q$.
The additive error term is ruled by the noise scale $\sigma\sqrt{p}+\zeta$,
where $\sigma$ is due to the Gaussian noise added to ensure DP and $\zeta$
captures
some possible additional error. 
Under a given privacy constraint, running more iterations requires to increase
$\sigma$ (due to the composition rule of DP), yielding a classical
privacy-utility trade-off ruled by the
number of iterations. We investigate this in details for private ADMM algorithms
in Section~\ref{sec:admm}.




\section{Private ADMM Algorithms}
\label{sec:admm}

We now use our general noisy fixed-point iteration framework introduced in
Section~\ref{sec:fix} to derive and analyze private ADMM algorithms for the
centralized, federated and fully decentralized learning settings.


\subsection{Private ADMM for Consensus}

Given a dataset $\cD= (d_1,\dots,d_n)$, we aim to solve
an ERM problem of the form given in
\eqref{eq:simple_erm}. This problem can be equivalently formulated as
a consensus problem \citep{admmbook} that fits the general form 
\eqref{eq:genericADMM} handled by ADMM:
\begin{mini}
    {x\in\mathbb{R}^{np}, z\in\mathbb{R}^p}{
    \label{eq:admm_consensus}\frac{1}
    {n}\sum_{i=1}^n f
    (x_i; d_i) + r(z)}{}{}
    \addConstraint{x- I_{n(p\times p)}z}{= 0, \quad}{}
\end{mini}
where $x=(x_1,\dots,x_n)^\top$ is composed of $n$ blocks (one for
each data item) of size $p$ and $I_{n(p\times p)}\in\R^{np\times p}$ denotes
$n$ stacked identity matrices of size $p\times p$. 
For convenience, we will
sometimes denote $f_i(\cdot)\triangleq f(\cdot;d_i)$.

To privately solve problem \eqref{eq:admm_consensus}, we apply our
noisy fixed-point iteration (Algorithm~\ref{algo:pfix})
with the non-expansive operator $R_{\gamma p_{1}}R_{\gamma p_{2}}$
corresponding to ADMM (see Section~\ref{sec:admm_fixed}).
Introducing the auxiliary variable $u=(u_1,\dots,u_n)\in\mathbb{R}^{np}$
initialized to $u_0$ and exploiting the separable structure of the consensus
problem (see Appendix~\ref{app:algos} for details), we obtain the following 
(block-wise) updates:
\begin{align}
z_{k+1}&=\prox_{\gamma r}\big(\textstyle\frac{1}{n} \sum_{i=1}^n u_
{k,i}\big),\label{eq:main_z_update}\\
x_{k+1,i}&=\prox_{\gamma f_i}(2z_{k+1} - u_{k,i})\label{eq:main_x_update}\\
u_{k+1,i} &=u_{k,i}+2 \lambda\big(x_{k+1,i}-z_{k+1}+\textstyle\frac{1}{2}
\eta_{k+1,i}\big)\label{eq:main_u_update}.
\end{align}

From these updates and together with the possibility to randomly
sample the blocks in our general scheme, we can naturally obtain different
variants of ADMM for the centralized, federated and fully decentralized
learning. In the remainder of this section, we present these variants, the
corresponding trust models, and prove their privacy and utility guarantees.

\begin{rmk}[General private ADMM]
\looseness=-1 Our private ADMM algorithms for the consensus problem 
\eqref{eq:admm_consensus} are obtained as special cases of a private algorithm for
the more general problem \eqref{eq:genericADMM}. We
present this algorithm in
Appendix~\ref{app:general-private-admm}. In Appendix~\ref{app:privacy}, we
prove its privacy
guarantees via
a sensitivity analysis of the general update involving matrices $A$
and $B$, under the only hypothesis that $A$ is full rank. Then, we
instantiate these general results to obtain privacy guarantees for private
ADMM algorithms presented in this section.
\end{rmk}





\subsection{Centralized Private ADMM}

\looseness=-1 In the centralized setting, a trusted curator holds the dataset
$\cD$ and seeks to release a model trained on it with record-level DP
guarantees
\citep{chaudhuri2011Differentially}. Our private ADMM
algorithm for this centralized setting closely follows the updates
\eqref{eq:main_z_update}-\eqref{eq:main_u_update}. The version shown in
Algorithm~\ref{algo:conADMM} cycles over the $n$ blocks in a fixed order, but
thanks to the flexibility of our scheme we can also randomize the choice of
blocks at each iteration $k$, e.g., update a single random block or cycle over
a random perturbation of the blocks.
Note that at the end of the algorithm, we only release $z_K$, which is
sufficient for all practical purposes. Returning $x_K$
would violate differential privacy as its last update interacts with the data
through $\prox_{\gamma f_i}$ without subsequent random perturbation.
The privacy guarantees of the algorithm are as follows.

\begin{algorithm}[t]
    \SetKwComment{Comment}{$\triangleright$ }{}
    \DontPrintSemicolon
    \KwIn{initial vector $u^0$, step size $\lambda \in (0, 1]$, privacy noise variance $\sigma^2\geq0$, $\gamma >0$}
    \For{$k=0$ to $K-1$}{
        $\hat{z}_{k+1}=\frac{1}{n} \sum_{i=1}^{n}u_{k,i} $ \;
        $z_{k+1}=\prox_{\gamma r} \left(\hat{z}_{k+1}\right)$\;
        \For{$i=1$ to $n$}{ 
            $x_{k+1,i} = \prox_{\gamma f_i}(2z_{k+1} - u_{k,i})$\;
            $u_{k+1,i}=u_{k,i}+2 \lambda\big(x_{k+1,i}-z_{k+1}+\frac{1}
            {2} \eta_{k+1,i}\big)\text{ with } \eta_{k+1,i}\sim\gau{ \sigma^2
        \mathbb{I}_p}$ \;
        }
    }
    \Return{$z_{K}$}    
    \caption{Centralized private ADMM}
    \label{algo:conADMM}
\end{algorithm}



\begin{thm}[Privacy of centralized ADMM]
\label{thm:privacy_centralized_cons}
    Assume that the loss function $f(\cdot,d)$ is $L$-Lipschitz for any
    data record $d$ and consider record-level DP.
    Then Algorithm~\ref{algo:conADMM} satisfies $
    (\alpha,
    \frac{8\alpha KL^2\gamma^2}{\sigma^2n^2})$-RDP.
\end{thm}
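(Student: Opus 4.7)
The plan is to follow the standard Gaussian-mechanism plus RDP-composition recipe: (i) bound the $\ell_2$ sensitivity of each iteration's pre-noise update with respect to changing a single record; (ii) invoke the Gaussian mechanism stated in Section~\ref{sec:dp} to obtain a per-iteration RDP bound; (iii) sum the per-round costs via adaptive composition of RDP across the $K$ outer iterations; and (iv) close with post-processing, since only $z_K$ is released.

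The crux is step (i), the sensitivity computation. For a record-level neighboring pair $\cD \sim \cD'$ differing in index $i$, I would first note that because $f(\cdot,d)$ is $L$-Lipschitz for every $d$, the optimality condition $v - \prox_{\gamma f_i}(v) \in \gamma \partial f_i(\prox_{\gamma f_i}(v))$ yields $\|\prox_{\gamma f_i}(v) - \prox_{\gamma f_i'}(v)\| \leq 2\gamma L$ by a triangle inequality. Crucially, the consensus reformulation~\eqref{eq:admm_consensus} scales the sum of losses by $1/n$, so $p_1(u) = \tfrac{1}{n}\sum_i f_i(u_i)$ and the block-wise proximal step appearing in the Lions--Mercier operator is really $\prox_{(\gamma/n)f_i}$, whose sensitivity becomes $2\gamma L/n$. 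This is exactly the source of the extra $1/n^2$ factor in the theorem.

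Fixing a past iterate $u_k$ (legitimate under adaptive composition), the two neighboring executions produce identical $z_{k+1}$ and identical block-updates on every block except the one indexed by the differing record. The $\ell_2$ sensitivity of the pre-noise vector defined by~\eqref{eq:main_u_update} is therefore at most $4\lambda\gamma L/n$ (the factor of $2\lambda$ coming from the outer step), while the Gaussian perturbation $\lambda\eta_{k+1,i}$ has per-coordinate standard deviation $\lambda\sigma$. The Gaussian-mechanism RDP bound then gives per-iteration cost
\begin{equation*}
    \frac{\alpha\,(4\lambda\gamma L/n)^2}{2\lambda^2\sigma^2} \;=\; \frac{8\alpha\gamma^2 L^2}{n^2\sigma^2},
\end{equation*}
with $\lambda$ cancelling cleanly (explaining why the theorem's bound does not depend on $\lambda$). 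Summing over $K$ iterations via adaptive RDP composition yields $(\alpha,\, 8\alpha K L^2\gamma^2/(\sigma^2 n^2))$-RDP for the full noisy trajectory, and post-processing transfers the guarantee to the released $z_K$.

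I expect the sensitivity step to be the main technical obstacle: one must correctly account for the $1/n$ rescaling induced by the consensus formulation, keep the per-block argument clean in the presence of the averaging that produces $\hat{z}_{k+1}$, and verify that once one conditions on a fixed past no residual data-dependence from $u_k$ or $z_{k+1}$ enters the bound. These bookkeeping details are effectively handled by the general matrix-level sensitivity calculation referenced in Appendix~\ref{app:privacy}, of which the centralized consensus case considered here is the instance with $A = I$.
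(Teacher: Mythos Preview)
Your proposal is correct and follows essentially the same approach as the paper: bound the per-iteration sensitivity of the noisy $u$-update, apply the Gaussian mechanism, compose over $K$ rounds, and finish with post-processing to $z_K$. Your explanation of the crucial $1/n$ factor via the scaled proximal step $\prox_{(\gamma/n)f_i}$ is in fact a cleaner account than the paper's own ``special case'' derivation (which arrives at $\|T(u)-T'(u)\|\leq 4\lambda L\gamma/n$ somewhat opaquely); both are instances of the general sensitivity bound~\eqref{eq:sens_general_bound} with $A=I$, as you note.
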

\begin{proof}[Sketch of proof]
We bound the sensitivity of the ADMM operator by relying on the
structure of our updates, the strong convexity of proximal operators and
known bounds on the sensitivity of the $\argmin$ of strongly convex functions.
The result then follows from
composition.
\end{proof}

Theorem~\ref{thm:privacy_centralized_cons} shows that the privacy loss of
centralized ADMM has a similar form as that of state-of-the-art
private gradient-based approaches like DP-SGD.
The factor $K$ comes
from the composition over the $K$ iterations, while the $L^2\gamma^2/n^2$
factor comes from the sensitivity of the ADMM operator. Crucially, the $1/n^2$
term allows for good utility when the number of data points is large
enough. We also see that, similar to output perturbation 
\citep{chaudhuri2011Differentially}, the strong convexity parameter $1/\gamma$
of the proximal updates can be used to reduce the sensitivity.

By combining Theorem~\ref{thm:privacy_centralized_cons} and our generic
utility analysis (Theorem~\ref{thm:utility_general} with $q=1$), we obtain the
following privacy-utility trade-off.

\begin{corollary}[Privacy-utility trade-off of centralized ADMM]\label{lem:priv_util_central}
	Under the assumptions and notations of Theorem~\ref{thm:utility_general}
    and~\ref{thm:privacy_centralized_cons}, setting $K$ appropriately,
    Algorithm~\ref{algo:conADMM} achieves\footnote{$\widetilde{\bigo}$ ignores all the logarithmic terms.}
	\begin{equation*}
		\expect\left(\left\|u_{K}-u^*\right\|^{2}\right)  = 
        \widetilde{\bigo}\bigg(\frac{\sqrt{p\alpha}L \gamma}{
        \sqrt{\varepsilon}n\left(1-\tau\right)} +\frac{p\alpha L^2 \gamma^2}
        {\varepsilon n^2 \left(1-\tau\right)^3}\bigg).
	\end{equation*}
\end{corollary}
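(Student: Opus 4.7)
The plan is to combine the privacy guarantee of Theorem~\ref{thm:privacy_centralized_cons} with the utility guarantee of Theorem~\ref{thm:utility_general}, and then optimize the number of iterations $K$. Because Algorithm~\ref{algo:conADMM} cycles deterministically over every block at each outer iteration and evaluates the proximal operators exactly, the instantiation of Algorithm~\ref{algo:pfix} has $q=1$ and $e_k=0$, so we can apply Theorem~\ref{thm:utility_general} with $\zeta=0$.

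First, I would solve the privacy constraint for $\sigma$. Theorem~\ref{thm:privacy_centralized_cons} states that Algorithm~\ref{algo:conADMM} is $(\alpha,\tfrac{8\alpha K L^2\gamma^2}{\sigma^2 n^2})$-RDP, so to enforce $(\alpha,\varepsilon)$-RDP I would set
\begin{equation*}
\sigma^2 \;=\; \frac{8\alpha K L^2 \gamma^2}{\varepsilon n^2}.
\end{equation*}
Substituting this into the bound of Theorem~\ref{thm:utility_general} (with $q=1$, $\zeta=0$) gives
\begin{equation*}
\expect\!\left(\|u_{K}-u^*\|^2\right) \;\leq\; \Bigl(1-\tfrac{1-\tau}{8}\Bigr)^{K-1}\! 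D \;+\; \frac{c_1\sqrt{p\alpha K}\,L\gamma}{\sqrt{\varepsilon}\,n(1-\tau)} \;+\; \frac{c_2\,p\alpha K L^2\gamma^2}{\varepsilon n^2(1-\tau)^3},
\end{equation*}
for absolute constants $c_1,c_2>0$.

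The next step is to choose $K$ to balance the geometric decay against the variance terms. Using $(1-(1-\tau)/8)^K \le e^{-K(1-\tau)/8}$, it suffices to take
\begin{equation*}
K \;=\; \Bigl\lceil \tfrac{8}{1-\tau}\,\log\!\bigl(D\,\varepsilon n^2/(pL^2\gamma^2)\bigr)\Bigr\rceil,
\end{equation*}
so that the initial-error term is at most of the same order as the noise term. With this choice, $K$ depends only logarithmically on the problem parameters; hence the factors $\sqrt{K}$ and $K$ appearing in the two noise terms are absorbed into $\widetilde{\bigo}$, yielding the announced bound
\begin{equation*}
\expect\!\left(\|u_{K}-u^*\|^2\right) \;=\; \widetilde{\bigo}\!\left(\frac{\sqrt{p\alpha}\,L\gamma}{\sqrt{\varepsilon}\,n(1-\tau)} + \frac{p\alpha L^2\gamma^2}{\varepsilon n^2(1-\tau)^3}\right).
\end{equation*}

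The main subtlety is the circular dependence between $K$ and $\sigma$: increasing $K$ improves the optimization term but forces $\sigma^2$ to scale linearly in $K$ (by composition), which degrades the noise terms. Resolving this cleanly relies on the \emph{linear} convergence rate provided by Theorem~\ref{thm:utility_general}: because the decay is geometric in $K$, a merely logarithmic $K$ suffices to kill the initial-error term, keeping the induced $\sqrt{K}$ and $K$ factors polylogarithmic and thus invisible inside $\widetilde{\bigo}$. I would verify that the chosen $K$ is consistent with the regime $\sigma^2>1-\tau$ required by Theorem~\ref{thm:utility_general} (or use the relaxed version of the assumption discussed in the remark following that theorem), which holds in the interesting privacy regime where $\varepsilon$ is not excessively large.
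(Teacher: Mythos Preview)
Your proposal is correct and follows essentially the same approach as the paper: instantiate Theorem~\ref{thm:utility_general} with $q=1$, $\zeta=0$, substitute $\sigma^2=\tfrac{8\alpha K L^2\gamma^2}{\varepsilon n^2}$ from Theorem~\ref{thm:privacy_centralized_cons}, and then choose $K$ logarithmically to balance the geometrically decaying term against the noise terms. The paper's proof is the same argument, with the minor cosmetic difference that it writes the contraction base as $\tfrac{7+\tau}{8}$ and phrases the choice of $K$ as $K=\bigo(\log(\cdot))$ rather than giving your explicit $\tfrac{8}{1-\tau}\log(\cdot)$ form; your added remarks on the circular $K$--$\sigma$ dependence and the $\sigma^2>1-\tau$ condition are nice clarifications that the paper does not spell out.
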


\subsection{Federated Private ADMM}

We now switch to the Federated Learning (FL) setting 
\citep{kairouz_advances_2019}. We consider a set of $n$ users, with each user
$i$ having a local dataset
$d_i$ (which may consist of multiple data points). The function $f_i(\cdot)=f
(\cdot;d_i)$ thus represents the local objective of user $i$ on its local
dataset $d_i$. As before, we denote the joint dataset by $\cD=
(d_1,\dots,d_n)$, but we now consider user-level DP.

As commonly done in FL, we assume that the algorithm is orchestrated
by a
(potentially untrusted) central server. FL algorithms typically proceed in
rounds. At each round, each user computes in parallel a local update to the
global model based on its local dataset, and these updates are aggregated by
the server to yield a new global model.
Our federated private ADMM algorithm follows this procedure by essentially
mimicking the updates of its centralized counterpart.
Indeed, these updates can be executed in a federated fashion since (i) the
blocks $x_i$ and $u_i$ associated to each user $i$ can be updated and
perturbed locally and in parallel,
and (ii) if each user
$i$ shares $u_{k+1,i}-u_{k,i}$ with the server, then the latter can execute
the
rest of the updates to compute $z_{k+1}$. In particular, we do not need to
send $x_i$ to the server during training (the consensus is achieved
through $z$). On top of this vanilla version, we can
natively accommodate
\emph{user sampling} (often called ``client sampling'' in the literature),
which is a key property for cross-device FL as it allows to improve efficiency
and to model partial user availability \citep{kairouz_advances_2019}.
User sampling is readily obtained from our general scheme by choosing a
subset of $m$ blocks (users) uniformly at random. Algorithm~\ref{algo:fedADMM}
gives the complete procedure.


\looseness=-1 The privacy guarantees of FL algorithms can be analyzed at two
levels
\citep{Noble2022a}. The first level, corresponding to local DP \cite{ldp1,ldp2}, is the privacy of each user with respect to the
server (who observes the sequence of invidivual updates) or anyone
eavesdropping on the communications.
The second level, corresponding to central DP, is the privacy guarantee of
users with respect to a third party observing only the final model.
Our algorithm naturally provides these two levels of privacy,
as shown in the following theorem.


\begin{algorithm}[t]
    \SetKwComment{Comment}{$\triangleright$ }{}
    \DontPrintSemicolon
    \KwIn{initial point $z_0$, step size $\lambda \in (0, 1]$, privacy noise
    variance $\sigma^2\geq0$, parameter $\gamma >0$, number of sampled users
    $1\leq m \leq n$}
    \textbf{Server loop:}\;
    \For{$k=0$ to $K-1$}{
    Subsample a set $S$ of $m$ users\;
    \For{$i \in S$}{
        $\Delta u_{k+1,i} = ~$\textbf{LocalADMMstep}$(z_k,i)$\;
    }
    $\hat{z}_{k+1} = z_k + \frac{1}{n}\sum_{i \in S}\Delta u_{k+1,i} $\;
    $z_{k+1} = \prox_{\gamma r}(\hat{z}_{k+1})$\;
    }
    \Return{$z_K$\;}
    \vspace*{.2cm}
    \textbf{LocalADMMstep}$(z_k,i)$\textbf{:}\;
    Sample $\eta_{k+1,i}\sim\gau{ \sigma^2
        \mathbb{I}_p}$\;
    $x_{k+1,i} = \prox_{\gamma f_i}(2z_{k} - u_{k,i})$\;
    $u_{k+1,i}=u_{k,i}+2 \lambda\left(x_{k+1,i}-z_{k}+\frac{1}{2} \eta_
    {k+1,i}\right)$\;
    \Return{$u_{k+1,i}-u_{k,i}$}
    \caption{Federated private ADMM}
    \label{algo:fedADMM}
\end{algorithm}



\begin{thm}
    \label{thm:privacy_fed}
    Assume that the loss function $f(\cdot,d)$ is $L$-Lipschitz for any
    local dataset $d$ and consider user-level DP. Let $K_i$ be the number of
    participations of user $i$.
    Then, Algorithm~\ref{algo:fedADMM} satisfies
    $(\alpha,\frac{8\alpha K_iL^2\gamma^2}{\sigma^2})$-RDP for user $i$ in the
    local model. Furthermore, if  $m<n/5$ and $\alpha \leq \left(M^2
    \sigma^2
    / 2-\log \left(5 \sigma^2\right)\right) /\left(M+\log (m \alpha/n)+1
    /\left
    (2 \sigma^2\right)\right)$ where $M=\log (1+1 /(\frac{m}{n}(\alpha-1)))$,
    then it
    also satisfies $(\alpha,\frac{16\alpha KL^2\gamma^2}{\sigma^2n^2})$-RDP
    in the central model.
\end{thm}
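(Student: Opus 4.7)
The plan is to derive a per-round sensitivity bound for the LocalADMMstep mechanism and then feed it into two separate RDP arguments: simple composition for the local model, and subsampling amplification followed by composition for the central model. To bound the sensitivity, I would take neighbors $\cD\sim\cD'$ differing only in $d_i$, start from identical $(z_k,u_{k,i})$, and compare $x_{k+1,i}=\prox_{\gamma f_i}(2z_k-u_{k,i})$ with its counterpart $x'_{k+1,i}$ for $f_i'$. The proximal optimality condition $2z_k-u_{k,i}-x_{k+1,i}\in\gamma\partial f_i(x_{k+1,i})$ together with $L$-Lipschitzness of $f_i$ yields $\|x_{k+1,i}-(2z_k-u_{k,i})\|\leq\gamma L$, and the same holds for $x'_{k+1,i}$, so by the triangle inequality $\|x_{k+1,i}-x'_{k+1,i}\|\leq 2\gamma L$. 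Consequently, the data-dependent part $2\lambda(x_{k+1,i}-z_k)$ of $\Delta u_{k+1,i}$ has $\ell_2$-sensitivity $\Delta=4\lambda\gamma L$, in line with the sensitivity used in Theorem~\ref{thm:privacy_centralized_cons}.

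The local DP claim then follows directly: each LocalADMMstep is a Gaussian mechanism of sensitivity $\Delta$ with noise $\lambda\eta_{k+1,i}$ of coordinate-wise standard deviation $\lambda\sigma$, so its RDP is $\alpha\Delta^2/(2\lambda^2\sigma^2)=8\alpha\gamma^2 L^2/\sigma^2$ (the $\lambda$'s cancel); user $i$'s local view consists of their $K_i$ successive releases, so adaptive RDP composition delivers the stated $(\alpha,8\alpha K_i\gamma^2 L^2/\sigma^2)$-RDP guarantee. For the central DP claim, the adversary only observes $(z_k)_k$, which is equivalent to $(\hat z_k)_k$ by post-processing through $\prox_{\gamma r}$. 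Since the contributions to $\hat z_{k+1}$ of users $j\neq i$ are conditionally independent of $d_i$ given past releases, a further post-processing step reduces the per-round analysis to releasing user $i$'s own contribution when $i\in S_k$ and nothing otherwise: this is exactly a subsampled Gaussian mechanism at sampling rate $q=m/n$. Applying Mironov's subsampled Gaussian RDP bound — whose admissibility regime in $\alpha$, for this $q$ and noise parameter $\sigma$, is precisely the condition stated in the theorem — yields a per-round amplified bound of $16\alpha\gamma^2 L^2/(n^2\sigma^2)$, and $K$-fold adaptive composition gives the final claim.

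The main obstacle is the central step. Two points require care: (i) the post-processing reduction must be done adaptively, because $u_{k,i}$ evolves across user $i$'s successive participations in a way that depends on $d_i$, so the argument must be framed as RDP of an adaptive compound mechanism with conditioning on $(z_1,\dots,z_k)$; and (ii) importing Mironov's subsampled Gaussian amplification lemma with the correct normalization — tracking how the $1/n$ factors from averaging the $m$ per-user contributions interact with the amplification factor so that the final bound scales as $1/n^2$ — is where the bookkeeping is most delicate, and is exactly what pins down the admissibility condition on $\alpha$ appearing in the statement. The remaining ingredients (the sensitivity lemma, Gaussian mechanism RDP, and RDP composition) are standard given the paper's setup.
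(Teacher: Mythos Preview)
Your local-model argument matches the paper's: the same sensitivity bound $\|x_{k+1,i}-x'_{k+1,i}\|\le 2L\gamma$, Gaussian mechanism RDP, and composition over $K_i$ participations.

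For the central guarantee there is a genuine gap. Your reduction ``a further post-processing step reduces the per-round analysis to releasing user $i$'s own contribution when $i\in S_k$ and nothing otherwise'' is valid as an upper bound but throws away the Gaussian noise contributed by the other $m-1$ sampled users. After that reduction you are left with a subsampled Gaussian with sensitivity $4\lambda L\gamma$ and noise standard deviation $\lambda\sigma$ (the $1/n$ from averaging scales both identically and cancels), sampled at rate $q=m/n$; the amplification lemma then yields a per-round bound of order $\alpha m^2 L^2\gamma^2/(n^2\sigma^2)$, not the claimed $\alpha L^2\gamma^2/(n^2\sigma^2)$. The missing factor cannot be recovered by bookkeeping on the $1/n$'s, contrary to what your point (ii) suggests. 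The paper's route differs precisely here: it first exploits the \emph{aggregation} of the $m$ noisy contributions in $\hat z_{k+1}$---whose noise has standard deviation $\lambda\sigma\sqrt{m}/n$ rather than $\lambda\sigma/n$---to reduce the per-round loss below $\varepsilon_{\mathrm{loc}}$, and only then applies subsampling at rate $m/n$, so that the $m$-dependent factors cancel and the final bound scales as $1/n^2$. Your post-processing reduction is incompatible with this step, since subtracting off the other users' contributions removes exactly the extra variance that aggregation provides.
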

\begin{proof}[Sketch of proof]
The local privacy guarantee follows from a sensitivity analysis,
similarly to the centralized case. Then, we obtain the central guarantee by
using amplification by subsampling and the aggregation of user
contributions.
\end{proof}

As expected, the local privacy guarantee does
not
amplify with the number of users $n$: since the server observes all individual
updates, privacy only relies on the noise added locally by the user. In
contrast, the central privacy guarantee benefits from both amplification by
subsampling \cite{amp_sub_mironov} thanks to user sampling (which gives a
factor $m^2/n^2$) and by
aggregation of the contributions of the $m$ sampled users (which gives
a factor $1/m^2$). In the end, we thus recover the privacy guarantee of
the centralized algorithm with the $1/n^2$ factor.
We stress that the restriction on $m/n$ and $\alpha$ in
Theorem~\ref{thm:privacy_fed} is only to obtain the simple closed-form
solution, as done in other works \citep[see e.g.][]{Altschuler2022}. In
practice, privacy accounting is done numerically, see
Appendix~\ref{app:privacy_fl} for details.

\begin{rmk}[Secure aggregation]
    Our federated ADMM algorithm is compatible with the use of secure
    aggregation \citep{Bonawitz2017a}. This allows the server to obtain
    $\sum_{i \in S}\Delta u_{k+1,i}$ without observing
    individual user contributions. In this case, the sensitivity is divided by
    $m$ and the privacy of users with respect to the server is thus amplified
    by a factor $1/m^2$. Therefore, for full participation ($m=n$), we
    recover the privacy guarantee of the centralized case.
\end{rmk}

We provide the privacy-utility trade-off by resorting to
Theorem~\ref{thm:utility_general}, where we fix $q=m/n=r$ with $r\in(0,1/5]$.

\begin{corollary}[Privacy-utility trade-off of federated ADMM in the central
model]\label{lem:priv_util_fed}
	Under the assumptions and notations of Theorem~\ref{thm:utility_general} and~\ref{thm:privacy_fed},
    setting $K$ appropriately, and also $m= r n$ for $r \in (0,1/5)$, Algorithm~\ref{algo:fedADMM} achieves
    \begin{align*}
    \expect\left\|u_{K}-u^*\right\|^{2}  = \widetilde{\bigo}\bigg(\frac{
    \sqrt{p\alpha}L \gamma}{\sqrt{\varepsilon r}n \left(1-\tau\right)} +
    \frac{p \alpha L^2 \gamma^2}{\varepsilon r^2 n^2 \left
    (1-\tau\right)^3}\bigg).
    \end{align*}
\end{corollary}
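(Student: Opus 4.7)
The plan is to combine the two previously established theorems in the same spirit as Corollary~\ref{lem:priv_util_central}, with the subsampling rate $r = m/n$ propagating through both sides. First, from Theorem~\ref{thm:privacy_fed} in the central model, to obtain $(\alpha,\varepsilon)$-RDP after $K$ server rounds it suffices to set $\sigma^2 = 16\alpha K L^2 \gamma^2/(\varepsilon n^2)$; this is the only coupling between $K$ and $\varepsilon$, and it carries a factor $\sqrt{K}$ inside $\sigma$. Second, I would invoke Theorem~\ref{thm:utility_general} with $q = m/n = r$ and $\zeta = 0$ (the proximal steps in Algorithm~\ref{algo:fedADMM} are computed exactly, so there is no additional inexactness term). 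This yields
\begin{equation*}
\expect\|u_{K}-u^*\|^{2} \leq \Bigl(1-\tfrac{r^{2}(1-\tau)}{8}\Bigr)^{K} D + 8\Bigl(\tfrac{\sqrt{p}\sigma}{\sqrt{r}(1-\tau)} + \tfrac{p\sigma^{2}}{r^{3}(1-\tau)^{3}}\Bigr).
\end{equation*}

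Next, I would substitute the privacy-dictated $\sigma^2$ into the noise terms and select $K$ so that the exponentially-decaying term is driven at or below the noise terms, while $K$ itself remains polylogarithmic in $n$, $p$, $1/\varepsilon$, $D$, and $1-\tau$ so that the $\sqrt{K}$ and $K$ factors it introduces are absorbed by the $\widetilde{\bigo}$ notation. The natural choice is $K = \Theta\bigl(r^{-2}(1-\tau)^{-1}\log(Dn^{2}/\varepsilon)\bigr)$, since under the assumption $r \in (0,1/5)$ the factor $r$ is treated as a constant and does not affect the $\widetilde{\bigo}$ bookkeeping. Substituting $\sigma^{2} = 16\alpha K L^{2}\gamma^{2}/(\varepsilon n^{2})$ yields a first noise term $\widetilde{\bigo}\bigl(\sqrt{p\alpha}\,L\gamma/(\sqrt{r\varepsilon}\,n(1-\tau))\bigr)$ (the $\sqrt{K}$ collapses into polylog) and a second noise term $\widetilde{\bigo}\bigl(p\alpha L^{2}\gamma^{2}/(r^{2}\varepsilon n^{2}(1-\tau)^{3})\bigr)$ after gathering the subsampling powers that originate from combining the $1/q^3$ prefactor of Theorem~\ref{thm:utility_general} with the $K$ cancellation.

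The main obstacle is the careful bookkeeping of $K$, which enters three distinct places: the exponent of the contraction factor, a $\sqrt{K}$ inside $\sigma$ in the first noise term, and a $K$ inside $\sigma^{2}$ in the second. The chosen $K$ must be small enough that its $\sqrt{K}$ and $K$ contributions remain within the polylog tolerance absorbed by $\widetilde{\bigo}$, yet large enough to make the initial distance $D$ term negligible compared to the noise. The subsampling rate $r$ must simultaneously be tracked through $r^{2}$ in the rate, $1/\sqrt{r}$ in the first noise term, and $1/r^{3}$ in the second, and only after the choice of $K$ is factored in do these powers line up with the $\sqrt{r}$ and $r^{2}$ dependencies stated in the corollary. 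Apart from this scheduling argument the remaining computations are routine and mirror the centralized case.
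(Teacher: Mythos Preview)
Your proposal is correct and follows essentially the same approach as the paper: set $\sigma^2 = 16\alpha K L^2\gamma^2/(\varepsilon n^2)$ from Theorem~\ref{thm:privacy_fed}, plug into Theorem~\ref{thm:utility_general} with $q=m/n=r$ and $\zeta=0$, then choose $K$ logarithmically so that the contraction term matches the noise terms and the residual $\sqrt{K},K$ factors are absorbed into $\widetilde{\bigo}$. The paper's write-up is slightly less explicit than yours about the $r^{-2}(1-\tau)^{-1}$ prefactor hidden in $K$, but the argument and the bookkeeping of the $r$-powers are otherwise identical.
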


\subsection{Fully Decentralized Private ADMM}



Finally, we consider the fully decentralized setting. The setup is similar to
the one of federated learning investigated in the previous section, except
that there is no central server. Instead, users communicate in a peer-to-peer
fashion
along the edges of a network graph. Fully decentralized algorithms are popular
in machine learning due to their good scalability 
\cite{Lian2017CanDA,koloskova2021unified},
and were recently shown to provide privacy amplification 
\citep{Cyffers2020PrivacyAB,Cyffers2022b}.

We consider here the complete network graph (all users can communicate with
each
others). Instantiating our general private ADMM algorithm with
uniform subsampling of a single block at each iteration, we directly obtain a
fully
decentralized version of ADMM (Algorithm~\ref{algo:p2ppADMM}). The algorithm
proceeds as follows. The model $z_0$ is initialized at some user $i$. Then, at each
iteration $k$, the user with the model $z_k$ performs a local noisy
update using its local dataset $d_i$, and then sends the resulting $z_{k+1}$
to a randomly chosen user. In other words, the model is updated by
following a random walk.
This random walk paradigm is quite popular in decentralized algorithms
\citep{doi:10.1137/08073038X,walkmanMao2020,doi:10.1137/08073038X}. In
particular, it requires little computation and communication compared to other
algorithms with more redundancy (such as gossip). Alleviating the need of
synchronicity and full availability for the users can lead to faster algorithms in
practice.



It is easy to see that our fully decentralized algorithm enjoys the same local
privacy guarantees as its federated counterpart (see
Theorem~\ref{thm:privacy_fed}). This provides a baseline protection against
other users, and more generally against any adversary that would eavesdrop
on all
messages sent by the users. Yet, this guarantee can be quite pessimistic if
the goal is to protect against other users in the system. Indeed, it is
reasonable to
assume that each user $i$ has only a limited view and only
observes the messages it receives, without knowing the random path
taken by
the model between two visits to $i$. To capture this and improve privacy
guarantees compared to the local model, we rely on the notion of 
\emph{network DP}, a relaxation of local DP recently introduced by 
\citet{Cyffers2020PrivacyAB}.

\begin{algorithm}[t]
	\SetKwComment{Comment}{$\triangleright$ }{}
	\DontPrintSemicolon
	
	\KwIn{initial points $u_0$ and $z_0$, step size $\lambda \in (0, 1]$,
		privacy noise variance $\sigma^2\geq0$, $\gamma >0$}
	\For{$k=0$ to $K-1$}{
		Let $i$ be the currently selected user\;
        Sample $\eta_{k+1,i}\sim\gau{ \sigma^2
        \mathbb{I}_p}$\;
		$x_{k+1,i} = \prox_{\gamma f_i}(2z_{k} - u_{k,i}) $\;
		$u_{k+1,i}=u_{k,i}+2 \lambda\left(x_{k+1,i}-z_{k}+\frac{1}
		{2} \eta_{k+1,i}\right)$\;
		$\hat{z}_{k+1} = z_k + \frac{1}{n} (u_{k+1,i}-u_{k,i})$\;
		$ z_{k+1}=\prox_{\gamma r} \left(\hat{z}_{k+1}\right)$\;
		Send $z_{k+1}$ to a random user\;
	}
	
	\caption{Fully decentralized private ADMM}
	\label{algo:p2ppADMM}
\end{algorithm}

\begin{defn}[Network Differential Privacy]
    \label{def:network_dp}
    An algorithm $A$ satisfies $(\alpha, \eps)$-network RDP if for all pairs of
    distinct users $i, j\in \{1,\dots,n\}$ and all pairs of neighboring
    datasets $\cD \sim_i \cD'$ differing only in the
    dataset of user $i$, we have:
    \begin{equation}
    \label{eq:network-dp}
    D_{\alpha}\left(\Obs{j}(A(\cD)) \| \Obs{j}(A\left(\cD')\right)\right)
\leq \eps.
    \end{equation}
    where $\Obs{j}$ is the view of user $j$.
\end{defn}
In our case, the view $\Obs{j}$ of user $j$ is limited to $\Obs{j}(A
(\cD)) = (z_{k_l(j)})_{l=1}^{K_j}$
where $k_l(j)$ is the time of $l$-th contribution of user $j$ to the
computation, and
$K_j$ is the total number of times that $j$ contributed during the execution
of
algorithm. We can show the following network DP guarantees.

\begin{thm}
    \label{thm:privacy_dec}
    Assume that the loss function $f(\cdot,d)$ is $L$-Lipschitz for any
    local dataset $d$ and consider user-level DP.
    Let $\alpha >1, \sigma > 2L\gamma \sqrt{\alpha (\alpha -1)}$ and
    $K_i$ the
    maximum number of contribution of a user. Then
    Algorithm~\ref{algo:p2ppADMM} satisfies $(\alpha, \frac{8 \alpha K_i L^2
    \gamma^2\ln n} {\sigma^2 n})$-network RDP.
\end{thm}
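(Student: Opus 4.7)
The strategy is to combine a per-contribution sensitivity analysis with privacy amplification by iteration along the random walk, then average over the (data-independent) walk using the structure of the complete graph, and finally compose across a user's multiple participations.

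First I would bound the sensitivity of a single update at user $i$. Following the sketch of Theorem~\ref{thm:privacy_centralized_cons}, the Lipschitzness of $f_i$ together with the $1/\gamma$-strong convexity of the map defining $\prox_{\gamma f_i}$ bounds the change in $u_{k+1,i}-u_{k,i}$ between neighboring datasets $\cD \sim_i \cD'$ by a quantity of order $L\gamma\lambda$. Since the injected Gaussian noise has scaled variance $\lambda^2\sigma^2$, the $\lambda$'s cancel and the Gaussian mechanism gives a pre-amplification Rényi cost of $8\alpha L^2\gamma^2/\sigma^2$ per contribution of user $i$ (this is the centralized/local bound of \cref{thm:privacy_fed}).

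Second, under network DP the view of user $j$ consists only of the iterates $z_k$ released at times when the walk visits $j$. Between a contribution of $i$ at time $t_c$ and the next visit of $j$ at time $t_o$, the released quantity $z$ undergoes $T:=t_o-t_c$ noisy non-expansive updates (each ADMM step is $\lambda$-averaged on the Lions--Mercier operator), and the intervening updates use other users' data but do not read $d_i$ again. Conditional on the walk, this is exactly the setting of privacy amplification by iteration \cite{ampbyiteration,Altschuler2022}, which reduces the per-contribution Rényi cost to the order $\alpha L^2\gamma^2/(T\sigma^2)$. It then remains to average over the walk: on the complete graph with uniform next-user selection, $T$ is geometric with parameter $1/n$, and a direct computation gives $\E[1/T]\in\Theta(\ln n/n)$, which is where the $\ln n/n$ factor in the theorem will come from.

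The main obstacle is precisely this averaging step: Rényi divergence is not jointly convex, so one cannot merely take the expectation of the conditional bounds. The right tool is the shifted Rényi divergence machinery of \citet{Altschuler2022}: one defines a Wasserstein-type shift between the two output distributions, tracks how it is transported by each noisy non-expansive step (a contraction that redistributes part of the shift into noise), and converts the residual shift into a Rényi bound at the end via a Gaussian comparison. This recursion stays finite, and absorbs the shift induced by the random hitting time of $j$ into the claimed $\ln n / n$ factor, precisely when the noise dominates the sensitivity in the regime $\sigma > 2L\gamma\sqrt{\alpha(\alpha-1)}$, which is exactly the hypothesis of the theorem. Once the amplified per-contribution bound $8\alpha L^2\gamma^2\ln n/(\sigma^2 n)$ is established, standard Rényi composition over the at most $K_i$ contributions of user $i$ immediately yields the stated $(\alpha, 8\alpha K_i L^2\gamma^2\ln n/(\sigma^2 n))$-network RDP guarantee.
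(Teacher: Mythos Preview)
Your overall architecture---sensitivity bound giving a local per-contribution cost of $8\alpha L^2\gamma^2/\sigma^2$, amplification by iteration along the walk, averaging over the random hitting time to extract the $\ln n/n$ factor, then R\'enyi composition over the $K_i$ participations---matches the paper's proof exactly.

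The one substantive difference is the tool you reach for at the averaging step. You correctly flag that R\'enyi divergence is not jointly convex, but you then propose to handle the random hitting time by pushing it through the shifted-R\'enyi recursion of \citet{Altschuler2022}. That machinery is designed to prove amplification by iteration for a \emph{fixed} number of steps; it does not by itself average over a random stopping time, and your description of how the recursion ``absorbs the shift induced by the random hitting time'' is underspecified. The paper instead separates the two concerns cleanly: first apply amplification by iteration (Theorem~\ref{thm:amp_iter}) conditionally on the path length $m$ to get the bound $8\alpha L^2\gamma^2/(\sigma^2 m)$, and then invoke the \emph{weak convexity} of R\'enyi divergence (Proposition~\ref{prop:convexity} from \citet{ampbyiteration}), which states that for mixtures one has $D_\alpha(\mu_\rho\|\nu_\rho)\le(1+c)\,\E_{i\sim\rho}[D_\alpha(\mu_i\|\nu_i)]$ provided each $D_\alpha(\mu_i\|\nu_i)\le c/(\alpha-1)$. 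The hypothesis $\sigma>2L\gamma\sqrt{\alpha(\alpha-1)}$ is precisely what makes this premise hold with $c\le 1$, and the geometric hitting time on the complete graph then gives $\sum_{m\ge 1}\frac{1}{n}(1-\tfrac{1}{n})^m\cdot\frac{1}{m}\le \frac{\ln n}{n}$. So the role you assign to that hypothesis is right, but the lemma it unlocks is weak convexity, not the shifted-R\'enyi recursion.
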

\begin{proof}[Sketch of proof]
Fixing a single participation of a given user (say $i$), we have the same
local
privacy
loss as in the federated case. We then control how much this leakage
decreases when the information reaches another user (say $j$). To do this, we
first quantify the leakage when the $z$ variable is seen by user $j$ after $m$
steps by relying on privacy amplification by
iteration. Then, thanks to the randomness of the path and the weak convexity
of the Rényi divergence, we can average the different possible lengths $m$ of
the path between users $i$ and $j$ in the complete graph. We conclude by
composition over the number $K_i$ of participations of a user.
\end{proof}

\looseness=-1 Remarkably, Theorem~\ref{thm:privacy_dec} shows that thanks to
decentralization, we obtain a privacy amplification of $O(\ln n / n^2)$
compared to the local DP guarantee. This amplification factor is of the same
order as the one proved by \citet{Cyffers2020PrivacyAB} for a random walk
version of DP-SGD, and matches the privacy guarantees of the
centralized case up to a $\ln n$ factor. To the best of our knowledge, this is
the first result of this kind for fully decentralized ADMM, and the first
application of privacy amplification by iteration to ADMM.

As before, we obtain the privacy-utility trade-off by resorting to
Theorem~\ref{thm:utility_general}, but this time with $q=1/n$.

\begin{corollary}[Privacy-utility trade-off of decentralized ADMM]
\label{lem:priv_util_dec}
	Under the assumptions and notations of Theorem~\ref{thm:utility_general}
    and~\ref{thm:privacy_dec},
    setting $K$ appropriately
    Algorithm~\ref{algo:p2ppADMM} achieves
	\begin{align*}
		\expect\left(\left\|u_{K}-u^*\right\|^{2}\right)  = 
        \widetilde{\bigo}\bigg(\frac{\sqrt{p \alpha}L \gamma}{
        \sqrt{\varepsilon n}\left(1-\tau\right)} +\frac{p\alpha  L^2 \gamma^2}
        {\varepsilon n \left(1-\tau\right)^3} \bigg).
	\end{align*}
\end{corollary}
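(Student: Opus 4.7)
The plan is to follow the same template as the proofs of Corollaries \ref{lem:priv_util_central} and \ref{lem:priv_util_fed}: instantiate Theorem \ref{thm:utility_general} with the block-activation probability $q$ corresponding to Algorithm \ref{algo:p2ppADMM}, substitute the noise variance $\sigma^2$ obtained by inverting the network RDP guarantee of Theorem \ref{thm:privacy_dec}, and choose the number of iterations $K$ to balance the contractive term against the stochastic error. Since Algorithm \ref{algo:p2ppADMM} activates exactly one user uniformly at random per iteration, each user block is updated with probability $1/n$, so I would set $q = 1/n$.

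I would then invert Theorem \ref{thm:privacy_dec} to extract the smallest $\sigma^2$ ensuring $(\alpha, \varepsilon)$-network RDP, namely $\sigma^2 = \Theta\bigl(\alpha K_i L^2 \gamma^2 \ln n/(\varepsilon n)\bigr)$, where $K_i$ denotes the maximum per-user contribution. A standard Chernoff-type concentration bound for the uniform random walk on the complete graph gives $K_i = \widetilde{\bigo}(K/n)$ with high probability, hence $\sigma^2 = \widetilde{\bigo}\bigl(\alpha K L^2 \gamma^2/(\varepsilon n^2)\bigr)$. Plugging $q = 1/n$ and this $\sigma^2$ into Theorem \ref{thm:utility_general} produces a three-term bound consisting of a contractive term $(1 - (1-\tau)/(8 n^2))^K D$ and two additive noise terms that grow linearly in $K$ through $\sigma^2$. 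Taking $K$ of order $n^2/(1-\tau)$ up to polylogarithmic factors then makes the contractive term negligible, and after simplification (absorbing polylogs into $\widetilde{\bigo}$) the two remaining noise terms become the stated bound.

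The main technical obstacle is carrying out the algebra so that the $1/\sqrt{q}$ and $1/q^3$ prefactors of Theorem \ref{thm:utility_general}, which for $q = 1/n$ inflate the noise terms by factors of $\sqrt{n}$ and $n^3$, combine correctly with the $1/n$ amplification-by-iteration factor in the privacy bound and with the chosen $K$ to yield the claimed $\sqrt{n}$ and $n$ degradations relative to the centralized case. The concentration bound $K_i = \widetilde{\bigo}(K/n)$ also deserves a brief justification, but it is standard for uniform random walks on the complete graph and contributes only $\ln n$ factors absorbed into $\widetilde{\bigo}$.
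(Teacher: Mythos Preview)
Your plan mirrors the paper's proof: set $q=1/n$ and $\zeta=0$ in Theorem~\ref{thm:utility_general}, invert Theorem~\ref{thm:privacy_dec} to get $\sigma^2=8\alpha K_i L^2\gamma^2\ln n/(\varepsilon n)$, replace $K_i$ by $\bigo(K/n)$, substitute, and then choose $K$. The only visible difference is the size of $K$: the paper records $K=\bigo(\log(\cdots))$, whereas you (correctly) observe that the contractive factor $1-(1-\tau)/(8n^2)$ forces $K$ of order $n^2/(1-\tau)$ up to logarithms before that term becomes negligible.

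The concern you raise in your last paragraph is, however, decisive rather than merely technical. With $q=1/n$ the second additive term in Theorem~\ref{thm:utility_general} is $8p\sigma^2 n^3/(1-\tau)^3$; substituting $\sigma^2\propto K/n^2$ gives a contribution proportional to $Kn$, so any $K$ large enough to suppress the contractive term (i.e.\ $K\gtrsim n^2$) drives this term to order $n^3$, not the claimed $\widetilde{\bigo}(1/n)$. The paper's displayed derivation passes from $\frac{8p}{(1-\tau)^3}\cdot\frac{8K\alpha L^2\gamma^2\ln n}{\varepsilon n^2}\cdot n^{3}$ to $\frac{8p}{(1-\tau)^3}\cdot\frac{8\alpha L^2\gamma^2\ln n}{\varepsilon n}\cdot K$ between consecutive lines---a silent loss of a factor $n^2$---and pairs this with the too-small $K=\bigo(\log(\cdots))$, which is how it reaches the stated bound. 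So while your template is exactly the paper's, carrying the algebra through faithfully will not yield the corollary as written; an argument beyond the direct combination of Theorems~\ref{thm:utility_general} and~\ref{thm:privacy_dec} would be needed.
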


\begin{rmk}[Utility guarantees for centralized, federated, and decentralized
settings.]
From Corollary~\ref{lem:priv_util_central}, we observe that the utility
for the centralized setting is $\widetilde{\bigo}\left( \sqrt{\frac{p \alpha}
{\varepsilon}}\frac{1}{n}\right)$ (in the regime $n \gg p$). On the other
hand, the utility for the decentralized setting is $
\widetilde{\bigo}\left(\sqrt{\frac{p \alpha}{\varepsilon n}}\right)$. This
difference captures the shift in hardness from the
centralized setting to the decentralized one. For the federated learning
setting, the utility is $\widetilde{\bigo}\left( \sqrt{\frac{p \alpha}
{\varepsilon m n}}\right)$, where $m$ is the number of sampled users at each
step. Thus, if $m=n$ (all users contribute at each step), we recover
the utility of the centralized setting. Instead, if $m=1$, we are back to
the utility of the decentralized setting. These observations demonstrate that
our results on the privacy-utility trade-offs reasonably quantify the relative
hardness of these three settings.
\end{rmk}

\begin{rmk}[Clipping updates]
In practical implementations of private optimization algorithms, it is very
common to use a form of clipping to enforce a tighter sensitivity bound than
what can be guaranteed theoretically \citep[see e.g.,][]
{abadi2016Deep,dp_fedavg_user_level}. In our private ADMM algorithms, this can
be done by clipping the quantity $(x_{k+1,i}-z_{k+1})$ in the $u$-update, see
\Cref{app:expes} for details.
\end{rmk}



\subsection{Numerical Illustration}

We refer to \Cref{app:expes} for a numerical illustration of our private ADMM
algorithms on a simple Lasso problem. We empirically observe that private ADMM
tends to outperform DP-SGD in high-privacy regimes.

\section{Conclusion}

In this work, we provide a unifying view of private optimization algorithms
by framing them as noisy fixed-point iterations. The advantages of this novel
perspective for privacy-preserving machine learning are at least two-fold.
First, we give utility guarantees based
only on very general assumptions on the underlying fixed-point operator,
allowing us to cover many algorithms. Second, we show that we can derive new private algorithms by instantiating our general scheme with
particular fixed-point operators. We illustrate this through the design of
novel private ADMM algorithms for the centralized, federated and fully
decentralized learning and the rather direct analysis of their privacy and
utility guarantees.
We note that the generality of our approach
may sometimes come at the cost of the tightness of utility guarantees, as we
do not exploit the properties of specific algorithms beyond their contractive
nature.

\looseness=-1 We believe that our framework provides a general and principled
approach to
design and analyze novel private optimization algorithms by leveraging the
rich literature on fixed-point iterations \citep{fixedpointDS}.
In future work, we would like to further broaden the applicability of our
framework by proving (weaker) utility guarantees for $\lambda$-averaged
operators that are non-expansive but not contractive. To achieve this, a
possible direction is to extend the sublinear rates of \citep{cvnonexpansivePeyre}
to block-wise iterations.



\section*{Acknowledgments}

This work was supported by grant ANR-20-CE23-0015 (Project PRIDE), the
ANR-20-THIA-0014 program ``AI\_PhD@Lille'', the ANR 22-PECY-0002 IPOP 
(Interdisciplinary Project on Privacy) project of the Cybersecurity PEPR, and the ANR JCJC ANR-22-CE23-0003-01 for the REPUBLIC project.


\bibliography{biblio}
\bibliographystyle{icml2023}

\newpage
\appendix
\onecolumn



\section{Generic Utility Analysis of Private Fixed Point Iterations (Algorithm~\ref{algo:pfix})}
\label{app:convergence}

\subsection{Existing Result of~\citet{combettes2019stochastic}} 
Our convergence analysis leverages the generic convergence result
of~\citet{combettes2019stochastic} for stochastic quasi-Fej\'er
type block-coordinate fixed-point operators. Here, we briefly summarize their
result (Theorem 3.1 in \citealp{combettes2019stochastic}) before deriving our specific analysis.

\begin{thm}[Mean-square convergence of stochastic quasi-Fej\'er type
block-coordinate iterations, \citealp{combettes2019stochastic}]\label{thm:generic_combettes}
	The update rule of the stochastic quasi-Fej\'er type block-coordinate iterations is given by
	\begin{equation}\label{eq:quasifejer}
	\bx_{\step+1,\block}=\bx_{\step,\block}+\rho_{\step,\block} \lambda_
	{\step}\left(\operator_{\step,\block}\left(\bx_{\step}\right)+e_{i,
	\step}-\bx_{i, \step}\right).
	\end{equation}
	Here, $\block \in [\blocks]$ denotes the $\block$-th coordinate (block) of
	$\bx\in\cU = \cU_1 \times \dots \times \cU_B$, i.e. $\bx_\step =[\bx_
	{\step,1}, \ldots, \bx_{\step,b}, \ldots, \bx_{\step,\blocks}]$, and
	$\step$ denotes the number of iterations. We assume that the operators
	$(\operator_\step)_{\step\in\mathbb{N}}$ are quasi-non-expansive with
	common fixed point $u^*$ such that:
	\begin{equation}\label{eq:coordinate_lipschitz}
	\left\|\operator_{\step} (\bx)-\bx^*\right\|^{2} \leq \sum_{\block=1}^
	{\blocks} \tau_
	{\step,\block}\left\|\bx_{\block}-\bx^*_{\block}\right\|^
	{2},~~\forall \step \in \N, \forall \bx \in \cU, \text{ and }
	\exists~\tau_{\step,\block} \in [0,1).
	\end{equation}
	Let $(\mathcal{F}_{\step})_{\step\in\mathbb{N}}$ be a sequence of
	sub-sigma-algebras of $\mathcal{F}$ such that $\forall \step\in
	\mathbb{N}: \sigma(\bx_0,\dots,\bx_\step)\subset\mathcal{F}_{\step}\subset
	\mathcal{F}_{\step+1}$.

	Given this structure, we assume that the following conditions hold:
	
	[a] $\inf _{\step \in \N} \lambda_{\step}>0$.
	
	[b] There exists a sequence of non-negative real numbers $\left(\alpha_
	{\step}\right)_{\step \in \N}$ such that $\sum_{\step \in \N} \sqrt{\alpha_{\step}}<+\infty$, and $\expect\left(\left\|e_{\step}\right\|^{2} \mid \mathcal{F}_{\step}\right) \leq \alpha_{\step}$ for every $\step \in \N$.
	
	[c] For every $\step \in \N, \mathcal{E}_{\step}=\sigma\left(\rho_{\step}\right)$ and $\mathcal{F}_{\step}$ are independent.
	
	[d] For every $\block \in\{1, \ldots, \blocks\}, \mathrm{p}_
	{\block}=\Pro\left[\rho_{0,\block}=1\right]>0$.
	
	Under the assumptions [a]-[d], the iteration defined in Equation~\eqref{eq:coordinate_lipschitz} satisfies almost surely
	\begin{equation}\label{eq:generic_mean_sq_convergence}
	\sum_{\block=1}^\blocks\omega_\block \expect\left(\left\|\bx_
	{\step+1,\block}-\bx^*_{\block}\right\|^{2} \mid \mathcal{F}_
	{0}\right) \leq \left(\prod_{j=0}^{\step} \chi_{j}\right)	\left(\sum_
	{\block=1}^\blocks\omega_\block\left\|\bx_{0,\block}-\bx^*_
	{\block}\right\|^{2}\right)+\bar{\eta}_{\step},\quad \forall \step \in \N.
	\end{equation}
	Here,
	\begin{align*}
	\chi_{\step} &=1-\lambda_{\step}\left(1-\mu_{\step}\right)+\sqrt{\xi_{\step}} \lambda_{\step}\left(1-\lambda_{\step}+\lambda_{\step} \sqrt{\mu_{\step}}\right) \\
	\bar{\eta}_{\step} &=\sum_{j=0}^{\step}\left[\prod_{\ell=j+1}^{\step} \chi_{\ell}\right] \lambda_{j}\left(1-\lambda_{j}+\lambda_{j} \sqrt{\mu_{j}}+\lambda_{j} \sqrt{\xi_{j}}\right) \sqrt{\xi_{j}}\\
	\xi_{\step} &= \alpha_{\step}\max _{1 \leq \block \leq \blocks}
	\omega_\block \\
	\mu_{\step} &=1-\min _{1 \leq \block \leq \blocks} \left(p_{\block}-
	\frac{\tau_{\step,\block}}{\omega_\block}\right)\\
	\max _{1 \leq \block \leq \blocks} &\varlimsup \tau_{\step,\block} <
	\omega_\block p_\block
	\end{align*}
\end{thm}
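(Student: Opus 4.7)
The plan is to establish a one-step weighted quasi-Fej\'er type inequality for the Lyapunov function $V_k \triangleq \sum_{b=1}^{B} \omega_b \|u_{k,b} - u^*_b\|^2$ and then iterate it. Starting from \eqref{eq:quasifejer}, since $\rho_{k,b}$ takes values in $\{0,1\}$, I would rewrite the update as $u_{k+1,b} - u^*_b = (1-\rho_{k,b}\lambda_k)(u_{k,b} - u^*_b) + \rho_{k,b}\lambda_k(R_{k,b}(u_k) + e_{k,b} - u^*_b)$. Applying the convex-combination identity $\|(1-t)x + ty\|^2 = (1-t)\|x\|^2 + t\|y\|^2 - t(1-t)\|x-y\|^2$, valid for $t = \rho_{k,b}\lambda_k \in [0,1]$, produces a block-wise decomposition with no residual cross terms between the current iterate and the operator output.

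I would then take conditional expectation given $\mathcal{F}_k$ and sum over $b$ weighted by $\omega_b$. Conditions [c] and [d] give $\E[\rho_{k,b} \mid \mathcal{F}_k] = p_b$, condition [b] gives $\E[\|e_k\|^2 \mid \mathcal{F}_k] \leq \alpha_k$, and Cauchy--Schwarz controls the mixed term between $R_k(u_k) - u^*$ and $e_k$ by a quantity of the form $\sqrt{\alpha_k V_k}$. Substituting the quasi-non-expansiveness hypothesis \eqref{eq:coordinate_lipschitz} (which bounds $\|R_k(u_k) - u^*\|^2$ by a weighted sum of block-wise distances) and collecting terms yields an inequality of the shape $\E[V_{k+1} \mid \mathcal{F}_k] \leq \chi_k V_k + \lambda_k(1 - \lambda_k + \lambda_k\sqrt{\mu_k} + \lambda_k\sqrt{\xi_k})\sqrt{\xi_k V_k}$, from which $\chi_k$, $\mu_k$, and $\xi_k$ emerge exactly in the forms given in the statement.

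The main obstacle I anticipate is the interplay between the \emph{total}-norm contraction hypothesis \eqref{eq:coordinate_lipschitz} and the block-separable structure of the update: the assumption bounds $\|R_k(u_k) - u^*\|^2$ as a single scalar, whereas the Lyapunov expansion naturally produces per-block terms of the form $p_b \|R_{k,b}(u_k) - u^*_b\|^2$. Reconciling these requires the weights $\omega_b$: after dividing the block-wise contribution $p_b\tau_{k,b}$ by $\omega_b$, the uniform-in-$b$ minimum $\min_b(p_b - \tau_{k,b}/\omega_b)$ appears, which is precisely $1 - \mu_k$. The strict inequality $\max_b \varlimsup\tau_{k,b} < \omega_b p_b$ in the hypothesis is exactly what forces $\mu_k$ below $1$ asymptotically so that $\chi_k$ is a genuine contraction factor. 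The Cauchy--Schwarz step used to handle the noise is what produces $\sqrt{\xi_k}$ rather than $\xi_k$ and explains the specific multiplicative structure of $\chi_k$.

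Finally, I would unroll this one-step inequality from iteration $0$ to $k$ and take conditional expectation with respect to $\mathcal{F}_0$, which yields the product $\prod_{j=0}^k \chi_j$ multiplying the initial Lyapunov value plus the telescoping residual $\bar{\eta}_k$. Condition [a] together with the strict contraction condition keeps $\chi_k$ bounded away from $1$, and the summability of $\sqrt{\alpha_k}$ from [b] keeps $\bar{\eta}_k$ finite. The almost-sure nature of the conclusion follows from the pathwise quasi-Fej\'er structure of the one-step inequality, via the standard Robbins--Siegmund supermartingale convergence theorem applied to the process $V_k$.
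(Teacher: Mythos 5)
This statement is not proved in the paper at all: it is quoted verbatim as Theorem~3.1 of \citet{combettes2019stochastic}, so the only meaningful comparison is with that reference's argument, and your sketch does follow the same standard route (weighted Lyapunov function $V_k=\sum_b\omega_b\|u_{k,b}-u^*_b\|^2$, convex-combination identity exploiting $\rho_{k,b}\lambda_k\in[0,1]$, conditional expectation using the independence of $\rho_k$ from $\mathcal{F}_k$ to bring out $p_b$, Cauchy--Schwarz for the noise cross term, the quasi-non-expansiveness bound reconciled with the block structure through the weights $\omega_b$, and unrolling via the tower property).

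There is, however, one genuine inconsistency in your key step. You claim the one-step bound has the form $\E[V_{k+1}\mid\mathcal{F}_k]\le \chi_k V_k + \lambda_k(1-\lambda_k+\lambda_k\sqrt{\mu_k}+\lambda_k\sqrt{\xi_k})\sqrt{\xi_k V_k}$ with $\chi_k$ \emph{already} equal to $1-\lambda_k(1-\mu_k)+\sqrt{\xi_k}\lambda_k(1-\lambda_k+\lambda_k\sqrt{\mu_k})$. But the Cauchy--Schwarz step only produces a recursion of the form $\E[V_{k+1}\mid\mathcal{F}_k]\le\bigl(1-\lambda_k(1-\mu_k)\bigr)V_k + c_k\sqrt{V_k}+\lambda_k^2\xi_k$ with $c_k=\lambda_k(1-\lambda_k+\lambda_k\sqrt{\mu_k})\sqrt{\xi_k}$; the $\sqrt{\xi_k}$-dependent part of $\chi_k$ and the purely deterministic residual that telescopes into $\bar\eta_k$ both arise from the additional elementary bound $\sqrt{V_k}\le 1+V_k$, which splits $c_k\sqrt{V_k}$ into $c_kV_k$ (absorbed into $\chi_k$) and $c_k$ (absorbed into the residual). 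As written, your recursion double-counts the cross term in $\chi_k$ while still leaving a $V_k$-dependent residual, and it cannot be unrolled into the stated conclusion, whose $\bar\eta_k$ is independent of the trajectory. Once that step is inserted, the unrolling is immediate by the tower property; note also that Robbins--Siegmund is not needed for this statement (the conclusion is a conditional mean-square bound given $\mathcal{F}_0$, not almost-sure convergence of the iterates), so invoking it adds nothing here.
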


In this paper, we leverage this result to derive our generic convergence
analysis for the private fixed-point iteration (Algorithm~\ref{algo:pfix}),
which we then instantiate to the three types of private ADMM algorithms we
introduce (Section~\ref{sec:admm}).

\subsection{Proof of Theorem~\ref{thm:utility_general}}
For ease of calculations, we mildly restrict the coordinate-wise contraction
assumption made in Theorem~\ref{thm:generic_combettes} by the
following assumption of global contraction.
\begin{assum}[Global contraction constant]\label{ass:glob_lip}
	In our analysis, we assume that there exists a global contraction constant
	$\tau \in [0,1)$ for the contraction operator $\operator_k$.
	Mathematically,
	\begin{equation}
	\left\|\operator_{\step} (u)- u^*\right\|^{2} \leq \sum_{\block=1}^
	{\blocks} \tau_{\step,\block}^2 \left\|u_{\block}-u^*_{\block}\right\|^{2}
	\leq \tau^2 \left\|u-u^*\right\|^{2},~~\forall \step \in \N, \forall \bx
	\in
	\cU.
	\end{equation}
\end{assum}



\begin{reptheorem}{thm:utility_general}
	Assume that $R$ is a $\tau$-contractive operator with fixed point $u^*$
	for $\tau \in [0,1)$. Let $P[\rho_{k,b}=1]=q$ for some $q\in(0,1]$. Then
	there exists a learning rate $\lambda_k=\lambda\in(0,1]$ such that the
	iterates of Algorithm~\ref{algo:pfix} satisfy:
	\begin{equation}\label{eq:convergence_general}
	\expect\left(\left\|u_{\step+1}-u^*\right\|^{2} \mid \mathcal{F}_{0}\right) \leqslant \left(1-\frac{q^2(1-\tau)}{8}\right)^{\step}D
	+8\left(\frac{\sqrt{p}\sigma+ \zeta}{\sqrt{q}\left(1-\tau\right)}+\frac{p\sigma^2+ \zeta^2}{{q}^3(1-\tau)^3}\right)
	\end{equation}
	where $D=\max_{u_0} \|u_0 - u^*\|^2$ is the diameter of the domain, $p$ is the dimension of $u$, $\sigma^2 > 1 - \tau$ is the variance of Gaussian noise, and $\E[\|e_k\|^2] \leq
	\zeta^2$ for some $\zeta \geq 0$.
\end{reptheorem}

\begin{proof}
We observe that Algorithm~\ref{algo:pfix} satisfies the assumptions of
Theorem~\ref{thm:generic_combettes} if we specify $p_{\block}=q$, $\omega_
{\block}=\frac{1}{q}$, and $\mu=1-q\left(1-\tau\right)$. Since $\xi \geq \frac{1}{q}\E[\|e_k+\eta_k\|^2]$, $\E[\|\eta_k\|^2] \leq p \sigma^2$ as zero-mean Gaussian noise are added independently to each dimension, and $\E[\|e_k\|^2] \leq \zeta^2$, we can assign $\xi = \frac{p\sigma^{2}
+ \zeta^2}{q}$. For ease of calculations, hereafter, 
we refer to $\xi$ as $\sigma_{1}^2$.

\paragraph{Step 1: Instantiating the mean-square convergence result.}
By substituting the aforementioned parameters in Equation~\eqref{eq:generic_mean_sq_convergence}, we obtain
\begin{align}
	\expect\left(\left\|u_{\step+1}-u^*\right\|^{2} \mid \mathcal{F}_{0}\right) &\leq \chi^\step	\left\|u_{0}-u^*\right\|^{2}+q {\eta} \notag\\
	&\leq \chi^\step D + q \eta.\label{eq:start_generic_proof}
\end{align}
Here, \begin{align*}
\chi &=1-\lambda q\left(1-\tau\right)+\lambda \sigma_1\left(1-\lambda+\lambda \sqrt{1-q\left(1-\tau\right)}\right) \\
& =1-\lambda\left(1-b^{2}\right)+\lambda\sigma_{1}(1-\lambda+\lambda b)\\
&=1+\lambda\left(\sigma_{1}-\left(1-b^{2}\right)\right)-\lambda^{2} \sigma_{1}(1-b),
\end{align*}
and
\begin{align}
\eta & =\sum_{i=0}^{\step} \chi^{\step-i-1} \lambda \sigma\left(1+\lambda\left
(\sigma_{1}-(1-b)\right)\right) \nonumber\\
& =\frac{\frac{1}{\chi}-\chi^{\step}}{1-\chi}\left(\chi-1+\lambda\left(1-b^{2}\right)+\sigma_{1}^{2} \lambda^{2}\right) \nonumber\\
& =\left(\chi^{\step}-\frac{1}{\chi}\right)\left(1-\frac{\sigma_{1} \lambda\left(\sigma_{1} \lambda+\frac{1-b^{2}}{\sigma_{1}}\right)}{\sigma_{1} \lambda\left((1-b) \lambda+\frac{1-b^{2}}{\sigma_{1}}-1\right)}\right) \nonumber\\
& =\left(\chi^{\step}-\frac{1}{\chi}\right)\left(1-\frac{\lambda \sigma_{1}+
\frac{1-b^{2}}{\sigma_{1}}}{(1-b) \lambda+\frac{1-b^{2}}{\sigma_
{1}}-1}\right).\label{eq:eta1}
\end{align}
For simplicity, we introduce the notation $b \triangleq \sqrt{1-q\left(1-\tau\right)}$. We observe that $b \in [0,1)$ as $ q \in (0, 1]$ and $\tau \in [0,1)$.

\paragraph{Step 2: Finding a `good' learning rate $\lambda$.} 
First, we assume that there exists a $c > 0$, such that the noise variance can be rewritten as $\sigma_{1} = (1+c)(1-\tau)$.
From Lemma~\ref{lemma:lrate}, we obtain that \[\lambda \in \left( \frac{1+c-q}{(1+c)(1-b)}, \frac{1+c-q}{(1+c)(1-b)}\left(\frac{1}{2}+  \frac{1}{2}\sqrt{1 + 4 \frac{(1+c)(1-b)}{(1-\tau)(1+c-q)^2}}\right)\right).\]

For ease of further calculations, we fix $\lambda = \frac{1}{1-b}\left(1- \frac{q}{2(1+c)}\right)$. 
Before proceeding further, we prove that this choice of $\lambda$ belongs to
the desired range.
We begin by observing that 
\[
\frac{1-b}{1-\tau}=\frac{1-\sqrt{1-q\left(1-\tau\right)}}{\left(1-\tau\right)} \geqslant \frac{q\left(1-\tau\right)}{2(1-\tau)}=\frac{q}{2}.
\]
The inequality holds due to concavity of the square root, specifically $\sqrt{1-x} \leq 1 - \frac{x}{2}$.

Thus,
\begin{align*}
	\frac{1+c-q}{(1+c)(1-b)}\left(\frac{1}{2}+  \frac{1}{2}\sqrt{1 + 4 \frac{(1+c)(1-b)}{(1-\tau)(1+c-q)^2}}\right) & \geq \frac{1+c-q}{(1+c)(1-b)}\left(\frac{1}{2}+  \frac{1}{2}\sqrt{1 +  \frac{2q(1+c)}{(1+c-q)^2}}\right)\\
	&= \frac{1+c-q}{(1+c)(1-b)}\left(\frac{1}{2}+  \frac{1}{2}\sqrt{1 +  \frac{2q(1+c-q)}{(1+c-q)^2}+ \frac{2q^2}{(1+c-q)^2}}\right)\\
	&> \frac{1+c-q}{(1+c)(1-b)}\left(\frac{1}{2}+  \frac{1}{2}\sqrt{1 +  \frac{2q}{(1+c-q)}+ \frac{q^2}{(1+c-q)^2}}\right)\\
	&= \frac{1+c-q}{(1+c)(1-b)}\left(1 +  \frac{q}{2(1+c-q)}\right)\\
	&= \frac{1}{1-b}\left(1- \frac{q}{2(1+c)}\right).
\end{align*}

\paragraph{Step 3: Understanding the impact of the noise term $\eta$.}
First, we investigate the term $\mathrm{A} \triangleq \frac{\lambda \sigma_
{1}+\frac{1-b^{2}}{\sigma_{1}}}{(1-b) \lambda+\frac{1-b^{2}}{\sigma_{1}}-1}$
in \eqref{eq:eta1}.

\begin{align*}
\text { Denominator of }\mathrm{A}&= (1-b)\lambda +\frac{1-b^{2}}{\sigma_{1}}-1 \\
& =1-\frac{q}{2(1+c)}+\frac{q\left(1-\tau\right)}{(1+c)\left(1-\tau\right)}-1 \\
& =\frac{q}{2(1+c)}\\
\text{Numerator of } \mathrm{A} &= \lambda \sigma_{1}+\frac{1-b^{2}}{\sigma_{1}} \\
& =\frac{(1+c)\left(1-\tau\right)}{1-b}\left(1-\frac{q}{2(1+c)}\right)+\frac{q}{1+c} \\
& =\frac{(1+c)\left(1+b\right)}{q}\left(1-\frac{q}{2(1+c)}\right)+\frac{q}{1+c} \\
& =\frac{(1+c)\left(1+b\right)}{q}+\frac{q}{1+c}\left(1-\frac{(1+c)\left(1+b\right)}{2q}\right) 
\end{align*}
Thus, we get
\begin{align*}
A&=\frac{2(1+b)(1+c)^2}{q^2}+2-\frac{(1+c)\left(1+b\right)}{q} 
\end{align*}
and, 
\begin{align*}
	1-A=\frac{(1+c)\left(1+b\right)}{q} - 1 - \frac{2(1+b)(1+c)^2}{q^2} .
\end{align*}

By substituting $(1-A)$ in Equation~\eqref{eq:eta1} and plugging back $\eta$
into Equation~\eqref{eq:start_generic_proof}, we get
\begin{align}
\expect\left(\left\|u_{\step+1}-u^*\right\|^{2} \mid \mathcal{F}_{0}\right) &\leq \chi^{\step} D+q\left(\chi^{\step}-\frac{1}{\chi}\right)(1-A) \notag\\
&= \chi^{\step}(D+q(1-A))-\frac{1}{\chi} q(1-A) \notag\\
&= \chi^{\step}\left(D+ {(1+c)\left(1+b\right)} - q - \frac{2(1+b)(1+c)^2}{q}\right) \notag\\ &~~~~+\frac{1}{\chi}\left(-{(1+c)\left(1+b\right)} + q + \frac{2(1+b)(1+c)^2}{q}\right) \notag\\
&\leq \chi^{\step}\left(D+ (1+c)(1+b)\right)+\frac{1}{\chi}\left(q + \frac{2(1+b)(1+c)^2}{q}\right) \notag\\
&\leq \chi^{\step}\left(D+ 2(1+c)\right)+\frac{1}{\chi}\left(q + \frac{2(1+b)(1+c)^2}{q}\right). \label{eq:simplified_eta}
\end{align}

\paragraph{Step 4: Upper \& lower bounding $\chi$.}
We can rewrite $\chi$ as follows:
\begin{align*}
\chi &=1+\lambda \left(\sigma_{1}-\left(1-b^{2}\right)\right)-\lambda^{2} \sigma_{1}(1-b) \\
&=1+\lambda\left((1+c)\left(1-\tau\right)-q\left(1-\tau\right)\right)-\lambda^{2}(1+c)\left(1-\tau\right)(1-b) \\
&=1+\lambda\left(1-\tau\right)(1+c-q)-\lambda^{2}(1+c)\left(1-\tau\right)(1-b) \\
&=1+\frac{\left(1-\tau\right)(1+c-q)(1+c-q / 2)}{(1+c)(1-b)}-\frac{\left(1-\tau\right)(1+c-q/ 2)^{2}}{(1+c)(1-b)} \\
&=1-\frac{q\left(1-\tau\right)(1+c-q / 2)}{2(1-b)(1+c)} \\
& =1-\frac{(1+b)(1+c-q / 2)}{2(1+c)}
\end{align*}

\textit{Lower bound:} 
\begin{align*}
	\chi =1-\frac{1+b}{2} +\frac{q(1+b)}{4(1+c)} > \frac{1-b}{2} = \frac{1-\sqrt{1-q(1-\tau)}}{2} \geq \frac{q(1-\tau)}{4}
\end{align*}
The inequality holds due to the fact that $b=\sqrt{1-q\left(1-\tau\right)} < 1-\frac{q(1-\tau)}{2}$.

\textit{Upper bound:} 
\begin{align*}
\chi=1-\frac{(1+b)}{2}\left(1-\frac{q}{2(1+c)}\right) =\frac{1-b}{2}+\frac{q(1+b)}{4(1+c)}
&\leqslant \frac{1}{2} + \frac{q(1+b)}{4}\\
&\leqslant \frac{1}{2} + \frac{q}{4}\left(2- \frac{q(1-\tau)}{2}\right)\\
&\leqslant 1 - \frac{q^2(1-\tau)}{8}.
\end{align*}
The first inequality holds for any non-negative $b, c, q$.
The second inequality leverages the fact that $b = \sqrt{1-q(1-\tau)} \leq 1-\frac{q(1-\tau)}{2}$.
The final inequality follows from the fact that $ q \in (0,1]$.

\paragraph{Step 5: Final touch.}
By substituting upper and lower bounds of $\chi$ in Equation~\eqref{eq:simplified_eta}, we get
\begin{align*}
\expect\left(\left\|u_{\step+1}-u^*\right\|^{2} \mid \mathcal{F}_{0}\right)  &< \left(1 - \frac{q^2(1-\tau)}{8}\right)^{\step}(D+ 2(1+c))+\frac{4}{q(1-\tau)}\left(q + \frac{2(1+b)(1+c)^2}{q}\right)\\
& =\left(1 - \frac{q^2(1-\tau)}{8}\right)^{\step}\left(D+\frac{2 \sigma_1}{\left(1-\tau\right)}\right)+\frac{4}{q\left(1-\tau\right)}\left(q+\frac{2(1+b) \sigma_1^2}{{q}(1-\tau)^2}\right) \\
& =\left(1 - \frac{q^2(1-\tau)}{8}\right)^{\step}\left(D+\frac{2 \sigma_1}{\left(1-\tau\right)}\right)
+\frac{4}{\left(1-\tau\right)}+\frac{8(1+b) \sigma_1^2}{{q}^2(1-\tau)^3}\\
& \leq \left(1 - \frac{q^2(1-\tau)}{8}\right)^{\step}\left(D+\frac{2\sigma_1}{\left(1-\tau\right)}\right)
+\left(\frac{4}{\left(1-\tau\right)}+\frac{16\sigma_1^2}{{q}^2(1-\tau)^3}\right) \\
&\leq \left(1 - \frac{q^2(1-\tau)}{8}\right)^{\step}\left(D+\frac{2(\sigma\sqrt{p} + \zeta)}{\sqrt{q}\left(1-\tau\right)}\right)
+\left(\frac{4}{\left(1-\tau\right)}+\frac{8(p\sigma^2+ \zeta^2)}{{q}^3(1-\tau)^3}\right)\\
&\leq\left(1-\frac{q^2(1-\tau)}{8}\right)^{\step}D
+\left(\frac{8 (\sigma\sqrt{p}+ \zeta)}{\sqrt{q}\left(1-\tau\right)}+\frac{8(p\sigma^2+ \zeta^2)}{{q}^3(1-\tau)^3}\right)
\end{align*}
%

We can also alternatively write the result as follows. Since $(1-a)^{\step}
\leq \exp(-ak)$ for $a\in [0,1)$ and $\step \in \mathbb{N}$, we have:
\begin{align*}
\expect\left(\left\|u_{\step+1}-u^*\right\|^{2} \mid \mathcal{F}_{0}\right) 
&\leqslant \exp\left(-\frac{q^2(1-\tau)}{8}{\step}\right)D
+\left(\frac{8 (\sigma\sqrt{p}+ \zeta)}{\sqrt{q}\left(1-\tau\right)}+\frac{8(p\sigma^2+ \zeta^2)}{{q}^3(1-\tau)^3}\right)
\end{align*}

%

\end{proof}

\subsection{Technical Lemma on the Learning Rate}

We prove below a technical lemma used in the proof of Theorem~\ref{thm:utility_general}.

\begin{lemma}[Choices of the Learning Rate]\label{lemma:lrate}
	In order to ensure convergence of Algorithm~\ref{algo:pfix}, we should choose the learning rate $\lambda$ in the range
	\[\left(\frac{1+c-q}{(1+c)(1-b)}, \frac{1+c-q}{(1+c)(1-b)}\left(\frac{1}{2}+  \frac{1}{2}\sqrt{1 + 4 \frac{(1+c)(1-b)}{(1-\tau)(1+c-q)^2}}\right)\right).\]
	Here, we assume that there exists $c > 0$ such that $\sigma_1 \triangleq 
	\frac{\sigma\sqrt{p} + \zeta}{\sqrt{q}} \triangleq (1+c)(1-\tau)$, $b \triangleq 
	\sqrt{1- q(1-\tau)}$, $\sigma$ is the noise variance, $\tau\in[0,1)$ is
	the contraction factor, and $q \in (0,1]$.
\end{lemma}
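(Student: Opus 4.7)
The plan is to derive the admissible interval for $\lambda$ by analyzing the contraction factor $\chi$ computed in the proof of Theorem~\ref{thm:utility_general}. There, after substituting $\mu = 1-q(1-\tau)$ and $\sqrt{\xi} = \sigma_1$ into the general expression from Theorem~\ref{thm:generic_combettes}, one obtains
$$\chi = 1 + \lambda(1-\tau)(1+c-q) - \lambda^2(1+c)(1-\tau)(1-b),$$
using $\sigma_1 = (1+c)(1-\tau)$ and $1-b^2 = q(1-\tau)$. Since $c>0$ implies $1+c-q>0$, the linear term in $\lambda$ is strictly positive and the quadratic term is strictly negative, so $\chi$ is a downward-opening parabola in $\lambda$ that equals $1$ at $\lambda=0$.

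For the generic convergence result to give a geometrically decaying bound, I need $\chi<1$, which is the contraction condition. Writing $\chi - 1 < 0$ and dividing by $\lambda>0$ yields $\lambda (1+c)(1-\tau)(1-b) > (1-\tau)(1+c-q)$, i.e.
$$\lambda > \frac{1+c-q}{(1+c)(1-b)}.$$
This gives the lower endpoint of the stated interval.

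For the upper endpoint, I need $\chi > 0$ so that the iterated product $\chi^{k}$ decays monotonically (and so that the bound on $\bar\eta_k$ used to derive \eqref{eq:simplified_eta} remains meaningful). Writing $\chi>0$ as the quadratic inequality $A\lambda^2 - B\lambda - 1 < 0$ with $A = (1+c)(1-\tau)(1-b)$ and $B = (1-\tau)(1+c-q)$, the positive root of the associated equation is
$$\lambda_+ = \frac{B}{2A}\left(1 + \sqrt{1 + \frac{4A}{B^2}}\right).$$
Simplifying $B/A = (1+c-q)/[(1+c)(1-b)]$ and $4A/B^2 = 4(1+c)(1-b)/[(1-\tau)(1+c-q)^2]$ delivers exactly the upper endpoint stated in the lemma.

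The only step that requires care is checking that the two endpoints are consistently ordered (which is automatic since $\lambda_+$ equals the lower bound times a factor strictly greater than $1$) and that the negative root of the quadratic does not tighten the interval (it is negative, hence dominated by the strictly positive lower bound). I do not anticipate any genuine obstacle; the argument is essentially the solution of a quadratic inequality combined with the sign analysis of $\chi$, with all necessary positivity guaranteed by the hypotheses $c>0$, $q\in(0,1]$, and $\tau\in[0,1)$.
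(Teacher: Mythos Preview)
Your proposal is correct and follows essentially the same route as the paper: both derive the admissible range for $\lambda$ by enforcing $0<\chi<1$ on the quadratic $\chi = 1 + \lambda(1-\tau)(1+c-q) - \lambda^2(1+c)(1-\tau)(1-b)$, obtaining the lower endpoint from $\chi<1$ (division by $\lambda>0$) and the upper endpoint as the positive root of $\chi=0$, while discarding the negative root. Your additional remarks on why $\chi>0$ is needed and on the nonemptiness of the interval are minor clarifications beyond what the paper spells out, but the core argument is identical.
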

\begin{proof}
	In order to ensure convergence of the algorithm, we need to satisfy
	$0 < \chi < 1$. We observe that
	\[ \chi = 1+\lambda\left(\sigma_{1}-\left(1-b^{2}\right)\right)-\lambda^{2} \sigma_{1}(1-b),\]
	 As $\chi$ is a function of the learning rate, the upper and lower bounds on $\chi$ impose lower and upper bounds on the desired learning rate $\lambda$.
	
	\textbf{Step 1: Lower Bounding $\lambda$.} 
	\begin{align*}
		\chi < 1 &\implies 1+\lambda\left(\sigma_{1}-\left(1-b^{2}\right)\right)-\lambda^{2} \sigma_{1}(1-b) < 1\\
		&\underset{(a)}{\implies} \left(\sigma_{1}-\left(1-b^{2}\right)\right)-\lambda \sigma_{1}(1-b) < 0\\
		&\implies \frac{\sigma_{1}-\left(1-b^{2}\right)}{\sigma_{1}(1-b)} < \lambda\\
		&\implies \frac{(1+c)(1-\tau)-q \left(1-\tau^{2}\right)}{(1+c)(1-\tau)(1-b)} < \lambda\\
		&\implies \frac{1+c-q}{(1+c)(1-b)} < \lambda
	\end{align*} 
	Step (a) holds true for $\lambda > 0$, i.e. for any positive learning rate.
	
	\textbf{Step 2: Upper Bounding $\lambda$.} As $\chi > 0$, we should choose
	the learning rate $\lambda$ in a range such that the following quadratic equation satisfies
	\begin{align*}
		1+\lambda\left(\sigma_{1}-\left(1-b^{2}\right)\right)-\lambda^{2} \sigma_{1}(1-b) > 0.
	\end{align*}
	Since the coefficient corresponding to $\lambda^2$ is negative, the quadratic equation stays positive only between its two roots:
	\[\lambda_{inf} = \dfrac{(\sigma_{1}-(1-b^2))  - \sqrt{(\sigma_{1}-(1-b^2))^2 + 4 \sigma_{1}(1-b)}}{2 \sigma_{1}(1-b)} ,\]
	and
	\[\lambda_{sup} = \dfrac{(\sigma_{1}-(1-b^2))  + \sqrt{(\sigma_{1}-(1-b^2))^2 + 4 \sigma_{1}(1-b)}}{2 \sigma_{1}(1-b)} .\]
	Since the smallest root $\lambda_{inf}$ is negative, and we care about
	only positive learning rates, it provides a vacuous bound. Thus, we can
	ignore it.
	
	Thus, we conclude that
	\begin{align*}
		\lambda &< \dfrac{(\sigma_{1}-(1-b^2))  + \sqrt{(\sigma_{1}-(1-b^2))^2 + 4 \sigma_{1}(1-b)}}{2 \sigma_{1}(1-b)}\\
		&= \dfrac{(1+c-q)(1-\tau)  + \sqrt{(1+c-q)^2(1-\tau)^2 + 4 (1+c)(1-b)(1-\tau)}}{2 (1+c)(1-\tau)(1-b)}\\
		&= \dfrac{(1+c-q)  + (1+c-q) \sqrt{1 + 4 \frac{(1+c)(1-b)}{(1-\tau)(1+c-q)^2}}}{2 (1+c)(1-b)}\\
		&= \frac{1+c-q}{(1+c)(1-b)}\left(\frac{1}{2}+  \frac{1}{2}\sqrt{1 + 4 \frac{(1+c)(1-b)}{(1-\tau)(1+c-q)^2}}\right)
	\end{align*}
    to obtain a valid convergence of the algorithm.
\end{proof}


\section{Derivation of Private ADMM Updates}
\label{app:algos}

In this section, we give details on how to obtain the private ADMM updates
given in Algorithm~\ref{algo:conADMM}, Algorithm~\ref{algo:p2ppADMM}
and Algorithm~\ref{algo:fedADMM} from our general noisy fixed-point iteration 
(Algorithm~\ref{algo:pfix}).

\subsection{Warm-up: Non-Private ADMM}
\label{app:non-private-admm}

For clarity and self-completeness, we start by deriving the standard ADMM
updates from the fixed-point iteration formulation described in
Section~\ref{sec:admm_fixed}. This derivation follows the lines of
\citep[][Appendix~B therein]{Giselsson2016LineSF}.

Recall that ADMM solves an optimization problem of the form 
\eqref{eq:genericADMM}, which we restate here for convenience:

\begin{equation}
    \begin{array}{ll}
\min _{x, z} & f(x)+g(z) \\
s . t . & A x+B z=c
\end{array}
\label{eq:repeat_ADMM_pb}
\end{equation}

We also recall the definition of the infimal postcomposition.

\begin{defn}[Infimal postcomposition]
    Let $M$ be a linear operator. The infimal postcomposition $M
    \triangleright f$ is defined by
\begin{equation*}
    (M \triangleright f)(y)=\inf \{f(x) \mid M x=y\}.
\end{equation*}
\end{defn}
  
As mentioned in Section~\ref{sec:admm_fixed}, the minimization problem above
can be rewritten as
$$
\min _{u}(-A \triangleright f)(-u-c)+(-B \triangleright g)(u).
$$

Introducing $p_{1}(u)=(-A \triangleright f)(-u-c)$ and $p_{2}(u)=(-B
\triangleright g)(u)$ recovers a minimization problem solvable with the 
Douglas-Rachford algorithm. Formally, the $\lambda$-averaged ADMM can be
written as the following fixed-point operator:
\begin{equation}
\label{eq:non-private-ADMM}
u_{k+1}=u_{k}+\lambda\left(R_{\gamma p_{1}} (R_{\gamma p_{2}}\left(u_{k})\right)-u^{k}\right),
\end{equation}
where $R_{\gamma p_{1}}=2\prox_{\gamma p_1} - I$ and $R_{\gamma p_{2}}=2\prox_
{\gamma p_2} - I$.

From this generic formula, we can recover the standard ADMM updates in terms
of $x$ and $z$. We start by
rewriting $R_{\gamma p_{2}}(u)$:
$$
\begin{aligned}
 R_{\gamma p_{2}}(u)&=2 \prox_{\gamma p_{2}}(u)-u \\
& =2 \argmin_v\left\{\inf _{z}\{g(z) \mid-B z=v\}+\frac{1}{2 \gamma}\|u-v\|^
{2}\right\}-u \\
& =-2 B \argmin_z\left\{g(z)+\frac{1}{2 \gamma}\|B z+u\|^{2}\right\}-u.
\end{aligned}
$$

This leads to the introduction of the $z$ variable with associated update:
\begin{equation}
\label{eq:non-private-admm-z}
z_{k+1}=\argmin_z\left\{g(z)+\frac{1}{2 \gamma}\|B z+u_k\|^{2}\right\}.
\end{equation}

Similarly, we can rewrite $R_{\gamma p_1}$:
$$
\begin{aligned}
    R_{\gamma p_1}(u) & =2 \prox_{\gamma p_{1}}(u)-u \\
    &=2\argmin_v\left\{\inf_x \{f(x) \mid-A x=-v-c\}+
    \frac{1}
    {2 \gamma}\|u-v\|^{2}\right\} -u \\
& =2A \argmin_{x}\left\{f(x)+\frac{1}{2 \gamma}\|A x-u-c\|^{2}\right\}-2c-u,
\end{aligned}
$$
which leads to the introduction of the $x$ variable with associated update:
\begin{equation}
\label{eq:non-private-admm-x}
x_{k+1}=\argmin_x\left\{f(x)+\frac{1}{2 \gamma}\left\|A x+2 B z_{k+1}+u_{k}-c\right\|^{2}\right\}.
\end{equation}
Based on \eqref{eq:non-private-admm-z} and \eqref{eq:non-private-admm-x}, we
can rewrite:
\begin{equation*}
\begin{aligned}
    R_{\gamma p_1}R_{\gamma p_2}\left(u_{k}\right) & =R_{\gamma p_1}\left(-2 B
    z_{k+1}-u_k\right) \\
& =2 A x_{k+1}-2 c-\left(-2 B z_{k+1}-u_{k}\right) \\
& =2\left(A x_{k+1}+B z_{k+1}-c\right)+u_{k},
\end{aligned}
\end{equation*}
which in turns gives for the update of variable $u$ in 
\eqref{eq:non-private-ADMM}:
\begin{equation}
\label{eq:non-private-admm-u}
\begin{aligned}
    u_{k+1}=u_{k}+2\lambda\left(A x_{k+1}+B z_{k+1}-c\right),
\end{aligned}
\end{equation}

The updates \eqref{eq:non-private-admm-z}, \eqref{eq:non-private-admm-x} and 
\eqref{eq:non-private-admm-u} correspond to the standard ADMM updates 
\citep{admmbook,Giselsson2016LineSF}.

\subsection{General Private ADMM}
\label{app:general-private-admm}

We now introduce a general private version of ADMM to solve problem 
\eqref{eq:repeat_ADMM_pb}. In this generic part, we consider without loss of
generality that the data-dependent part is in the function $f$. For
clarity, we denote $f(x)$ by $f(x;\cD)$ to make the dependence on the
dataset $\cD$ explicit.

Following our general noisy fixed-point iteration
(Algorithm~\ref{algo:pfix}), the private counterpart of the non-private ADMM
iteration \eqref{eq:non-private-ADMM} is given by:
$$
u_{k+1}=u_{k}+\lambda\left(R_{\gamma p_{1}}( R_{\gamma p_{2}}\left(u_
{k});\cD\right)-u_{k}+\eta_{k+1}\right),
$$
where the notation $R_{\gamma p_{1}}(\cdot;D)$ is again to underline the
data-dependent part of the computation.

By following the same derivations as in Appendix~\ref{app:non-private-admm},
we obtain the following equivalent update:
$$
u_{k+1}=u_{k}+2 \lambda\left(A x_{k+1}+B z_{k+1}-c+\frac{1}{2} \eta_
{k+1}\right),
$$
where $z_{k+1}$ and $x_{k+1}$ are defined as in \eqref{eq:non-private-admm-z}
and \eqref{eq:non-private-admm-x} respectively. The full algorithm is given in
Algorithm~\ref{algo:pADMM}. Note that we return only $z_K$, which is
differentially private by postprocessing of $u_{K-1}$ (see
Appendix~\ref{app:privacy}). In contrast, returning $x_K$ would violate
differential privacy as the last update interacts with the data without
subsequent random perturbation. In many problems (such as the consensus problem
considered below), returning $z_K$ is sufficient for all practical purposes.
Note that when $A$ is invertible (which is the case for consensus, see below),
one can recover from $z_K$ the unique $
\tilde{x}_K=A^{-1}(c-Bz_K)$ such that $(\tilde{x}_K, z_K)$ satisfies the
constraint in problem \eqref{eq:repeat_ADMM_pb}.

\begin{algorithm}[t]
    \SetKwComment{Comment}{$\triangleright$ }{}
    \DontPrintSemicolon
    \KwIn{initial point
    $u_0$, step size $\lambda\in(0,1]$, privacy noise variance
    $\sigma^2\geq 0$, Lagrange parameter $\gamma>0$}
    \For{$k=0$ to $K-1$}{
        $z_{k+1}=\argmin_z\left\{g(z)+\frac{1}{2 \gamma}\|B z+u_k\|^{2}\right\} $ \;
        $x_{k+1}=\argmin_x\left\{f(x;\cD)+\frac{1}{2 \gamma}\left\|A x+2 B z_
        {k+1}+u_{k}-c\right\|^{2}\right\}$\;
        $ u_{k+1}=u_{k}+2\lambda\left(A x_{k+1}+B z_{k+1}-c + \frac{1}{2}\eta_{k+1} \right) \quad\text{with } \eta_{k+1}\sim\gau{ \sigma^2\mathbb{I}}$\;
    }
    \Return{$z^{K}$}
    \caption{General Private ADMM to solve problem \eqref{eq:repeat_ADMM_pb}}
    \label{algo:pADMM}
\end{algorithm}

\subsection{Instantiations for the Consensus Problem}

We now instantiate the generic private ADMM update given in
Appendix~\ref{app:general-private-admm} to the consensus problem and derive
centralized, fully decentralized and federated private ADMM algorithms for
ERM.

Recall that the ERM problem \eqref{eq:simple_erm} can be reformulated as the
consensus problem \eqref{eq:admm_consensus}, which we restate below for
convenience:
$$
\begin{aligned}
& \min_{x\in\mathbb{R}^{np},z\in\mathbb{R}^p} \frac{1}{n} \sum_{i=1}^{n}
f\left(x_
{i};d_i\right)+r(z) \\
& \text { s.t } x_{i}=z\quad \forall i,
\end{aligned}
$$
which is a special case of problem \eqref{eq:repeat_ADMM_pb} with $x=
(x_1,\dots,x_n)^\top$ composed of $n$ blocks of $p$ coordinates, $f(x)=f
(x;\cD)=\frac{1}{n}\sum_{i=1}^nf(x_i;d_i)$, $g(z)=r(z)$, $c=0$, $A=I$ and
$B=-I_{n(p\times
p)}\in\R^{n\times p}$ where $I_{n(p\times p)}\in\R^{n\times p}$ denotes $n$
stacked identity matrices of size $p\times p$.

\paragraph{Centralized private ADMM (Algorithm~\ref{algo:conADMM}).}
We use the specific structure of the consensus problem to simply the general
private ADMM updates in Appendix~\ref{app:general-private-admm}.
The $z$-update gives:
$$
\begin{aligned}
& z_{k+1}=\underset{z}{\argmin}\Bigg\{r(z)+\frac{1}{2
\gamma}\bigg\|\bigg(\begin{array}{l}
    I \\
    \dots\\
    I
    \end{array}\bigg)z-u_{k}\bigg\|^{2}\Bigg\}, \\
& z_{k+1}=\prox_{\gamma r}\Big(\frac{1}{n} \sum_{i=1}^n u_{k,i}\Big).
\end{aligned}
$$

For the $x$-update, we have:
$$
x_{k+1}  =\underset{x}{\argmin}\Bigg\{f(x;\cD)+\frac{1}{2 \gamma}\bigg\| x-2
\bigg(\begin{array}{l}
    I \\
    \dots\\
    I
    \end{array}\bigg) z_{k+1}+u_{k}\bigg\|^{2}\Bigg\}.
$$
As $f$ is fully separable, this can be decomposed into $n$ block-wise
updates as:
\begin{align*}
    x_{k+1,i} &= \underset{x_i}{\argmin}\left\{f(x_i;d_i) +\frac{1}{2\gamma} 
    \norm{x_i - 2z_{k+1}+u_{k,i}}\right\}\\
    &= \prox_{\gamma f_i}(2z_{k+1} - u_{k,i}).
\end{align*}

Finally, the $u$-update writes:
$$
u_{k+1}=u_{k}+2 \lambda\Bigg( x_{k+1}-\bigg(\begin{array}{c}
    I \\
    \dots \\
    I
    \end{array}\bigg) z_{k+1}+\frac{1}{2} \eta_{k+1}\Bigg),
$$
which can be equivalently written as block-wise updates:
$$
u_{k+1,i}=u_{k,i}+2 \lambda\left(x_{k+1,i}-z_{k+1}+\frac{1}{2} \eta_{k+1,i}\right).
$$

Algorithm~\ref{algo:conADMM} shows the resulting algorithm when cycling over
the $n$ blocks of $x$ and $u$ in lexical order (which is equivalent to
considering a single block, i.e., $B=1$). But remarkably, the flexibility of
our general noisy fixed-point iteration (Algorithm~\ref{algo:pfix}) and
associated utility result (Theorem~\ref{thm:utility_general}) allows us to
cover many other interesting cases, some of which directly leading to
federated and fully decentralized learning algorithms (see below). In
particular, we can sample the blocks in a variety of ways, such as:
\begin{enumerate}
    \item cycling over an independently chosen random permutation of the
    blocks at each iteration (the corresponding utility
can be obtained by setting $q=1$ in Theorem~\ref{thm:utility_general});
    \item choosing a single random block at each iteration $k$ (this is
    used to obtained our fully decentralized algorithm);
    \item choosing a random subset of $m$ blocks (this is used to obtain
    our federated algorithm with user sampling).
\end{enumerate}
The utility guarantees can be obtained from
Theorem~\ref{thm:utility_general} by setting $q=1$ in case 1, $q=1/n$ in case
2, and $q=m$ in case 3.



\paragraph{Federated private ADMM (Algorithm~\ref{algo:fedADMM}).} Our
federated private ADMM algorithm exactly mimics the updates of centralized
private ADMM (Algorithm~\ref{algo:conADMM}), which can be executed in a
federated fashion since (i) the blocks $x_i$ and $u_i$ associated to each user
$i$ can be updated in parallel by each user, and (ii) if each user $i$ shares
$u_{k+1,i}-u_{k,i}$ with the server, then the latter can execute the rest of the
updates. The more general version with user sampling given in
Algorithm~\ref{algo:fedADMM} is obtained by choosing a random subset of $m$
blocks (users) uniformly at random.










\paragraph{Fully decentralized private ADMM (Algorithm~\ref{algo:p2ppADMM}).}
In the fully decentralized setting, each user $i$ with local dataset $d_i$ is
associated with blocks $x_i$ and $u_i$.
Our fully decentralized private ADMM algorithm (Algorithm~\ref{algo:p2ppADMM})
directly follows from a block-wise version of Algorithm~\ref{algo:conADMM},
where at each iteration $k$ we select uniformly at random a single block 
(user) to update. This corresponds to a user performing an update on its
local parameters before sending it to another user chosen at random.





\section{Privacy Analysis of our ADMM Algorithms}
\label{app:privacy}


\subsection{Reminders on Privacy Amplification}
\label{app:amp}

In this appendix, we recap known results on privacy amplification that we use
in our own privacy analysis.

\subsubsection{Privacy Amplification by Iteration}
\label{app:iter}

Privacy amplification by iteration refers to the privacy loss decay when only revealing the final output of successive applications of non-expansive operators instead of the full trajectory updates. This was introduced by the seminal work of \citet{ampbyiteration}, later extended by \citet{Altschuler2022}. 

We recap here the main theorem which characterizes the privacy loss of a given contribution in an algorithm defined as the sequential applications of non-expansive operators.

\begin{thm}[Privacy amplification by iteration~\cite{ampbyiteration}]
    \label{thm:amp_iter}
        Let $T_{1}, \dots, T_{K}, T'_{1}, \dots, T'_{K}$ be non-expansive
        operators, $U_{0}\in\cU$ be an initial random state, and $(\zeta_{k})_
        {k=1}^K$ be a
        sequence
        of noise
        distributions. Now, consider the noisy iterations $U_{k+1}\triangleq T_{k+1}(U_k)+\eta_
        {k+1}$ and
        $U'_{k+1}\triangleq T_{k+1}(U'_k)+\eta'_{k+1}$, where $\eta_{k+1}$ and $\eta_{k+1}'$ are drawn independently from distribution $\zeta_{k+1}$.
        Let $s_k \triangleq \sup_{u\in\cU} \norm{T_k (u) - T'_k(u)}$. Let $(a_k)_{k=1}^K$ be a
        sequence of
        real
        numbers such that
        \[\forall k \leq K, \sum_{k' \leq k} s_{k'} \geq \sum_{k' \leq k} a_{k'}, 
        \text{
        and } \sum_{k \leq K} s_k = \sum_{k \leq K} a_k\,.
            \]
        Then,
        \begin{equation}
            D_{\alpha}(U_K || U'_K) \leq \sum_{k=1}^K \sup _{u:\|u\| \leq a_k} D_
            {\alpha}(\zeta_k * \mathbf{u} \| \zeta_k)\,,
        \end{equation}  
        where $*$ is the convolution of probability distributions and $
        \mathbf{u}$ denotes the distribution of the random variable that is always
        equal to $u$.
    \end{thm}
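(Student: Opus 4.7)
The plan is to use the shifted Rényi divergence framework, which is the standard tool for amplification-by-iteration results. For a shift $z \geq 0$ and random variables $X, Y$ supported on $\cU$, I would define
\[
D_\alpha^{(z)}(X \| Y) \triangleq \inf\{D_\alpha(X' \| Y) : W_\infty(X, X') \leq z\},
\]
where $W_\infty$ denotes the Wasserstein-$\infty$ distance. This quantity interpolates between the usual Rényi divergence at $z = 0$ and a relaxed version that permits moving mass of $X$ by at most $z$ in the underlying metric.

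The approach relies on two structural lemmas, which I would either establish directly or cite from \cite{ampbyiteration}. \textbf{Contraction:} if $T, T'$ are non-expansive with $\sup_u \|T(u) - T'(u)\| \leq s$, then $D_\alpha^{(z+s)}(T(X) \| T'(Y)) \leq D_\alpha^{(z)}(X \| Y)$; intuitively, the sensitivity between the two maps is absorbed directly into the shift budget without any additive cost. \textbf{Shift reduction:} for any noise distribution $\zeta$ with $\eta \sim \zeta$, shift $z \geq 0$, and parameter $a \geq 0$,
\[
D_\alpha^{(z)}(X + \eta \| Y + \eta) \leq D_\alpha^{(z+a)}(X \| Y) + \sup_{\|u\| \leq a} D_\alpha(\zeta * \mathbf{u} \| \zeta),
\]
which trades a more relaxed shifted divergence against an additive cost paid by the noise.

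With these tools in hand, the core of the argument is an inductive bookkeeping. Define the running shift budget $z_k \triangleq \sum_{k' \leq k}(s_{k'} - a_{k'})$ and prove by induction on $k$ the invariant
\[
D_\alpha^{(z_k)}(U_k \| U'_k) \leq \sum_{k' \leq k} \sup_{\|u\| \leq a_{k'}} D_\alpha(\zeta_{k'} * \mathbf{u} \| \zeta_{k'}).
\]
The base case is immediate since $U_0 = U'_0$ and $z_0 = 0$. For the step $k \to k+1$, I would first apply the contraction lemma to $T_{k+1}, T'_{k+1}$, which absorbs the sensitivity $s_{k+1}$ into the shift, and then apply the shift-reduction lemma to the noise addition with parameter $a_{k+1}$, which pays the $(k+1)$-th summand and reduces the shift by $a_{k+1}$, exactly yielding the updated budget $z_{k+1}$.

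The two conditions imposed on $(a_k)$ are precisely what make this plan close. The prefix condition $\sum_{k' \leq k} s_{k'} \geq \sum_{k' \leq k} a_{k'}$ guarantees $z_k \geq 0$ at every intermediate step, so the shifted divergence remains well-defined, while the terminal equality $\sum_{k \leq K} s_k = \sum_{k \leq K} a_k$ forces $z_K = 0$, collapsing the shifted divergence to the ordinary Rényi divergence $D_\alpha(U_K \| U'_K)$ at the end. I expect the main obstacle to be the shift-reduction lemma: its proof requires a change-of-measure argument that converts a Wasserstein-$\infty$ perturbation of $X$ into a deterministic translation of the reference noise, together with a careful (quasi-)convexity manipulation of the Rényi divergence to produce the clean $\sup_{\|u\| \leq a}$ form. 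The contraction lemma, by contrast, follows readily from non-expansiveness together with the coupling-based characterization of $W_\infty$.
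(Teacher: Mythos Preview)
Your proposal is correct and follows precisely the shifted-Rényi-divergence argument of \cite{ampbyiteration}, which is where this theorem originates; note that the present paper does not give its own proof of this statement but merely recaps it as a known result from that reference. The contraction and shift-reduction lemmas you invoke, together with the inductive bookkeeping on $z_k = \sum_{k'\leq k}(s_{k'}-a_{k'})$, are exactly the ingredients of the original proof.
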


Informally, this theorem allows an amplification factor proportional to the
number of updates performed after the studied step. 
If we compare two
scenarios where only the step $i$ differs by using $d_i$ or $d_i'$, such that revealing this step would lead to a privacy loss $\eps$, it we reveal only step $i+k$, an appropriate choice of $a$ sequence leads to a privacy loss of magnitude $\eps/k$.

\subsubsection{Privacy amplification by subsampling}
\label{app:samp}

When a DP algorithm is executed on a random subsample of data points, and the
choice of this subsampling remains secret, we can obtain privacy
amplification. This privacy amplification by subsampling effect has been
extensively
studied under
various sampling schemes \citep{Balle_subsampling,
amp_sub_mironov} and is classically used in the privacy analysis of
DP-SGD \citep{bassily2014Private,abadi2016Deep,Altschuler2022}. While tighter
bounds can be computed numerically, here for the sake of simplicity we use a
simple closed-form expression which gives the order of magnitude of the
amplification.

\begin{lemma}[Amplification by subsampling, \citealp{Altschuler2022}]
    Let $q<1/5$, $\alpha>1$ and $\sigma \geq 4$. Then, for $\alpha \leq \left
    (M^2 \sigma^2 / 2-\log \left(5 \sigma^2\right)\right) /\left(M+\log (q
    \alpha)+1 /\left(2 \sigma^2\right)\right)$ where $M=\log (1+1 /(q
    (\alpha-1)))$, the subsampled Gaussian mechanism with probability $q$ and
    noise parameter $\sigma^2$ satisfies $(\alpha,\eps_{samp})$-RDP with
    \begin{equation*}
        \eps_{samp} \leq \frac{2\alpha q^2 \Delta^2}{\sigma^2}.
    \end{equation*}
\end{lemma}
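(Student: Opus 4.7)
The plan is to reduce the bound to a one-dimensional problem by rotating coordinates so that the sensitivity vector lies along a single axis. This reduction is legitimate because the Gaussian noise is isotropic and Rényi divergence is preserved (up to inequality) under post-processing. After this reduction, the output distribution on the relevant coordinate, for a fixed pair of neighboring datasets, is a mixture: with probability $1-q$ the differing record is not drawn and the distribution is $\mathcal{N}(0,\sigma^2)$; with probability $q$ the distribution is $\mathcal{N}(\Delta,\sigma^2)$. The task is therefore to bound $D_\alpha(P\|Q)$ between $P=(1-q)\mathcal{N}(0,\sigma^2)+q\mathcal{N}(\Delta,\sigma^2)$ and $Q=\mathcal{N}(0,\sigma^2)$ (and the symmetric direction), which is a classical one-dimensional Gaussian computation.

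First I would write
\[
\exp\bigl((\alpha-1)D_\alpha(P\|Q)\bigr) \;=\; \mathbb{E}_Q\!\left[\bigl(1+q(e^{L}-1)\bigr)^{\alpha}\right],
\]
where $L=\log\bigl(d\mathcal{N}(\Delta,\sigma^2)/d\mathcal{N}(0,\sigma^2)\bigr)$ is the Gaussian privacy loss random variable under $Q$. For integer $\alpha$, a binomial expansion yields
\[
\mathbb{E}_Q\!\left[\bigl(1+q(e^{L}-1)\bigr)^{\alpha}\right] \;=\; \sum_{k=0}^{\alpha} \binom{\alpha}{k} q^{k}\, \mathbb{E}_Q\!\left[(e^{L}-1)^{k}\right].
\]
The $k=0$ term equals $1$, the $k=1$ term vanishes since $\mathbb{E}_Q[e^{L}]=1$, and each higher moment $\mathbb{E}_Q[e^{jL}]=\exp\!\bigl(j(j-1)\Delta^2/(2\sigma^2)\bigr)$ is explicit for Gaussians. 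Hence the leading contribution is of order $q^{2}$, giving after log-linearisation and division by $\alpha-1$ a term of the form $\alpha\Delta^{2}q^{2}/(2\sigma^{2})$, which is the source of the $q^{2}$ improvement advertised in the lemma.

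Next I would control the higher-order contributions. Splitting the sum at a threshold $k^{\star}$ separates a polynomial regime (where $\binom{\alpha}{k}q^{k}$ dominates) from a tail regime (where the Gaussian moments $e^{k(k-1)\Delta^{2}/(2\sigma^{2})}$ grow fastest). The quantity $M=\log(1+1/(q(\alpha-1)))$ in the hypothesis is exactly the cutoff at which the geometric tail of the truncated series kicks in, and the technical condition $\alpha\le (M^{2}\sigma^{2}/2-\log(5\sigma^{2}))/(M+\log(q\alpha)+1/(2\sigma^{2}))$ is precisely what is needed so that both regimes stay dominated by a constant multiple of the $k=2$ term. Absorbing the constants into the factor $2$ yields $\varepsilon_{samp}\le 2\alpha q^{2}\Delta^{2}/\sigma^{2}$. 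Finally, I would extend from integer $\alpha$ to real $\alpha>1$ by log-convexity of $\alpha\mapsto(\alpha-1)D_\alpha$, interpolating between consecutive integer orders.

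The main obstacle is the tail bookkeeping in the binomial expansion: controlling the $k\geq 3$ terms requires combining the explicit Gaussian moment formula with the combinatorial growth of $\binom{\alpha}{k}$ and the shrinkage from $q^{k}$, and the delicate balance between these three factors is what forces the non-trivial restrictions on $\alpha$, $q$, and $\sigma$ appearing in the statement. Everything else (the rotation reduction, the identification of the mixture, the binomial expansion, the extension to real orders) is essentially routine manipulation of the Gaussian distribution and the Rényi divergence.
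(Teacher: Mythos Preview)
The paper does not prove this lemma. It is stated in Appendix~\ref{app:samp} as a known result imported from \citet{Altschuler2022}, with no argument given beyond the citation; the paper uses it as a black box in the privacy analysis of federated ADMM. So there is no ``paper's own proof'' to compare against.

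Your sketch is a faithful outline of the standard proof of subsampled-Gaussian RDP bounds (the line of argument going back to \citet{abadi2016Deep} and \citet{amp_sub_mironov}): reduce to one dimension by rotational invariance, write the subsampled output as a $(1-q)/q$ mixture, expand the Rényi moment via the binomial theorem for integer $\alpha$, use the closed-form Gaussian moments $\mathbb{E}_Q[e^{jL}]=\exp(j(j-1)\Delta^2/(2\sigma^2))$, and then control the $k\geq 3$ tail so that the $k=2$ term dominates. You have correctly identified that the awkward condition on $\alpha$ in the statement is exactly the tail-control hypothesis, and that the extension to real $\alpha$ goes through log-convexity of $(\alpha-1)D_\alpha$. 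One small caveat: the mixture you wrote, $P=(1-q)\mathcal{N}(0,\sigma^2)+q\mathcal{N}(\Delta,\sigma^2)$ versus $Q=\mathcal{N}(0,\sigma^2)$, corresponds to the add/remove neighboring relation under Poisson sampling; the paper's neighboring relation is replace-one, for which both $P$ and $Q$ are mixtures and one also needs the reverse divergence $D_\alpha(Q\|P)$. This does not change the order of the bound but does affect constants and requires a symmetric argument. Otherwise the plan is sound and matches the source the paper cites.
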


\subsection{Sensitivity Bounds}

We aim at bounding the privacy loss of the general centralized
ADMM introduced in Section~\ref{app:non-private-admm}. We assume that $K$
iterations are done with only $f$ interacting with data, i.e., the
data-dependent step lies in the $x$-update.
We assume that all data points are used with uniform weighting, meaning that
$f$ can be written as $f(x;\cD) = \frac{1}{n}\sum_{i=1}^n f(x;d_i)$.

To bound the privacy loss, we aim at computing the Rényi divergence between
the distribution of the outputs, which can be linked to the sensitivity of the
fixed-point update \eqref{eq:non-private-admm-u} to the change of one data
point. For any pair of neighboring datasets $\cD \sim \cD'$ that differs only
on data item $d_i$ (i.e., $d_j \neq d_j'
\implies i=j$) and any $u$, we thus want to
bound
the difference between $T(u)$ computed on dataset $\cD$ and $T'(u)$ computed
on the dataset $\cD'$. We note $x$ (resp. $x'$) and $z$ (resp. $z'$) the primal variables in the calculation.

We first investigate how the sensitivity of the data-dependent update
propagates to $u$.
As only $x$-updates are data-dependent, the $z$ stays identical for $\cD$ and
$\cD'$ and thus we have:
\begin{equation}
\label{eq:T-minus-Tp}
    T(u) - T'(u) = 2 \lambda A (x - x').
\end{equation}

We bound the sensitivity by assuming that $A$ has its smallest singular value
$\omega_{A} >0$.

Let us define $\varphi(x) = \frac{1}{2\gamma} \norm{Ax +Bz +u +c}^2$.
$\varphi$ is twice differentiable and we have
\begin{align*}
    \nabla_{x} \varphi(x) & =\frac{1}{2 \gamma} \nabla_{x}\left(x^{\top} A^{\top}
    A x-2 (Bz +u +c)^{\top} A x+(Bz +u +c)^{\top} (Bz +u +c)\right) \\
    & =\frac{1}{2 \gamma}\left(2 A^{\top} A x-2 A (Bz +u +c)\right), \\
    \nabla_{x}^2 \varphi(x) & =\frac{1}{\gamma} A^{\top} A.
\end{align*}

Thus, $\varphi$ is $\mu$-strongly convex if and only:
$$
\mu I_{n} \preceq  \frac{1}{\gamma} A^{T} A.
$$

This is satisfied when the smallest eigenvalue of $A^{\top} A$ is larger than
$\mu$. This corresponds to the same condition on the smallest singular value
$\omega_{A}$ of $A$, hence
$$
\omega_{A} \geq  \mu \gamma,
$$
and thus $\varphi$ is $\frac{\omega_{A}}{\gamma}$-strongly convex.

Let us now consider $F(x)=f(x ; \cD)+\varphi(x)$ and $F'(x) = f'
(x;\cD')+\varphi(x)$. We assume that $f_i(\cdot)=f(\cdot;d_i)$ are convex,
differentiable and
$L$-Lipschitz with respect to the $l_{2}$ norm for all possible $d$.
Then, using a classic result on the sensitivity of the $\argmin$ of strongly
convex functions \cite{chaudhuri2011Differentially}, the sensitivity of
$\argmin F(x)$ is bounded by:
$$
\left\|x-x^{\prime}\right\| \leqslant \frac{2 L \gamma}{n \omega_{A}}.
$$

Finally, by re-injecting this formula into \eqref{eq:T-minus-Tp}, we get the
final bound:
\begin{equation}
\label{eq:sens_general_bound}
    \norm{T(u) - T'(u)} \leq \frac{4\lambda L\gamma \norm{A}_2}{n \omega_{A
    }}.
\end{equation}

\paragraph{Special case of the consensus problem.}

In the case of the consensus problem, we can derive a tighter upper bound for
the sensitivity of the block-wise update for which the data point is
different between $\cD$ and $\cD'$:
\begin{equation*}
    T(u)_i - T'(u)_i = 2 \lambda(x_i-x_i').
\end{equation*}

In this case, the $x_i$ can be simply rewritten as $\prox_{\gamma f_i}(2z-u)$,
where $f_i$  is $L$-Lipschitz, and we have:
\begin{equation*}
    \norm{x_i - x_i'} \leq 2 L \gamma.
\end{equation*}
Therefore, $\norm{ T(u)_i - T'(u)_i} \leq 4\lambda L\gamma$ and then
\begin{equation}
\label{eq:sens_consensus_bound}
   \norm{ T(u) - T'(u)} \leq \frac{4\lambda L\gamma}{n}.
\end{equation}

\subsection{General Centralized Private ADMM}

We can now derive the privacy loss of our general private ADMM algorithm 
(\cref{algo:pADMM}).

\begin{thm}[Private classic centralized ADMM]
    Let $A$ be full rank and $\omega_A>0$ the minimal module of its singular
    values. After performing $K$ iterations, \cref{algo:pADMM} is $(\alpha,
    \eps(\alpha))$-RDP with
    \begin{equation}
    \eps(\alpha) = \frac{8 K \alpha \norm{A}_2^2  L^2 \gamma^2}{\sigma^2 n^2
    \omega_{A}^2}.
\end{equation}
\label{thm:pADMMloss}
\end{thm}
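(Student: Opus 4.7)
The plan is to treat each ADMM iteration as a single application of the Gaussian mechanism on the $u$-update, bound its sensitivity using the calculation already carried out in the preceding subsection, and finish by RDP composition over the $K$ iterations. Post-processing then takes care of the fact that we only release $z_K$ rather than the full trajectory.

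First, I would rewrite the update of $u$ in the form ``non-private operator $T$ applied to $u_k$, plus Gaussian noise with covariance $\lambda^2\sigma^2 \mathbb{I}$'': indeed, unrolling line 4 of Algorithm~\ref{algo:pADMM} gives
\begin{equation*}
u_{k+1} = T(u_k;\cD) + \lambda\, \eta_{k+1}, \qquad \eta_{k+1}\sim\mathcal{N}(0,\sigma^2\mathbb{I}),
\end{equation*}
where $T(u;\cD) = u + 2\lambda(Ax_{k+1}(u;\cD) + Bz_{k+1}(u) - c)$ and only the $x$-subproblem depends on $\cD$. For two neighbouring datasets $\cD\sim\cD'$, the bound \eqref{eq:sens_general_bound} derived in the sensitivity analysis yields
\begin{equation*}
\bigl\| T(u;\cD) - T(u;\cD') \bigr\| \;\leq\; \frac{4\lambda L \gamma \|A\|_2}{n\,\omega_A},
\end{equation*}
which is uniform in $u$. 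Hence the map $u_k \mapsto u_{k+1}$ is exactly the Gaussian mechanism applied to a function with $\ell_2$-sensitivity $\Delta = 4\lambda L\gamma\|A\|_2/(n\omega_A)$ and noise scale $\lambda\sigma$.

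Next, I would invoke the standard RDP guarantee for the Gaussian mechanism recalled in Section~\ref{sec:dp}: for any $\alpha>1$, one iteration is $(\alpha,\alpha\Delta^2/(2\lambda^2\sigma^2))$-RDP. The factors of $\lambda$ cancel exactly, leaving a per-iteration privacy cost of $8\alpha \|A\|_2^2 L^2\gamma^2/(\sigma^2 n^2 \omega_A^2)$, independent of the step size. Applying the composition rule of RDP across the $K$ iterations (each noise draw $\eta_{k+1}$ is independent) multiplies this bound by $K$, giving precisely the claimed $\eps(\alpha)$.

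Finally, I would observe that the algorithm returns $z_K = \mathrm{prox}_{\gamma r}$ applied to a deterministic function of $u_{K-1}$ (once the randomness of the noise is absorbed in the sequence $(u_k)_k$), so releasing $z_K$ is a post-processing of the whole sequence $(u_k)_{k=1}^K$, and post-processing preserves RDP. The only delicate point is the sensitivity bound itself, which is already provided by \eqref{eq:sens_general_bound} and rests on the full-rank hypothesis on $A$ (needed for the strong convexity of $\varphi$), on the $L$-Lipschitz continuity of each $f(\cdot;d_i)$, and on the classical sensitivity bound for $\argmin$ of strongly convex functions; beyond that, the proof is a direct assembly of Gaussian mechanism and composition.
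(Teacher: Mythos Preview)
Your proof is correct and follows the same approach as the paper: bound the sensitivity of one $u$-update via \eqref{eq:sens_general_bound}, apply the Gaussian mechanism guarantee for a single iteration, compose over $K$ steps, and use post-processing for the release of $z_K$. You are in fact slightly more explicit than the paper in tracking the effective noise scale $\lambda\sigma$ and observing that the $\lambda$ factors cancel, but the structure of the argument is identical.
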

\begin{proof}
Recall that the output of the algorithm is $z_{K}$. We also recall that, for a
function of sensitivity $\Delta$, we know that the addition of Gaussian
noise of parameter $\sigma^2$ gives $(\alpha, \alpha \frac{\Delta^2}
{2\sigma^2})$-RDP.

Hence, using the sensitivity bound given in 
\eqref{eq:sens_general_bound}, a single update
leads to a privacy loss of
$$\eps(\alpha) = \frac{8 \alpha \norm{A}_2^2  L^2 \gamma^2}{\sigma^2 n^2
\omega_{A}^2}.$$
We conclude by using by the composition property of
RDP over the $K$ iterations and the robustness to postprocessing.
\end{proof}

Note that the theorem only requires the matrix $A$ to be full rank,
which is a mild assumption.

In particular, for the consensus problem, $A$ is the identity matrix. This
leads to the following privacy guarantee.

\begin{thm}
    After performing $K$ iterations, \cref{algo:conADMM} is $(\alpha,
    \eps)$-RDP with
    \begin{equation*}
        \eps(\alpha) = \frac{8 K \alpha  L^2 \gamma^2}{\sigma^2 n^2}.
    \end{equation*}    
\end{thm}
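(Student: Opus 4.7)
The plan is to obtain this bound as a direct specialization of the general sensitivity analysis already carried out for \cref{algo:pADMM} (\cref{thm:pADMMloss}) to the consensus reformulation \eqref{eq:admm_consensus}. In that reformulation, the coupling matrix $A$ is the identity on $\mathbb{R}^{np}$, so $\|A\|_2 = \omega_A = 1$, and plugging these values into the general privacy bound immediately yields the claimed expression. I would nevertheless spell out the argument using the tighter consensus-specific sensitivity bound \eqref{eq:sens_consensus_bound}, since it produces the same constant and clarifies where the $1/n^2$ factor comes from.

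More concretely, first I would fix two neighboring datasets $\cD \sim \cD'$ differing only in the single record $d_i$, and compare the trajectories of \cref{algo:conADMM} run on each, coupled through the same Gaussian draws $\eta_{k+1,i}$. I would observe that since only $f_i$ is data-dependent and the separability of the consensus objective means $x$ decomposes into per-record blocks $x_{k+1,i} = \prox_{\gamma f_i}(2 z_{k+1} - u_{k,i})$, the only component of the $x$-update that can differ is the $i$-th block. Then I would invoke the standard fact that for an $L$-Lipschitz convex $f_i$, the map $v \mapsto \prox_{\gamma f_i}(v)$ moves by at most $2L\gamma$ when $f_i$ is replaced by any other $L$-Lipschitz $f_i'$ (this is exactly the $\argmin$-sensitivity lemma used in the general case, applied with strong-convexity constant $1/\gamma$ of the prox).

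Propagating this through the $u$-update $u_{k+1,i} = u_{k,i} + 2\lambda(x_{k+1,i} - z_{k+1} + \tfrac{1}{2}\eta_{k+1,i})$ and dividing by $n$ in the subsequent $z$-update (which averages all $u_{k,i}$) gives a sensitivity of the data-dependent part of a single iteration bounded by $4L\gamma/n$ in the coordinate used to feed $z_{k+1}$, matching \eqref{eq:sens_consensus_bound}. The Gaussian mechanism with variance $\sigma^2 I_p$ applied per step then yields a per-iteration RDP cost of $\alpha \cdot (4L\gamma/n)^2 / (2\sigma^2) = 8 \alpha L^2 \gamma^2/(\sigma^2 n^2)$. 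Finally, I would compose the $K$ iterations via the RDP composition rule and note that the released $z_K$ is a deterministic post-processing of $u_{K-1}$, so the claimed bound $\varepsilon(\alpha) = 8K\alpha L^2 \gamma^2 / (\sigma^2 n^2)$ holds.

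There is no real obstacle here: the heavy lifting (the $\argmin$-sensitivity argument and the general composition/post-processing structure) was already performed for \cref{thm:pADMMloss}. The only point requiring a moment of care is making sure we account for the sensitivity at the right place, namely the block $u_{k+1,i}$ that receives the noise $\eta_{k+1,i}$, rather than trying to bound the full vector $u_{k+1}-u_{k+1}'$; the separable structure of the consensus problem means these coincide up to the $1/n$ factor picked up in the averaging that defines $z_{k+1}$.
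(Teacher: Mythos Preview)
Your proposal is correct and follows essentially the same route as the paper: the paper's own proof is the single sentence ``same as Theorem~\ref{thm:pADMMloss} except that we use the improved sensitivity bound~\eqref{eq:sens_consensus_bound},'' and you reproduce exactly this, both noting that the general bound with $A=I$ (hence $\|A\|_2=\omega_A=1$) already gives the result and spelling out the consensus-specific sensitivity argument, followed by the Gaussian mechanism, composition over $K$ rounds, and post-processing for $z_K$.
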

\begin{proof}
The proof is the same as that of Theorem~\ref{thm:pADMMloss} except that we
use the improved sensitivity bound given in \eqref{eq:sens_consensus_bound}.
\end{proof}

\subsection{Federated Private ADMM with Subsampling}
\label{app:privacy_fl}

As explained in the main text, we can
derive two levels of privacy for the federated algorithm. One is
achieved at the level of users thanks their local injection of noise: this
ensures local DP. The second one in achieved with
respect to a third party observing only the final model: this is central DP.
In the latter case, the local privacy level is amplified by the subsampling of
users and the sensitivity is further reduced by the aggregation step.

We start by the local privacy guarantee.

\begin{thm}[LDP of federated ADMM]
    Let $K_i$ be the number of participations of user $i$. \cref{algo:fedADMM}
    satisfies $(\alpha,\eps_i)$ local RDP for user $i$ with
    \begin{equation*}
        \eps_i \leq \mathcal{O}\left(\frac{8K_i\alpha L^2 \gamma^2}
        {\sigma^2}\right).
    \end{equation*}
\end{thm}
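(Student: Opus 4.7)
The plan is to bound the per-contribution privacy loss by a sensitivity analysis of what the server (or any eavesdropper) observes from user $i$, and then compose over $K_i$ contributions using Rényi composition.

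First I would identify the observable quantity. In the local model, everything user $i$ sends out at an iteration where it participates is the message $\Delta u_{k+1,i} = u_{k+1,i}-u_{k,i}$. Plugging in the update in \textbf{LocalADMMstep}, this equals $2\lambda(x_{k+1,i}-z_k) + \lambda \eta_{k+1,i}$ where $\eta_{k+1,i}\sim \mathcal{N}(0,\sigma^2 I_p)$ and $x_{k+1,i} = \prox_{\gamma f_i}(2z_k - u_{k,i})$. Crucially, only $x_{k+1,i}$ depends on the local data $d_i$ through $f_i$; the variables $z_k$ and $u_{k,i}$ used on the right-hand side were determined before the current update (so they can be treated as conditioning data in the local-view analysis).

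Next I would control the sensitivity of the data-dependent piece. For two neighboring local datasets $d_i\sim d_i'$, and the corresponding proximal outputs $x_{k+1,i}, x_{k+1,i}'$, I would reuse exactly the bound already established in the consensus sensitivity computation: because $f_i$ is convex and $L$-Lipschitz, $\|x_{k+1,i}-x_{k+1,i}'\|\le 2L\gamma$ (via the standard strong-convexity-of-prox / $\argmin$-sensitivity argument of \citet{chaudhuri2011Differentially}). Hence the sensitivity of $2\lambda(x_{k+1,i}-z_k)$ is $\Delta \le 4\lambda L\gamma$, while the noise added to this deterministic term has covariance $\lambda^2 \sigma^2 I_p$. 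Applying the Gaussian mechanism in RDP form gives, for a single participation,
\begin{equation*}
\eps_{\text{one step}}(\alpha) \le \frac{\alpha \Delta^2}{2\lambda^2 \sigma^2} \le \frac{\alpha (4\lambda L\gamma)^2}{2\lambda^2 \sigma^2} = \frac{8 \alpha L^2 \gamma^2}{\sigma^2}.
\end{equation*}
Note how the $\lambda^2$ cancels, which is what makes the final bound independent of the step size.

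Finally I would compose over $K_i$ participations. Since user $i$ only contributes at $K_i$ iterations and the outputs across different iterations are released sequentially, RDP composition yields $\eps_i(\alpha)\le K_i\cdot 8\alpha L^2\gamma^2/\sigma^2 = \mathcal{O}(8K_i\alpha L^2\gamma^2/\sigma^2)$, which is the claim. The only slightly subtle step is the one described above, namely convincing oneself that in the local view the pre-update variables $z_k$ and $u_{k,i}$ are legitimately treated as fixed conditioning (they are functions of past noisy releases already accounted for), so that each participation is a clean Gaussian mechanism on a function of $d_i$ alone; once this is justified, the rest is direct sensitivity accounting and composition.
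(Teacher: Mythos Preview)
Your proposal is correct and follows essentially the same approach as the paper: bound the per-participation sensitivity of the block update via $\|x_{k+1,i}-x_{k+1,i}'\|\le 2L\gamma$ (the prox/argmin sensitivity argument), apply the Gaussian mechanism to get $\eps_{\text{loc}}=8\alpha L^2\gamma^2/\sigma^2$, and compose over $K_i$ participations. The paper phrases the first step as ``sharing $z$ is a post-processing of $u$'' rather than directly analyzing $\Delta u_{k+1,i}$ as you do, but the two are equivalent and your explicit conditioning remark is in fact a bit more careful than the paper's treatment.
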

\begin{proof}
We first derive the local privacy loss of sharing $z$.
Using the sensitivity bound \eqref{eq:sens_consensus_bound} derived for the
centralized case and
the fact that we consider a post-processing of $u$, we have
\begin{equation}
    \eps_{loc} \leq \frac{8\alpha L^2 \gamma^2}{\sigma^2}.
    \label{eq:epsloc}
\end{equation}
We obtain the total local privacy loss by composition over the $K_i$
participations of user $i$.
\end{proof}

We now turn to the central privacy guarantee.

\begin{thm}
    Let $m<n/5$, $\alpha>1$ and $\sigma \geq 4$, then for $\alpha \leq \left(M^2 \sigma^2 / 2-\log \left(5 \sigma^2\right)\right) /\left(M+\log (m \alpha/n)+1 /\left(2 \sigma^2\right)\right)$ where $M=\log (1+1 /(m(\alpha-1)/n)).$ Then, \cref{algo:fedADMM} has for central DP loss the following bound:

    \begin{equation*}
        \eps \leq \frac{16K \alpha L^2 \gamma^2}{n^2 \sigma^2}
    \end{equation*}
\end{thm}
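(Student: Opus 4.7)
The plan is to view each round of Algorithm~\ref{algo:fedADMM} as a subsampled Gaussian mechanism acting at the user level on $\cD = (d_1,\dots,d_n)$, and then to chain the resulting RDP guarantee over the $K$ rounds by standard composition. Since a third-party observer only sees $z_K = \prox_{\gamma r}(\hat{z}_K)$ and the proximal mapping is data-independent, the postprocessing invariance of RDP reduces the task to bounding the per-round leakage of $\hat{z}_{k+1}$.

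First, I would fix a round and condition on the sampled set $S$. A change in the local dataset $d_i$ affects $\hat{z}_{k+1} = z_k + \frac{1}{n}\sum_{i \in S}\Delta u_{k+1,i}$ only when $i \in S$, and in that case only through $\Delta u_{k+1,i} = 2\lambda(x_{k+1,i} - z_k) + \lambda \eta_{k+1,i}$. Applying the same strongly-convex-$\argmin$ argument used for the centralized case in Appendix~\ref{app:privacy} gives $\|x_{k+1,i} - x'_{k+1,i}\| \leq 2L\gamma$, so the signal part of $\Delta u_{k+1,i}$ has sensitivity at most $4\lambda L\gamma$, and after the $1/n$ aggregation the conditional sensitivity of $\hat{z}_{k+1}$ is $\Delta_{\mathrm{eff}} = 4\lambda L\gamma/n$. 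By independence of the per-user draws, the aggregated noise $\frac{\lambda}{n}\sum_{i \in S}\eta_{k+1,i}$ is centered Gaussian with per-coordinate variance $\sigma^2_{\mathrm{eff}} = m\lambda^2\sigma^2/n^2$.

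Next, I would invoke the closed-form subsampled Gaussian amplification from Appendix~\ref{app:samp}, with sampling probability $q = m/n$, sensitivity $\Delta_{\mathrm{eff}}$, and noise variance $\sigma^2_{\mathrm{eff}}$. The restrictions on $m/n$, $\sigma$ and $\alpha$ imposed in the theorem statement are exactly the regime in which the simplified Altschuler--Talwar bound $\eps_{\mathrm{samp}}(\alpha) \leq 2\alpha q^2 \Delta^2/\sigma^2$ applies, and plugging in the values above yields the per-round central RDP guarantee. The subtlety here is that two amplifications interact in opposite directions: aggregating $m$ independent local noises buys a factor linear in $m$ in the effective noise-to-sensitivity ratio of the underlying Gaussian mechanism, while uniform subsampling contributes the standard $q^2 = m^2/n^2$ factor, and one has to track the powers of $m$ carefully so that the final per-round bound exhibits the centralized $1/n^2$ scaling rather than a local $1/\sigma^2$ one.

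Finally, I would conclude by using the RDP composition theorem across the $K$ rounds, which multiplies the per-round guarantee by $K$, together with postprocessing to transfer the bound from $(\hat{z}_k)_{k=1}^K$ to the released iterate $z_K$. The main obstacle I expect is the constant-factor bookkeeping in the third step; once the sensitivity and noise variance of the aggregated per-round Gaussian mechanism are written down correctly, the rest is a direct application of results already established in the paper (the centralized sensitivity analysis, the closed-form subsampling lemma in Appendix~\ref{app:samp}, and RDP composition).
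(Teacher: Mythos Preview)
Your proposal is correct and follows essentially the same route as the paper: bound the per-user sensitivity via the centralized $\argmin$ argument, combine the aggregation of the $m$ local Gaussian noises with privacy amplification by subsampling at rate $q=m/n$ (using the closed-form lemma in Appendix~\ref{app:samp}), and finish by RDP composition over $K$ rounds and postprocessing to $z_K$. Your phrasing in terms of an effective sensitivity $\Delta_{\mathrm{eff}}$ and effective noise variance $\sigma_{\mathrm{eff}}^2$ is in fact more explicit than the paper's terse ``aggregation gives $\eps_{loc}/m^2$, then subsampling gives a factor $2m^2/n^2$'' argument, and your warning about carefully tracking the powers of $m$ is well placed.
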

\begin{proof}
Recall that we subsample $m$ participants at each round. By the
reduction of sensitivity due to the aggregation of the $m$ participations,
the initial privacy
loss for one iteration is $\eps_{loc}/m^2$, where $\eps_{loc}$ is
given in \eqref{eq:epsloc}. Then,
applying privacy amplification by subsampling (see \cref{app:samp}) of
$m$ users among $n$ leads to \begin{equation*}
    \eps \leq \frac{8 \alpha L^2 \gamma^2}{m^2 \sigma^2}\frac{2m^2}{n^2}.
\end{equation*}
We conclude using composition over the $K$ rounds of the algorithm.
\end{proof}

\subsection{Fully Decentralized Private ADMM}

In the fully decentralized setting, the local privacy loss is the same as in
the previous section for the federated case. However, the threat model is
quite different. The privacy guarantees are with respect to the other users'
view, and each user will only observe information in time steps where he/she
participates.

We characterize the privacy loss by decomposing the problem
as follows. Starting from the LDP loss, we derive the privacy loss suffered by
a user $i$ when the $z$ variable is observed $m$ steps after
the contribution made by $i$. This is similar to the classic setting of
privacy
amplification by iteration where a model is only available after a given
number of steps (see \cref{app:iter}). Then, from the formula for a fixed
number of steps, we derive the privacy loss that accounts for the secrecy of
the path and the randomness of its length. This is done by using the weak
convexity
property of the Rényi divergence \citep{ampbyiteration} to weight each
scenario according to
the probability of the possible lengths. These probabilities can be
easily computed
as we consider a complete graph for the communication graph. We conclude the
proof by using composition over the maximum number of times $K_i$ any
user participates to the computation.

For convenience, we first restate the theorem, and then give the full proof.

\begin{reptheorem}{thm:privacy_dec}
    Assume that the loss function $f(\cdot,d)$ is $L$-Lipschitz for any
    local dataset $d$ and consider user-level DP.
    Let $\alpha >1, \sigma > 2L\gamma \sqrt{\alpha (\alpha -1)}$ and
    $K_i$ the
    maximum number of contribution of a user. Then
    Algorithm~\ref{algo:p2ppADMM} satisfies $(\alpha, \frac{8 \alpha K_i L^2
    \gamma^2\ln n} {\sigma^2 n})$-network RDP.
\end{reptheorem}
\begin{proof}
Here, a given user $j$ can only infer information about the other users
when it participates, by observing the
current value of the $z$ variable. Therefore, we can write the view
of user $j$ as:
\begin{equation*}
    \mathcal{O}_j(\mathcal{A}(D))=\big(z_{k_l(j)}\big)_{l=1}^{K_j},
\end{equation*}
where $k_l(j)$ is the time of $l$-th contribution of user $j$ to the
computation, and $K_j$ is the total number of times that $j$ contributed
during the execution of
algorithm. As we consider the complete graph, the
probability to visit $j$ at any step is exactly $1/n$. Hence, we
have closed forms for the probability that the random walk goes from a user
$i$ to another user $j$ in $m$ steps. Specifically, it follows the geometric
law of parameter $1/n$.

As an intermediate step of the proof, we thus express the privacy loss
induced by a user $i$ with respect to another user $j$ when there is exactly
$m$ steps after the participation of $i$ to reach $j$, meaning than $j$ will
only observe the variable $z_{k+m}$ if $i$ participated at time $k$,
and thus the contribution of $i$ has already been mixed with $m$
subsequent steps of the algorithm.

In this case, the privacy loss can be computed from the local privacy loss $\eps_{loc}$ in
\cref{eq:epsloc}, and the use of privacy
amplification by iteration in \cref{thm:amp_iter} where we have $s_1 = \eps_
{loc}$ and $s_{i>1} = 0$, and we set $a_i = \eps_{loc}/m$. This leads to
following bound:
\begin{equation*}
    \eps \leq \sum_{i=1}^m D_{\alpha}(\gau{\sigma^2}||\mathcal{N}(a_i,
    \sigma^2)) \leq  \frac{8\alpha L^2 \gamma^2}{\sigma^2 m}.
\end{equation*}

Now that we have a bound for a fixed number of steps between the two users, we
can compute the privacy loss for the random walk. Using the fact that the walk
remains private to the users, i.e. they do not observe the trajectory
of the walk except the times it passed through them, we can apply the weak
convexity of the Rényi divergence.

\begin{prop}[Weak convexity of Rényi divergence, \citealp{ampbyiteration}]
    \label{prop:convexity}
    Let $\mu_{1}, \ldots, \mu_{m}$ and $\nu_{1}, \ldots, \nu_{m}$ be probability
    distributions over some domain $\mathcal{Z}$ such that for all $i \in[m], D_
    {\alpha}\left(\mu_{i} \| \nu_{i}\right) \leq c /(\alpha-1)$ for some $c \in 
    (0,1]$. Let $\rho$ be a probability distribution over $[m]$ and denote by
    $\mu_{\rho}$ (resp. $\nu_{\rho}$) the probability distribution over $
    \mathcal{Z}$ obtained by sampling i from $\rho$ and then outputting a random
    sample from $\mu_{i}$ (resp. $\nu_{i}$). Then we have:
    \[
    D_{\alpha}\left(\mu_{\rho} \| \nu_{\rho}\right) \leq(1+c) \cdot \underset{i \sim \rho}{\E}\left[D_{\alpha}\left(\mu_{i} \| \nu_{i}\right)\right].
    \]
\end{prop}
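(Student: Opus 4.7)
The plan is to use the standard joint-distribution coupling together with the data processing inequality for Rényi divergence, followed by a numerical tail inequality valid on the interval $[0,c]\subseteq[0,1]$. The two bounds involve different quantities (a log of a moment on one side, a linear average on the other), so the crux is to convert the natural ``exponential'' post-processing bound into an additive one at the price of the factor $(1+c)$.

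First, I would lift $\mu_\rho$ and $\nu_\rho$ to the joint space $[m]\times\mathcal{Z}$ by defining $P(i,z)=\rho(i)\mu_i(z)$ and $Q(i,z)=\rho(i)\nu_i(z)$. Their second marginals are exactly $\mu_\rho$ and $\nu_\rho$. Since marginalization (dropping the first coordinate) is a post-processing, the data processing inequality for Rényi divergence yields $D_\alpha(\mu_\rho\|\nu_\rho)\le D_\alpha(P\|Q)$. A direct calculation with the definition of $D_\alpha$ gives
\begin{equation*}
D_\alpha(P\|Q)=\tfrac{1}{\alpha-1}\log\sum_{i}\rho(i)\int \mu_i(z)^\alpha\nu_i(z)^{1-\alpha}\,dz=\tfrac{1}{\alpha-1}\log\expect_{i\sim\rho}\bigl[e^{(\alpha-1)D_\alpha(\mu_i\|\nu_i)}\bigr],
\end{equation*}
which is the usual ``quasi-convexity'' bound for Rényi divergence.

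Second, setting $t_i\triangleq(\alpha-1)D_\alpha(\mu_i\|\nu_i)\in[0,c]$, I would convert this log-moment bound into an additive one. By convexity of $e^t$, the inequality $e^{t}\le 1+\tfrac{e^c-1}{c}t$ holds on $[0,c]$; taking expectation over $i\sim\rho$ and applying $\log(1+x)\le x$ gives
\begin{equation*}
(\alpha-1)D_\alpha(\mu_\rho\|\nu_\rho)\le \tfrac{e^c-1}{c}\,\expect_{i\sim\rho}[t_i].
\end{equation*}
It then suffices to establish the elementary numerical inequality $e^c-1\le c(1+c)$, equivalently $e^c\le 1+c+c^2$, for $c\in(0,1]$. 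Dividing by $\alpha-1$ and substituting back $t_i=(\alpha-1)D_\alpha(\mu_i\|\nu_i)$ yields the claim.

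The only delicate step is the numerical inequality $e^c\le 1+c+c^2$ on $(0,1]$, and this is the step I expect to have to justify carefully. A clean way is to set $\varphi(c)=1+c+c^2-e^c$, observe $\varphi(0)=0$, compute $\varphi'(c)=1+2c-e^c$ with $\varphi'(0)=0$, and $\varphi''(c)=2-e^c$, which is nonnegative on $[0,\log 2]$ and then becomes negative; a direct check at $c=1$ gives $\varphi(1)=3-e>0$, combined with $\varphi'(1)=3-e>0$, showing that $\varphi$ stays nonnegative on all of $(0,1]$ (a single sign change of $\varphi'$ suffices since $\varphi(0)=0$ and $\varphi(1)>0$). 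Every other ingredient (the coupling, DPI, Jensen) is routine, so this one-variable calculus check is the main technical point.
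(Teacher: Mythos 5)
Your proof is correct. Note that the paper itself gives no proof of this proposition: it is imported verbatim from \citet{ampbyiteration} (their weak-convexity lemma), so the relevant comparison is with that source, and your argument is essentially the standard one used there. You lift to the joint distribution on $[m]\times\mathcal{Z}$, apply the data-processing inequality to obtain the moment (``quasi-convexity'') identity $e^{(\alpha-1)D_\alpha(\mu_\rho\|\nu_\rho)}\le \E_{i\sim\rho}\big[e^{(\alpha-1)D_\alpha(\mu_i\|\nu_i)}\big]$, and then convert the exponential bound into a linear one at the cost of the factor $(1+c)$; this is exactly the structure of the original argument. The only (cosmetic) difference is the elementary step: the original proof uses $e^t\le 1+t+t^2$ for $t\in[0,1]$ together with $t_i^2\le c\,t_i$, which gives $\E[e^{t_i}]\le 1+(1+c)\E[t_i]$ directly, whereas you use the chord bound $e^t\le 1+\frac{e^c-1}{c}\,t$ on $[0,c]$ followed by $e^c\le 1+c+c^2$ — in fact your route yields the slightly sharper constant $\frac{e^c-1}{c}\le 1+c$ before relaxing it. Your calculus check of $e^c\le 1+c+c^2$ on $(0,1]$ is sound (with the minor remark that $\varphi'(c)=1+2c-e^c$ is actually nonnegative on all of $[0,1]$, since it vanishes at $0$, increases up to $c=\ln 2$, and is still positive at $c=1$, so $\varphi$ is nondecreasing); if you want a one-line alternative, the series bound $e^c\le 1+c+(e-2)c^2\le 1+c+c^2$ for $c\in[0,1]$ also works. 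All other steps (coupling, DPI for $\alpha>1$, Jensen/chord convexity, $\log(1+x)\le x$) are applied correctly.
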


Let us fix a contribution of user $i$ at some time $k(i)$. We apply this lemma
to $\rho$ the distribution of the number of steps before
reaching user $j$, which follows a geometric law of parameter $1/n$. This
gives:
\[ \begin{array}{lll}
    D_{\alpha}(z_j|| z'_j) & \leq & \sum_{k=1}^{K-k(i)} \frac{1}{n} (1 - 
    \frac{1}{n})^k     \frac{8\alpha L^2 \gamma^2}{2\sigma^2 k}
    \\
        & \leq & \frac{8 \alpha L^2 \gamma^2}{\sigma^2 n} \sum_{k=1}^{\infty} 
        \frac{(1- 1/n)^k}{k}\\
        & \leq & \frac{8 \alpha L^2 \gamma^2 \ln n}{\sigma^2 n}.
    \end{array}
    \]

Finally, we use composition to bound the total privacy loss. Each user
participates $K/n$ times in average, and this estimate concentrates as $K$
increases. For the sake of simplicity, we use $K_i = \bigo(K/n)$ as an
upper bound.
\end{proof}



\section{Privacy-Utility Trade-offs of Private ADMM Algorithms}
\label{app:priv_util}

Now, we amalgamate the privacy analysis of the three private ADMM algorithms 
(Appendix~\ref{app:privacy}) with the generic convergence analysis of
fixed-point iterations (Theorem~\ref{thm:utility_general}) to obtain the
privacy-utility trade-off for these three algorithms.

\subsection{Centralized Private ADMM}
Here, we present the detailed proof of Corollary~\ref{lem:priv_util_central}.
\begin{repcorollary}{lem:priv_util_central}
	Under the assumptions and notations of Theorem~\ref{thm:utility_general} and ~\ref{thm:privacy_centralized_cons}, and for number of iterations $K= \bigo\left( \log \left(\frac{L \gamma}{nD\left(1-\tau\right)} \left(\frac{p\alpha}{\varepsilon}\right)^{1/2}+\frac{p\alpha L^2 \gamma^2}{\varepsilon n^2 D\left(1-\tau\right)^3}\right)\right)$, Algorithm~\ref{algo:conADMM} achieves
	\begin{equation}
		\expect\left(\left\|u_{K+1}-u^*\right\|^{2}\right)  = \widetilde{\bigo}\left(\frac{L \gamma\sqrt{p\alpha}}{\sqrt{\varepsilon}n\left(1-\tau\right)} +\frac{p\alpha L^2 \gamma^2}{\varepsilon n^2 \left(1-\tau\right)^3}\right).
	\end{equation}
\end{repcorollary}
\begin{proof}
	We recall from Theorem~\ref{thm:utility_general} that
	\begin{align*}
	 \expect[\left\|u_{\step+1}-u^*\right\|^{2}]  \leqslant &\left(1 - \frac{q^2(1-\tau)}{8}\right)^{\step}\left(D+\frac{2(\sigma\sqrt{p} + \zeta)}{\sqrt{q}\left(1-\tau\right)}\right)+\frac{8(p\sigma^2+ \zeta^2)}{{q}^3(1-\tau)^3}+\frac{4}{\left(1-\tau\right)} \\
		&\leq\left(1-\frac{q^2(1-\tau)}{8}\right)^{\step}D
		+\left(\frac{8 (\sigma\sqrt{p}+ \zeta)}{\sqrt{q}\left(1-\tau\right)}+\frac{8(p\sigma^2+ \zeta^2)}{{q}^3(1-\tau)^3}\right)
	\end{align*}
	In case of centralized private ADMM, $\zeta=0$, $q=1$, and $\sigma^2 = \frac{8 K \alpha  L^2 \gamma^2}{\varepsilon n^2}$ (Theorem~\ref{thm:privacy_centralized_cons}). Thus, we obtain for $\step=K$ that
	\begin{align*}
	\expect\left(\left\|u_{K+1}-u^*\right\|^{2} \right)  
	&\leq\left(\frac{7+\tau}{8}\right)^{K}D
	+\left(\frac{8 \sqrt{p}}{\left(1-\tau\right)} \sqrt{ \frac{8 K \alpha  L^2 \gamma^2}{\varepsilon n^2}}+\frac{8p}{(1-\tau)^3}\left( \frac{8 K \alpha  L^2 \gamma^2}{\varepsilon n^2}\right)\right)\\
	&\leq\left(\frac{7+\tau}{8}\right)^{K}D
	+2\left(\frac{4 \sqrt{p}}{\left(1-\tau\right)} \sqrt{ \frac{8 \alpha  L^2 \gamma^2}{\varepsilon n^2}}+\frac{8p}{(1-\tau)^3}\left( \frac{8 \alpha  L^2 \gamma^2}{\varepsilon n^2}\right)\right) K\\
	&= \left(\frac{7+\tau}{8}\right)^{K}D
	+\bigo\left(\frac{L \gamma}{\left(1-\tau\right)n} \left(\frac{p\alpha}{\varepsilon}\right)^{1/2}+\frac{p\alpha L^2 \gamma^2}{\varepsilon n^2\left(1-\tau\right)^3}\right) K
	\end{align*}
	Now, if we consider $K$ such that
	\begin{align*}
		&\left(\frac{7+\tau}{8}\right)^{K}D = \bigo\left(\frac{L \gamma}{\left(1-\tau\right)n} \left(\frac{p\alpha}{\varepsilon}\right)^{1/2}+\frac{p\alpha L^2 \gamma^2}{\varepsilon n^2\left(1-\tau\right)^3}\right)\\ 
		\implies &K = \bigo\left( \log \left(\frac{L \gamma}{nD\left(1-\tau\right)} \left(\frac{p\alpha}{\varepsilon}\right)^{1/2}+\frac{p\alpha L^2 \gamma^2}{\varepsilon n^2 D\left(1-\tau\right)^3}\right)\right),
	\end{align*}
	we obtain
	\begin{align*}
	&\expect\left(\left\|u_{K+1}-u^*\right\|^{2} \right)  \\
	=&\bigo\left( \left(\frac{L \gamma}{n\left(1-\tau\right)} \left(\frac{p\alpha}{\varepsilon}\right)^{1/2}+\frac{p\alpha L^2 \gamma^2}{\varepsilon n^2 \left(1-\tau\right)^3}\right) \log \left(\frac{L \gamma}{nD\left(1-\tau\right)} \left(\frac{p\alpha}{\varepsilon}\right)^{1/2}+\frac{p\alpha L^2 \gamma^2}{\varepsilon n^2 D\left(1-\tau\right)^3}\right)\right) \\
	=&\widetilde{\bigo}\left(\frac{L \gamma\sqrt{p\alpha}}{\sqrt{\varepsilon}n\left(1-\tau\right)} +\frac{p\alpha L^2 \gamma^2}{\varepsilon n^2 \left(1-\tau\right)^3}\right)
	\end{align*}
\end{proof}
\subsection{Federated Private ADMM with Subsampling}
Here, we present the detailed proof of Corollary~\ref{lem:priv_util_fed}.
\begin{repcorollary}{lem:priv_util_fed}
	Under the assumptions and notations of Theorem~\ref{thm:utility_general} and ~\ref{thm:privacy_fed}, for number of iterations $K=\bigo\left(\log\left(\frac{\sqrt{p\alpha}L \gamma}{\sqrt{\varepsilon r}n D \left(1-\tau\right)} +\frac{p \alpha L^2 \gamma^2}{\varepsilon r^2 n^2 D \left(1-\tau\right)^3}\right)\right)$, and $m= r n$ for $r \in (0,1)$, Algorithm~\ref{algo:fedADMM} achieves
	\begin{equation}
	\expect\left(\left\|u_{K+1}-u^*\right\|^{2}\right)  = \widetilde{\bigo}\left(\frac{\sqrt{p\alpha}L \gamma}{\sqrt{\varepsilon r}n \left(1-\tau\right)} +\frac{p \alpha L^2 \gamma^2}{\varepsilon r^2 n^2 \left(1-\tau\right)^3}\right).
	\end{equation}
\end{repcorollary}
\begin{proof}
		In case of federated private ADMM, $\zeta=0$, $q=\frac{m}{n}$, and $\sigma^2 = \frac{16 K \alpha  L^2 \gamma^2}{\varepsilon n^2}$ (Theorem~\ref{thm:privacy_fed}). Thus, using Theorem~\ref{thm:utility_general}, we obtain for $\step=K$ that
		\begin{align*}
		\expect\left(\left\|u_{K+1}-u^*\right\|^{2} \right)  
		&\leq\left(1-\frac{m^2(1-\tau)}{8n^2}\right)^{K}D
		+\left(\frac{8 \sqrt{p}}{\left(1-\tau\right)} \sqrt{\frac{n}{m}} \sqrt{ \frac{16 K \alpha  L^2 \gamma^2}{\varepsilon n^2}}+\frac{8p}{(1-\tau)^3}\left( \frac{16 K \alpha  L^2 \gamma^2}{\varepsilon n^2}\right)\left(\frac{n}{m}\right)^{3}\right)\\
		&\leq\left(1-\frac{m^2(1-\tau)}{8n^2}\right)^{K}D
		+2\left(\frac{8 \sqrt{p}}{\left(1-\tau\right)} \sqrt{ \frac{8 \alpha  L^2 \gamma^2}{\varepsilon n m}}+\frac{8p}{(1-\tau)^3}\left( \frac{8 \alpha  L^2 \gamma^2}{\varepsilon n^2}\right)\frac{n}{m^{3}}\right) K\\
		&= \left(1-\frac{m^2(1-\tau)}{8n^2}\right)^{K} D
		+\bigo\left(\frac{L \gamma}{\left(1-\tau\right)} \left(\frac{p\alpha }{\varepsilon n m}\right)^{1/2}+\frac{L^2 \gamma^2 p\alpha }{\varepsilon\left(1-\tau\right)^3} \frac{{n}}{m^{3}}\right) K\\
		&= \left(1-\frac{m^2(1-\tau)}{8n^2}\right)^{K} D
		+\bigo\left(\frac{\sqrt{p\alpha}L \gamma}{\sqrt{\varepsilon r}n\left(1-\tau\right)} +\frac{p \alpha L^2 \gamma^2}{\varepsilon r^2 n^2 \left(1-\tau\right)^3}\right) K
		\end{align*}
		The last equality holds true when we choose $m= r n$, where $r \in (0,1/5]$ is a constant subsampling ratio.
		
		Now, if we consider $K=\bigo\left(\log\left(\frac{\sqrt{p\alpha}L \gamma}{\sqrt{\varepsilon r}n D \left(1-\tau\right)} +\frac{p \alpha L^2 \gamma^2}{\varepsilon r^2 n^2 D \left(1-\tau\right)^3}\right)\right)$, we obtain
		\begin{align*}
		&\expect\left(\left\|u_{K+1}-u^*\right\|^{2} \right)  \\
		=&\bigo\left(\left(\frac{\sqrt{p\alpha}L \gamma}{\sqrt{\varepsilon r}n \left(1-\tau\right)} +\frac{p \alpha L^2 \gamma^2}{\varepsilon r^2 n^2 \left(1-\tau\right)^3}\right) \log\left(\frac{\sqrt{p\alpha}L \gamma}{\sqrt{\varepsilon r}n D \left(1-\tau\right)} +\frac{p \alpha L^2 \gamma^2}{\varepsilon r^2 n^2 D \left(1-\tau\right)^3}\right)\right) \\
		=&\widetilde{\bigo}\left(\frac{\sqrt{p\alpha}L \gamma}{\sqrt{\varepsilon r}n \left(1-\tau\right)} +\frac{p \alpha L^2 \gamma^2}{\varepsilon r^2 n^2 \left(1-\tau\right)^3}\right)
		\end{align*}
\end{proof}
\subsection{Fully Decentralized Private ADMM}
Here, we present the detailed proof of Corollary~\ref{lem:priv_util_dec}.
\begin{repcorollary}{lem:priv_util_dec}
	Under the assumptions and notations of Theorem~\ref{thm:utility_general} and ~\ref{thm:privacy_dec}, and for number of iterations $K=\bigo\left(\log\left(\frac{L \gamma}{D \left(1-\tau\right)} \left(\frac{p\alpha \ln n}{\varepsilon n}\right)^{1/2}+\frac{L^2 \gamma^2}{D \left(1-\tau\right)^3} \left(\frac{p\alpha \ln n}{\varepsilon n}\right)\right)\right)$, Algorithm~\ref{algo:p2ppADMM} achieves
	\begin{equation}
	\expect\left(\left\|u_{K+1}-u^*\right\|^{2}\right)  = \widetilde{\bigo}\left(\frac{\sqrt{p \alpha}L \gamma}{\sqrt{\varepsilon n}\left(1-\tau\right)} +\frac{p\alpha  L^2 \gamma^2}{\varepsilon n \left(1-\tau\right)^3} \right).
	\end{equation}
\end{repcorollary}
\begin{proof}
	In case of decentralized private ADMM, $\zeta=0$, $q=\frac{1}{n}$, and $\sigma^2 = \frac{8 K_i \alpha L^2 \gamma^2 \ln n}{\sigma^2 n} = \frac{8 K \alpha L^2 \gamma^2 \ln n}{\sigma^2 n^2}$ (Theorem~\ref{thm:privacy_fed}). Thus, using Theorem~\ref{thm:utility_general}, we obtain for $\step=K$ that
	\begin{align*}
	\expect\left(\left\|u_{K+1}-u^*\right\|^{2} \right)  
	&\leq\left(1-\frac{q^2(1-\tau)}{8}\right)^{\step}D
	+\left(\frac{8 (\sigma\sqrt{p}+ \zeta)}{\sqrt{q}\left(1-\tau\right)}+\frac{8(p\sigma^2+ \zeta^2)}{{q}^3(1-\tau)^3}\right)\\
	&\leq\left(1-\frac{1-\tau}{8n^2}\right)^{K}D
	+\left(\frac{8 \sqrt{p}}{\left(1-\tau\right)} \sqrt{n} \sqrt{ \frac{8 K \alpha  L^2 \gamma^2 \ln n}{\varepsilon n^2}}+\frac{8 p}{(1-\tau)^3}\left( \frac{8 K \alpha  L^2 \gamma^2 \ln n}{\varepsilon n^2}\right)n^{3}\right)\\
	&\leq\left(1-\frac{1-\tau}{8n^2}\right)^{K}D
	+2\left(\frac{8 \sqrt{p}}{\left(1-\tau\right)} \sqrt{ \frac{8 \alpha  L^2 \gamma^2 \ln n}{\varepsilon n}}+\frac{8 p}{(1-\tau)^3}\left( \frac{8 \alpha  L^2 \gamma^2 \ln n}{\varepsilon n}\right)\right) K\\
	&= \left(1-\frac{1-\tau}{8n^2}\right)^{K}D
	+\bigo\left(\frac{L \gamma}{\left(1-\tau\right)} \left(\frac{p\alpha \ln n}{\varepsilon n}\right)^{1/2}+\frac{L^2 \gamma^2}{\left(1-\tau\right)^3} \left(\frac{p\alpha \ln n}{\varepsilon n}\right)\right) K
	\end{align*}
	Now, if we consider $K=\bigo\left(\log\left(\frac{L \gamma}{D \left(1-\tau\right)} \left(\frac{p\alpha \ln n}{\varepsilon n}\right)^{1/2}+\frac{L^2 \gamma^2}{D \left(1-\tau\right)^3} \left(\frac{p\alpha \ln n}{\varepsilon n}\right)\right)\right)$, we obtain
	\begin{align*}
	&\expect\left(\left\|u_{K+1}-u^*\right\|^{2} \right)  \\
	=&\bigo\left(\left(\frac{L \gamma}{\left(1-\tau\right)} \left(\frac{p\alpha \ln n}{\varepsilon n}\right)^{1/2}+\frac{L^2 \gamma^2}{\left(1-\tau\right)^3} \left(\frac{p\alpha \ln n}{\varepsilon n}\right)\right)\log\left(\frac{L \gamma}{D \left(1-\tau\right)} \left(\frac{p\alpha \ln n}{\varepsilon n}\right)^{1/2}+\frac{L^2 \gamma^2}{D \left(1-\tau\right)^3} \left(\frac{p\alpha \ln n}{\varepsilon n}\right)\right)\right) \\
	=&\widetilde{\bigo}\left(\frac{\sqrt{p \alpha}L \gamma}{\sqrt{\varepsilon n}\left(1-\tau\right)} +\frac{p\alpha  L^2 \gamma^2}{\varepsilon n \left(1-\tau\right)^3} \right)
	\end{align*}
\end{proof}


\section{Numerical Experiments}
\label{app:expes}

In this section, we illustrate the performance of our private
ADMM algorithms on
the classic
Lasso problem, which is widely used to learn sparse
solutions to regression problems with many features.
Lasso aims to solve the following problem:
\begin{mini}
{x\in\R^p}{
\frac{1}{2n} \norm{A x-b}^2 + \kappa \norm{x}_1,}{}{}
\end{mini}    
where the dataset $D=(A,b)$ consists of $n$ labeled data points in $p$
dimensions, represented by a
matrix $A \in \R^{n\times p}$ and a vector of regression targets $b \in
\R^n$. The previous objective can be rewritten as a consensus problem of the
form \eqref{eq:admm_consensus}, with
the same notations:
\begin{mini}
    {x\in\mathbb{R}^{np}, z\in\mathbb{R}^p}{
    \label{eq:lasso}\frac{1}
    {2n}\sum_{i=1}^n \big(\transp{(A^i)} x-b^i\big)^2 + \kappa \norm{z}_1}{}{}
    \addConstraint{x- I_{n(p\times p)}z}{= 0, \quad}{}
\end{mini}    
where $A^i$ is the $i$-th row of $A$ and $b^i$ the $i$-th coordinate of $b$.

The corresponding ADMM updates take simple forms \citep{admmbook}. The $z$-update,
defined as $
\proxi{\gamma \kappa \norm{\cdot}_1}{\hat{z}}$, corresponds to the soft
thresholding function with parameter $\gamma\kappa$.
For the $x$-update, we have $x_i = \proxi{\gamma/2n (\transp{(A^i)} \cdot -
b^i)^2}{2z-u_i}$. This also gives a closed-form update:
\[
x_i = (A^i \transp{(A^i)} + (2n/\gamma) I)^{-1}(b^i A^i + (2n/\gamma) 
(2z-u_i)).
\]
Note that the matrix to invert is a rank-one perturbation of the
identity, so the inverse can be computed via the Sherman Morrison formula.
As usually done in privacy-preserving machine learning, we ensure a tight
evaluation of the sensitivity by using clipping.
The full algorithm is given in Algorithm~\ref{algo:lasso}.

We generate synthetic data by drawing $A$ as random vectors from the
$p$-dimensional unit
sphere, and draw the ground-truth model $x$ from a
uniform distribution with support of size $8$. Labels are then obtained by
taking $b = A x + \eta$ where $\eta\sim\mathcal{N}(0,0.01)$. We use
$n=1000$ and $p=64$. 

As reference, we solve the non-private problem with \texttt{scikit-learn}, and
we use the best regularization parameter $\kappa$ obtained by
cross-validation. For
comparison purposes, we also implement (proximal)
DP-SGD where noise is added to the gradients of the smooth part.
For both approaches, we tune the step size and clipping threshold using grid
search. For ADMM, we also tune the $\gamma$ parameter. For
simplicity, we tune these parameters on the smallest privacy budget and use
the obtained parameters for all budgets, even if slight improvements could be
achieved by tuning these parameters for each setting.
We use the same number
of iterations $K$ for both algorithms.

\begin{algorithm}[t]
    \SetKwComment{Comment}{$\triangleright$ }{}
    \DontPrintSemicolon
    \KwIn{initial vector $u^0$, step size $\lambda \in (0, 1]$, privacy noise
    variance $\sigma^2\geq 0$, $\gamma >0$, clipping threshold $C$ }
    \For{$k=0$ to $K-1$}{
        $\hat{z}_{k+1}=\frac{1}{n} \sum_{i=1}^{n}u_{k,i} $ \;
        $z_{k+1}= S_{\gamma \kappa} \left(\hat{z}_{k+1}\right)$\;
        \For{$i=1$ to $n$}{ 
            $x_{k+1,i} = (A^i \transp{(A^i)} + (2n/\gamma) I)^{-1}(b^i A^i + (2n/\gamma) (2z_k-u_{k,i}))$\;
            $u_{k+1,i}=u_{k,i}+2 \lambda\big(\operatorname*{Clip}(x_{k+1,i}-z_
            {k+1}, C)+\frac{1}
            {2} \eta_{k+1,i}\big)\text{ with } \eta_{k+1,i}\sim\gau{ \sigma^2
        \mathbb{I}_p}$ \;
        }
    }
    \Return{$z_{K}$}    
    \caption{Centralized private ADMM for Lasso}
    \label{algo:lasso}
\end{algorithm}

We report the objective function value on the test set at the end of the
training for several privacy budgets. Privacy budgets are converted to $
(\eps, \delta)$-DP for the sake of comparison with
existing methods. The conversion to Rényi DP is done numerically. We set
$\delta=10^{-6}$ in all cases.

The resulting privacy-utility trade-offs for the federated setting with
central DP are
shown in \cref{fig:expe}, where each user has a single datapoint and users are
sampled uniformly with a 10\% probability. We see that private ADMM performs
especially well in high privacy regimes. Note that the $y$ axis is in
logscale, so the improvement over DP-SGD is significant. This could be
explained by the fast convergence of ADMM at the beginning of training, and
by the robustness of its updates. The two curves go flat for low privacy
budgets: this is simply because these regimes would
require more training steps and smaller step-sizes to converge to more
precise solutions.

\begin{figure}
    \centering
    \includegraphics[width=.6\textwidth]{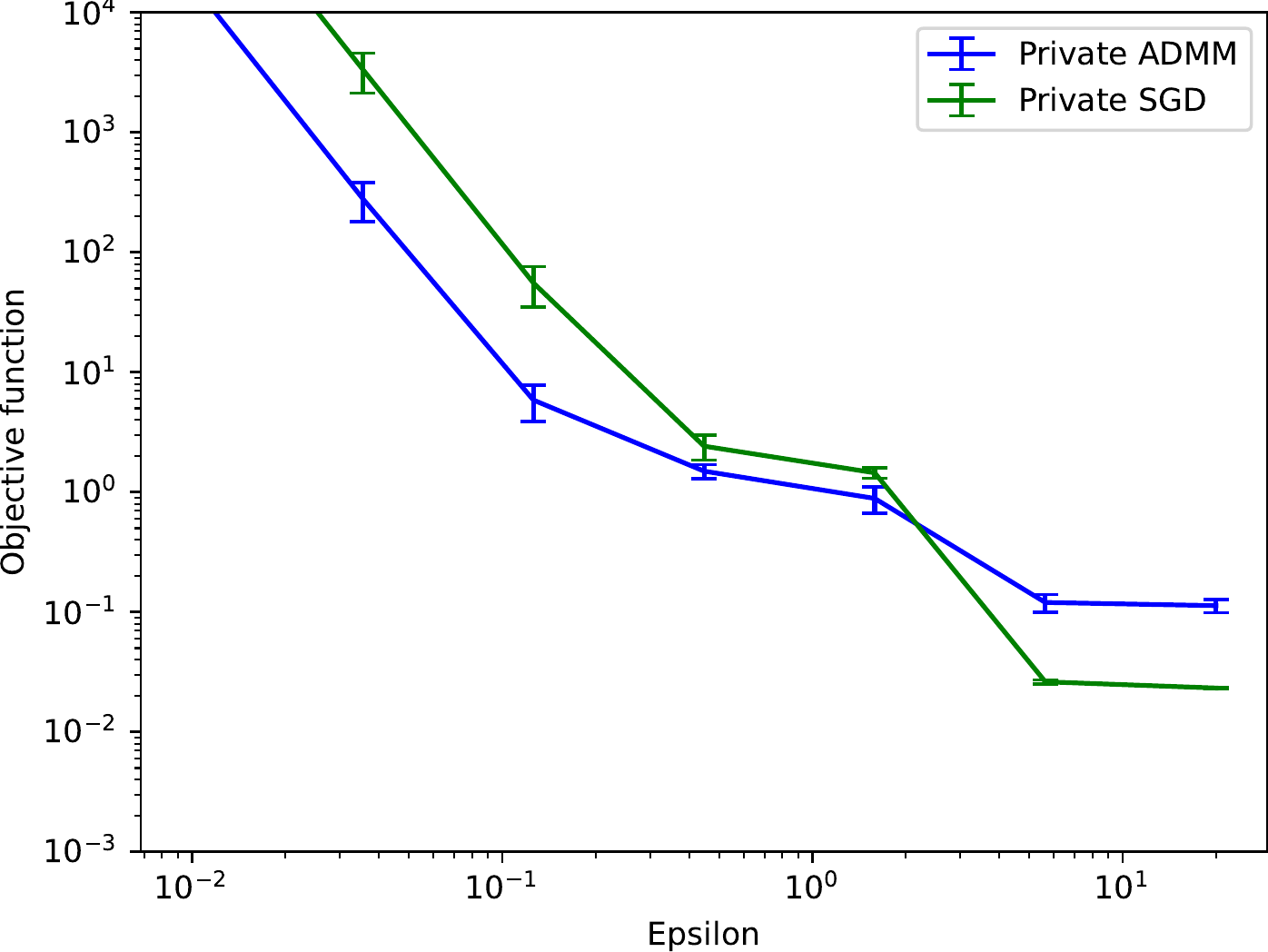}
    \caption{Comparison of DP-SGD and our DP-ADMM algorithm for the Lasso
    problem on
    synthetic data ($n=1000$, $p=64$). The
    same regularizer parameter is used. We show here results for the federated
    setting, with a user sampling probability of $10\%$. Each setting is run $10$ times, and we report average and standard deviation\label{fig:expe}}
\end{figure}

The code is available at \url{https://github.com/totilas/padadmm}.


\end{document}